\def\comments{0}

\documentclass[11pt]{article}

% if you need to pass options to natbib, use, e.g.:
%     \PassOptionsToPackage{numbers, compress}{natbib}
% before loading neurips_2022

% ready for submission
% \usepackage{neurips_2022}

% to compile a preprint version, e.g., for submission to arXiv, add add the
% [preprint] option:
% \usepackage[preprint]{neurips_2022}

% to compile a camera-ready version, add the [final] option, e.g.:

% to avoid loading the natbib package, add option nonatbib:
%    \usepackage[nonatbib]{neurips_2022}

\usepackage[utf8]{inputenc} % allow utf-8 input
\usepackage[T1]{fontenc}    % use 8-bit T1 fonts
\usepackage{url}            % simple URL typesetting
\usepackage{booktabs}       % professional-quality tables
\usepackage{amsfonts}       % blackboard math symbols
\usepackage{nicefrac}       % compact symbols for 1/2, etc.
\usepackage{microtype}      % microtypography
\usepackage{xcolor}         % colors
\usepackage[margin=1.2in]{geometry}
	\setlength\marginparwidth{75pt}
	\setlength\marginparsep{5pt}

%%%%%% mypackages %%%%%%

\usepackage{mymacros}

\usepackage{amsmath,amssymb,amsthm}
\usepackage[shortlabels]{enumitem}
\usepackage{soul} % provides \ul

\usepackage{authblk}

\usepackage[ruled,vlined]{algorithm2e}
\SetKwInput{KwInput}{Input}
\SetKwInput{KwOutput}{Output}

\usepackage[colorlinks,citecolor=blue,urlcolor=blue,linkcolor=blue]{hyperref}

\ifnum\comments=1
\newcommand{\mynote}[2]{{\marginpar{\color{#1} \tiny #2}}}
\else
\newcommand{\mynote}[2]{}
\fi
\newcommand{\anote}[1]{\mynote{cyan}{Alex: {#1}}}
\newcommand{\snote}[1]{\mynote{red}{Sasho: {#1}}}
\newcommand{\tnote}[1]{\mynote{blue}{Toni: {#1}}}

\newcounter{listcounter}

\newtheorem{theorem}{Theorem}

\newtheorem{lemma}[theorem]{Lemma}

\newtheorem{corollary}[theorem]{Corollary}

\newtheorem{definition}[theorem]{Definition}

\newcommand{\cut}[1]{}

\title{Learning versus Refutation in \\ Noninteractive Local Differential Privacy}

\author[1]{Alexander Edmonds}
\author[1]{Aleksandar Nikolov}
\author[1,2]{Toniann Pitassi}
\affil[1]{Department of Computer Science, University of Toronto}
\affil[2]{Department of Computer Science, Columbia University}
\date{}

\begin{document}

\maketitle

\begin{abstract}

  We study two basic statistical tasks in non-interactive local differential privacy (LDP): {\it learning} and {\it refutation}; learning requires finding a concept that best fits an unknown target function (from labelled samples drawn from a distribution), whereas  refutation requires distinguishing between data distributions that are well-correlated with some concept in the class, versus distributions where the labels are random. Our main result is a complete characterization of the sample complexity of agnostic PAC learning for non-interactive LDP protocols. We show that the optimal sample complexity for any concept class is captured by the approximate $\gamma_2$~norm of a natural matrix associated with the class. Combined with previous work [Edmonds, Nikolov and Ullman, 2019] this gives an {\it equivalence} between learning and refutation in the agnostic setting.

\end{abstract}

%%% Local Variables:
%%% mode: latex
%%% TeX-master: "agnostic-realizable-ldp-paper"
%%% End:

\section{Introduction}

We study two related basic statistical tasks, \emph{learning} and \emph{refutation}, in the setting of distributed data, and under strong privacy constraints. For both tasks, we have an unknown distribution
$\dista$ on labeled data points in the universe $\iuni \times \oo$, and we receive samples from \(\dista\). We are also given a concept class $\concepts \subseteq \oo^\iuni$, which, hopefully, is capable of capturing the labels given by $\dista$. We define our two tasks as follows.

\begin{itemize}
    \item \emph{Learning} requires finding a concept that best fits \(\dista\). I.e., using the usual binary loss function 
    %\[
    $\loss_\dista(\hyp)
    = \ex{(\ielem,\lab) \sim \dista}{\I[\hyp(\ielem) \ne y]}$,
   % \]
    the goal of agnostic learning with accuracy $\alpha$ is to produce some $\hyp$ which, with probability \(1-\conf\), satisfies
    $\loss_\dista(\hyp)
    \le \min_{\concept \in \concepts}
        \loss_\dista(\concept) + \alpha$.
    
    %\begin{equation*}\label{eq:agnosticobjective}
    %\loss_\dista(\hyp)
    %\le \min_{\concept \in \concepts}
     %   \loss_\dista(\concept) + \alpha.
    %\end{equation*}
    If an algorithm solves this problem for any distribution \(\dista\), then we say it \((\acc,\conf)\)-learns \(\concepts\) agnostically. 

    \item \emph{Refutation} requires distinguishing between data distributions \(\dista\) that are well correlated with some concept \(\concept \in \concepts\), vs.~data distributions where the labels are random. I.e., the goal of agnostic refutation with accuracy \(\alpha\) is to distinguish, with probability \(1-\conf\), between the following two cases:
    (i) $\min_{\concept \in \concepts}\loss_{\dista}(\concept) \le \frac12 - \alpha$
    versus (ii)
    for all $\hyp \in \oo^\iuni$,
            $\loss_\dista(\hyp) = \frac12$.
    %\begin{itemize}
      %  \item \(\min_{\concept \in \concepts}\loss_{\dista}(\concept) \le \frac12 - \alpha\),
        
        % \item for every \(\ielem_0 \in \iuni\) that has nonzero probability under \(\dista\), 
        % \[\pr{(\ielem,\lab)\sim \dista}{\lab=+1 \mid \ielem = \ielem_0}
        % = 
        % \pr{(\ielem,\lab)\sim \dista}{\lab=-1 \mid \ielem = \ielem_0}
        % = \frac12.
        % \] 
      %  \item for all $\hyp \in \oo^\iuni$,
           % $\loss_\dista(\hyp) = \frac12$.
   % \end{itemize}
    If an algorithm solves this problem for any distribution \(\dista\), then we say it \((\acc,\conf)\)-refutes \(\concepts\) agnostically. 
\end{itemize}

The definition of agnostic learning above is classical. Refutation is
a more recent notion, and was studied by \cite{KL18} (and in a
realizable setting by \cite{learningvsrefutation}), where it was shown
that computationally efficient refutation is equivalent to
computationally efficient agnostic learning. Refutation is a testing version of the problem of evaluating the choice of model in supervised learning, i.e., of estimating the best achievable loss \(\min_{\concept \in \concepts}\loss_{\dista}(\concept)\) by the concept class \(\concepts\). While agnostic learning is well-defined for any concept class, it is less meaningful when the best achievable loss is trivially large, which may be an indication that we need to choose a different model, i.e., a different concept class. For this reason, ideally we would like our learning algorithm to also tell us what loss it is able to achieve. Refutation is a more basic version of this problem, in which we merely want to distinguish data distributions for which our model is good from distributions with random labels, for which no model can achieve good results. Certainly being able to solve the refutation problem is at least as hard as estimating \(\min_{\concept \in \concepts}\loss_{\dista}(\concept)\).

In this paper, we study learning and refutation in the model of \emph{non-interactive local differential privacy} (LDP)~\cite{KasiviswanathanLNRS08}. LDP applies in a distributed setting in which each data point represents one person, and, in order to protect privacy, the person retains ownership of their data point. In particular, the data is never centrally collected, and, instead, the data owners communicate differentially private randomized message to a central server. The differential privacy~\cite{DworkMNS06} constraint ensures that the distribution on messages sent by one participant does not change dramatically if that participant's data point is changed. Thus, the central server or an outside observer cannot learn much about any particular data point, guaranteeing a strong form of privacy protection (as long as the privacy parameter is small enough). Nevertheless, with enough participants, the combination of all private messages can reveal enough statistical information in aggregate in order to solve a statistical task, such as learning. LDP is the model of choice of many industrial deployments of differential privacy~\cite{ErlingssonPK14,Thakurta+17,AppleDP,MicrosoftDP}. Here we focus on \emph{non-interactive} LDP protocols, i.e., protocols in which each participant simultaneously sends a single message to the server. Non-interactive protocols are much easier to implement than multi-round interactive protocols,
particularly considering the large number
of data points which are typically necessary for LDP to be useful.

Our main goal is to characterize, for any given concept class \(\concepts\), the sample complexity of learning and refutation under the constraints of non-interactive LDP. Moreover, we aim to understand how these two problems are related to each other.

In many settings, it is trivial to take an algorithm for learning and
use it to obtain an algorithm for refutation, by executing  the
learning algorithm for accuracy $\alpha/4$, and estimating the loss of
the returned hypothesis within $\alpha/4$. Surprisingly, a converse of this simple reduction was established by~\cite{KL18}, and by~\cite{learningvsrefutation}. Unfortunately, neither of these reductions applies to the setting of non-interactive LDP, since they rely on interacting with the distribution $\dista$ adaptively. 
This leaves open the question of whether or not 
learning and refutation in the non-interactive LDP setting are equivalent tasks with respect to sample complexity.

We note that, by the equivalence proved in~\cite{klnrs} between LDP and the statistical queries (SQ) model of~\cite{Kearns93}, this also means that the relationship between the query complexity of non-adaptive SQ learning versus refutation is open.  Similarly, all our results extend to the non-adaptive SQ model.
Adaptive SQ learning has been characterized by~\cite{gencharsq}, and
this in turn implies the same characterization for sequential LDP (LDP
protocols in which each participant sends one message, which can
depend previously sent messages).  

An overview of our main results follows.
The derivation of our results 
will be presented in Section~\ref{sec:agnostic}
for the agnostic setting
and
in Section~\ref{sec:realizable} for the realizable case, after necessary
preliminaries are covered in Section~\ref{sec:prelims}.
% See Appendix~\ref{sec:prelims}
% for notation and standard definitions.
% See Appendix~\ref{ap:open}
% for a discussion of open problems.

\subsection{Characterization of agnostic learning}

Our first theorem shows that non-interactive LDP learning and refutation are equivalent (up to a logarithmic approximation) in the agnostic setting. We do so by the following theorem, which gives a characterization of the sample complexity of both problems in terms of the approximate \(\fnorminf\) norm of a natural matrix associated with the concept class \(\concepts \subseteq \{\pm 1\}^\iuni\). 
%We first define the matrix, and then state our main theorem.

%\begin{definition}\label{def:cmatdef}
    %Let $\concepts \subseteq \oo^\iuni$
    %be a concept class.
    %The concept matrix
    %$\qmat \in \oo^{\concepts \times \iuni}$
    %of $\concepts$
    %is the matrix with entries given by \( w_{\concept,\ielem} = \concept(\ielem).\)
    %\cut{\begin{equation}\label{eq:cmatdef}
       % w_{\concept,\ielem} = %\concept(\ielem).
   % \end{equation}}
%\end{definition}

\begin{theorem}\label{thm:agnostic-main}
Let \(\concepts \subseteq \{\pm 1\}^\iuni\) be a finite concept class with concept matrix \(\qmat \in \{\pm 1\}^{\concepts \times \iuni}\), as given by Definition~\ref{def:cmatdef}. Let \(\priv > 0\), \(\acc, \conf \in (0,1/2]\). Then, to either  $(2\acc,\conf)$-learn $\concepts$ agnostically,
or $(2\acc,\conf)$-refute $\concepts$ agnostically under non-interactive \(\priv\)-LDP,
it suffices to have a sample of size
\[
    \dsize
        = O\left( \frac{\fnorminf(\qmat,\acc)^2 \cdot \log (|\concepts| / \conf)}{\priv^2 \acc^2} \right).
\]
Conversely, for some \(\acc' = \Omega\left( \frac{\acc}{\log(1/\acc)}
\right)\), and for every $\conf \le \frac12 - \Omega(1)$, the number of samples required
    to either $(\acc',\conf)$-learn agnostically or $(\acc',\conf)$-refute  $\concepts$ agnostically under non-interactive \(\priv\)-LDP
    is at least
    %\[
        %\dsize = \Omega \left( \frac{(\fnorminf(W,2\acc)-1)^2}{\priv^2 \acc^2} \right),
    %\]
    %and the number of samples required to $(\acc',\conf)$-refute $\concepts$ agnostically under non-interactive \(\priv\)-LDP
    %is at least
    \[
        \dsize = \Omega \left( \frac{(\fnorminf(W,\acc )-1)^2}{\priv^2 \acc^2} \right).
    \]
\end{theorem}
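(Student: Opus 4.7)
For the upper bound part, the plan is to use an approximate factorization witnessing $\fnorminf(\qmat,\acc)$ to estimate all agnostic correlations $\mu_\concept := \ex{(\ielem,\lab)\sim\dista}{\concept(\ielem)\lab}$ simultaneously under non-interactive LDP. Fix $\tilde\qmat=UV$ with $\|\tilde\qmat-\qmat\|_\infty \le \acc$ and $\|U\|_{2\to\infty}\cdot\|V\|_{1\to 2} \le \fnorminf(\qmat,\acc)$. Each user $i$ is associated with the vector $\lab_i\cdot V_{\cdot,\ielem_i}$, whose $\ell_2$-norm is at most $\|V\|_{1\to 2}$. Apply a standard non-interactive LDP mean-estimation primitive for bounded-$\ell_2$ vectors (such as the per-user Gaussian mechanism used by Edmonds, Nikolov, and Ullman), then project the result by each row $U_\concept$ and take a union bound over the $|\concepts|$ resulting linear functionals. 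This yields estimates $\hat\mu_\concept$ of $\mu_\concept$ within additive error $2\acc$, at a sample cost of $O(\fnorminf(\qmat,\acc)^2 \log(|\concepts|/\conf)/(\priv^2\acc^2))$. Since $\loss_\dista(\concept)=\tfrac{1}{2}(1-\mu_\concept)$, outputting $\arg\max_\concept \hat\mu_\concept$ solves $(2\acc,\conf)$-learning, and thresholding $\max_\concept \hat\mu_\concept$ at $\acc$ solves $(2\acc,\conf)$-refutation.

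For the refutation part of the lower bound, the plan is to invoke the hard-instance construction from the prior work of Edmonds, Nikolov, and Ullman: a pair of data distributions on $\iuni\times\{\pm 1\}$, one with uniformly random labels and one whose labels are $\Omega(\acc)$-correlated with a concept aligned with the dual witness to $\fnorminf(\qmat,\acc)$, whose non-interactive $\priv$-LDP transcript distributions are statistically close unless $n \ge \Omega((\fnorminf(\qmat,\acc)-1)^2/(\priv^2\acc^2))$.

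For the learning lower bound, the plan is to reduce refutation to learning at a slightly improved accuracy. Given a non-interactive LDP $(\alpha',\conf)$-learner with $\alpha'=\Theta(\acc/\log(1/\acc))$, run it on the refutation input to obtain $\hyp$; in case (i) one has $\loss_\dista(\hyp)\le \tfrac{1}{2}-\acc+\alpha' \le \tfrac{1}{2}-\acc/2$, while in case (ii) $\loss_\dista(\hyp)=\tfrac{1}{2}$ for every $\hyp$, so it suffices to estimate $\loss_\dista(\hyp)$ to accuracy $\acc/4$. The main obstacle is that $\hyp$ is algorithm-dependent, so individual users cannot tailor their messages to it; the standard sample-splitting reduction between learning and refutation breaks in the one-round LDP model. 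The plan is to circumvent this by having each user spend a constant fraction of their privacy budget on an additional randomized-response bit against a pre-committed, data-independent $\acc$-net over candidate outputs, of size $\mathrm{poly}(1/\acc)$, built from the rows of $U$ appearing in the approximate factorization; the server then rounds $\hyp$ offline to the nearest net element and reads off the loss estimate. A union bound gives accuracy $\acc/4$ at a cost of only $O(\log(1/\acc)/(\priv^2\acc^2))$ extra samples, which is absorbed into the refutation lower bound applied at the smaller accuracy $\alpha'$ (since $1/(\alpha')^2=\log^2(1/\acc)/\acc^2$). The $\log(1/\acc)$ slack in $\alpha'$ thus exactly accounts for the covering and rounding overhead of the reduction.
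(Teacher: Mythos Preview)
Your upper bound argument and your appeal to prior work for the refutation lower bound are both correct and match the paper.

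The gap is in your learning lower bound. The proposed reduction from refutation to learning hinges on the existence of a data-independent $\acc$-net of size $\mathrm{poly}(1/\acc)$ over the learner's candidate outputs, but no such net exists in general. The learner is improper: its output $\hyp$ is an arbitrary function in $\{\pm 1\}^\iuni$, and the only notion of closeness that controls $|\loss_\dista(\hyp)-\loss_\dista(\hyp')|$ depends on the unknown distribution $\dista$. A distribution-independent net (e.g.\ under normalized Hamming distance on $\iuni$) has size exponential in $|\iuni|$; restricting to proper learners and netting over $\concepts$ gives size $|\concepts|$, not $\mathrm{poly}(1/\acc)$, and proper learning is not without loss of generality in the agnostic setting anyway. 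Your claim that the rows of $U$ in the factorization produce such a net is unjustified: those rows are vectors in the inner-dimension space, indexed by concepts, and do not cover the learner's output space under any loss-relevant metric. The choice $\alpha'=\Theta(\acc/\log(1/\acc))$ is therefore reverse-engineered from the target statement rather than arising from the argument. In short, the sample-splitting reduction fails for exactly the reason the paper flags in its introduction: estimating the loss of an adaptively chosen hypothesis is a genuine second round of interaction, and you have not shown how to collapse it to one round.

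The paper takes a different, direct route that avoids any reduction. It introduces the \emph{difference matrix} $\difmat\in\{0,\pm 1\}^{\concepts^2\times\iuni}$ with entries $\difent_{(\concept,\conceptb),\ielem}=\tfrac12(\concept(\ielem)-\conceptb(\ielem))$, proves that $\fnorminf(\difmat,\acc)$ and $\fnorminf(\qmat,\acc)$ agree up to constants, and then uses a dual witness $\dumat$ to $\fnorminf(\difmat,\acc)$ to build, for each pair $(\concept,\conceptb)$, two distributions $\dista_{\concept,\conceptb}$ and $\distb_{\concept,\conceptb}$ with the same marginal on $\iuni$ but opposite deterministic labels. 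This forces $\loss_{\dista_{\concept,\conceptb}}(\hyp)+\loss_{\distb_{\concept,\conceptb}}(\hyp)=1$ for \emph{every} hypothesis $\hyp$, so any $\hyp$ that is good for one distribution is automatically bad for the other; a correct agnostic learner must therefore output different hypotheses on the two, and hence its transcript distinguishes them. The KL-divergence bound for non-interactive LDP transcripts then yields the sample lower bound directly. The $\log(1/\acc)$ loss in accuracy comes from a geometric binning lemma that upgrades the average-case separation (over $(\concept,\conceptb)\sim\distv$) guaranteed by the dual to a worst-case separation.
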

In Theorem~\ref{thm:agnostic-main}, we denote by \(\fnorminf(\qmat,\acc)\) the approximate \(\fnorminf\) norm of the matrix \(\qmat\), i.e., the minimum \(\fnorminf\) norm of a matrix that approximates \(\qmat\)  up to an additive \(\acc\) entrywise. (For a definition of the \(\fnorminf\) norm, see Section~\ref{sec:prelims}.) The theorem shows that the sample complexity of both learning and refutation under non-interactive LDP can be characterized in terms of \(\fnorminf(\qmat,\acc)\). Moreover, the sample complexities of both problems are equal, up to a factor \(O(\log(1/\acc))\) loss in the accuracy parameter, and a factor \(O(\log |\concepts|)\) loss in the sample complexity. 

The main new result in Theorem~\ref{thm:agnostic-main} is the lower bound on the sample complexity of learning. 
The upper bound for both learning and refutation, as well as the lower
bound for refutation were previously shown
in~\cite{factorization}. As in the previous
proofs, we prove our lower bound via the (dual formulation) of the approximate 
 \(\fnorminf\) norm. 
 In order to give a family of distributions that is hard against learning algorithms, we define a new {\it difference} matrix,
  \(\difmat\), associated with the concept class \(\concepts\), which is more suitable for the learning lower bound. 
 Then we show that \(\fnorminf(\difmat,\acc)\) and
 \(\fnorminf(\qmat,\acc)\) are approximately equal.

 It is worth noting that, by the results of~\cite{factorization}, the
 sample complexity of estimating the loss of every concept in
 $\concepts$ up to an error $O(\acc)$ is also approximately captured by the $\fnorminf(\qmat,\acc)$. Thus, together with our new lower bound, this gives an
 approximate equivalence of this estimation problem with learning and refutation. 

\subsection{Characterization of realizable refutation}

The results above do not apply to the realizable setting, in which the underlying distribution
$\dista$ on $\iuni \times \oo$
is guaranteed to be labelled
by a concept $\concept \in \concepts$, i.e., $\min_{\concept \in
  \concepts} \loss_\dista(\concept) =
0$. In particular, the lower bounds we prove in terms of the
approximate \(\fnorminf\) norm utilize distributions that may not be
realizable.  {This is not an accident, since some concept
  classes are much easier to learn under realizable distributions. For
  example, the class of conjunctions over \(\iuni = \{0,1\}^d\) can be
  learned with polynomial in \(d\) query complexity using a
  non-adaptive SQ algorithm~\cite{Kearns93}, and, therefore, also
  with polynimial sample complexity by a non-interactive LDP
  algorithm. The \(\fnorminf\) norm of the matrix associated with this
  class is, however, exponential in \(d\), as shown
  in~\cite{factorization}. Therefore, conjunctions require exponential
  sample complexity to learn agnostically under non-interactive
  LDP. A similar result was also proved by Feldman using a reduction
  from learning parities~\cite{Feldman09}. }

Daniely and Feldman~\cite{DanielyFeldman18} showed that (for \(\concepts\) closed under negation) the sample complexity of realizable learning under non-interactive LDP is bounded from below by the margin complexity of \(\concepts\). They left open the question whether one can prove a matching upper bound. This question was resolved in the negative by \cite{feldman-margin}. The problem of characterizing the sample complexity of realizable learning under non-interactive LDP thus remains open.

While we are also not able to characterize realizable learning, we give a characterization of a realizable analog of the refutation problem, and show that realizable learning is no harder than realizable refutation. In our formulation of realizable refutation with accuracy \(\alpha\), we are given samples from some distribution \(\dista\) over \(\iuni \times \oo\), and the goal is to distinguish the cases:
\begin{itemize}
    \item \(\min_{\concept \in \concepts}\loss_{\dista}(\concept) = 0\), i.e, some concept in \(\concepts\) exactly gives the labels under \(\dista\);
        
    \item for all \(h \in \oo^\iuni\),
    \(
    \loss_\dista(h) \ge \alpha.
    \)
\end{itemize}
The \(\alpha=\frac12\) case is equivalent to the definition of refutation introduced by~\cite{learningvsrefutation}.

In this work,
we give a non-interactive LDP protocol
which may be applied towards both
realizable learning and realizable refutation.
This gives a sample complexity upper bound
for these problems
in terms of a new efficiently computable quantity \(\mcnew(\concepts,\acc)\) that we define. 
Further, we derive a lower bound
for realizable refutation in terms of 
\(\mcnew(\concepts,\acc)\), showing that our protocol
is nearly optimal for realizable refutation, and that the sample complexity of realizable refutation is an upper bound on the sample complexity of realizable learning under non-interactive LDP. Our main theorem for realizable learning is stated next. 

\begin{theorem}\label{thm:realizable-main}
Let \(\concepts \subseteq \{\pm 1\}^\iuni\) be a finite concept
class. Let \(\priv > 0\), \(\acc, \conf \in (0,1/2)\), where $\conf \le
\frac12 - \Omega(1)$. Then, to either  $(2\acc,\conf)$-learn $\concepts$ realizably,
or $(2\acc,\conf)$-refute $\concepts$ realizably under non-interactive \(\priv\)-LDP,
it suffices to have a sample of size
\[
    \dsize
        = O\left( \frac{\mcnew(\concepts,\acc)^2 \cdot \log (|\concepts| / \conf)}{\priv^2 \acc^2} \right).
\]
Conversely, for some \(\acc' = \Omega\left( \frac{\acc}{\log(1/\acc)} \right)\), the number of samples required
to $(\acc',\conf)$-refute $\concepts$ realizably under non-interactive \(\priv\)-LDP
is at least
\[
  \dsize = \Omega \left( \frac{\mcnew(\concepts,\acc)^2}{\priv^2 \acc^2} \right).
\]
\end{theorem}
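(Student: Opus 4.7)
The plan is to mirror the structure of the proof of Theorem~\ref{thm:agnostic-main} while enforcing the realizability constraint on both sides of the argument. The quantity $\mcnew(\concepts,\acc)$ should be formulated as a one-sided realizable analog of the approximate $\fnorminf$ norm $\fnorminf(\qmat,\acc)$, measuring the factorization complexity of matrices that approximate the concept matrix $\qmat$ in a manner adapted to distributions that are actually realizable by some $\concept \in \concepts$, rather than arbitrary $\pm 1$ labellings.

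For the upper bound, I would take a near-optimal factorization witnessing $\mcnew(\concepts,\acc)$ and turn it into a single non-interactive $\priv$-LDP randomizer: each participant holding $(\ielem,y)$ sends a randomized message whose expectation is $y$ times the column of the factorization indexed by $\ielem$, scaled so that the message lies in a bounded range compatible with $\priv$-LDP. Averaging the messages at the server and taking inner products with the rows of the factorization yields simultaneous estimates of $\ex{(\ielem,y)\sim\dista}{y\concept(\ielem)}$ to additive error $O(\acc)$ for every $\concept \in \concepts$, with failure probability at most $\conf$ after Hoeffding's inequality and a union bound, provided $\dsize = O(\mcnew^2 \log(|\concepts|/\conf)/(\priv^2 \acc^2))$. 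Under realizability, thresholding these estimates simultaneously solves refutation (some estimate close to $1$ versus all estimates bounded away from $1$) and learning (output any $\concept$ whose estimate is near $1$), so a single protocol handles both tasks.

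The lower bound is where I expect the main difficulty. I would start from a dual characterization of $\mcnew(\concepts,\acc)$ yielding a signed witness over data-point/concept pairs that obstructs any low-complexity factorization, and from this witness build two ensembles of product distributions over $\iuni \times \oo$: one in which the labels are generated by an actual $\concept \in \concepts$ (so realizability holds), and another in which every hypothesis suffers loss at least $\acc' = \Omega(\acc/\log(1/\acc))$. The obstacle absent from the agnostic setting is that the ``hard'' distribution cannot simply use uniformly random labels; both families must satisfy their respective structural constraints — realizability versus being far from realizable — while remaining close in total variation under any single-message $\priv$-LDP channel. I expect to resolve this by decomposing the dual witness into its positive and negative parts, using one to induce labels consistent with a fixed concept in $\concepts$ and the other to introduce label perturbations pushing every hypothesis away from zero loss.

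Plugging these two ensembles into the standard single-message non-interactive LDP information-theoretic lower bound — the same tool used for the refutation lower bound in \cite{factorization} and for the agnostic learning lower bound in Theorem~\ref{thm:agnostic-main} — then yields the claimed $\Omega(\mcnew(\concepts,\acc)^2/(\priv^2 \acc^2))$ sample bound for realizable refutation. The remaining assertion that realizable learning is no harder than realizable refutation is immediate from the upper bound, since the single protocol above already solves both problems with the same sample budget.
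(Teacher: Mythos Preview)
Your overall architecture matches the paper's, but two concrete steps are missing or misdescribed.

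\textbf{Upper bound.} You propose that the factorization witnessing $\mcnew(\concepts,\acc)$ be used so that each participant sends ``$\lab$ times the column indexed by $\ielem$,'' yielding estimates of the correlations $\ex{(\ielem,\lab)\sim\dista}{\lab\,\concept(\ielem)}$. That is exactly the agnostic mechanism and is governed by $\fnorminf(\qmat,\acc)$, which can be exponentially larger than $\mcnew(\concepts,\acc)$ (conjunctions being the standard example). The paper's $\mcnew$ is the minimum $\fnorminf$ norm over matrices $W\in\R^{\concepts\times(\iuni\times\oo)}$ (modulo row shifts) satisfying the \emph{one-sided} constraints $|w_{\concept,(\ielem,\concept(\ielem))}|\le\acc$ and $w_{\concept,(\ielem,-\concept(\ielem))}\ge 1$, with no upper bound on the latter. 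The columns of the factorization are indexed by pairs $(\ielem,\lab)$, not by $\ielem$, and the queries answered are the surrogate losses $\ex{}{w_{\concept,(\ielem,\lab)}}$: realizability of $\concept$ forces this to be at most $\acc$, while $\loss_\dista(\concept)\ge 3\acc$ forces it to be at least $2\acc$. A ``$\lab$ times column of $\ielem$'' randomizer cannot exploit the one-sidedness that makes $\mcnew$ small.

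\textbf{Lower bound.} Decomposing the dual witness $U$ into positive and negative parts does yield a realizable $\distb_\concept$ (labeled by $\concept$) and a $\dista_\concept$ on which \emph{that particular} $\concept$ has large loss; it does \emph{not} yield a distribution on which every $\hyp:\iuni\to\oo$ has large loss, which is what refutation needs. The paper closes this with an extra mixing step (Lemma~\ref{lm:allcase}): let $\sigma_\concept$ share the $\iuni$-marginal of $\dista_\concept$ but be labeled by $\concept$, and set $\widetilde{\dista}_\concept=\tfrac12\dista_\concept+\tfrac12\sigma_\concept$, $\widetilde{\distb}_\concept=\tfrac12\distb_\concept+\tfrac12\sigma_\concept$. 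Then $\loss_{\widetilde{\dista}_\concept}(\hyp)\ge\tfrac12\loss_{\dista_\concept}(\concept)$ for every $\hyp$, $\widetilde{\distb}_\concept$ is still realizable, and $\widetilde{\dista}_\concept-\widetilde{\distb}_\concept=\tfrac12(\dista_\concept-\distb_\concept)$ so the $\fnorminf^*$ bound is preserved. You also need the binning step (Lemma~\ref{lm:binning}) to convert the average-over-$\concept\sim\distv$ loss guarantee coming out of the dual into a pointwise one, which is where the $\log(1/\acc)$ loss in $\acc'$ originates.
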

See Section~\ref{sec:realizable} for the definition of
\(\mcnew(\concepts,\acc)\). On a high level, this quantity comes from
estimating a surrogate loss function for each concept in \(\concepts\)
using the factorization mechanism of \cite{factorization}. This
surrogate loss has the property that the loss of any distribution
labeled by the concept is close to \(0\), and the loss of any
distribution far from being labeled by the concept is large.

%%% Local Variables:
%%% mode: latex
%%% TeX-master: "agnostic-realizable-ldp-paper"
%%% End:

\section{Preliminaries} \label{sec:prelims}

In this section we introduce our notation
and review standard definitions
pertaining to privacy and learning.

%% what should be removed/added from prelims

%In this section we recount basic notation and definitions used
%throughout the paper.

\subsection{Norms}\label{sec:norms}

\tnote{Put 2.1 in Appendix.}
For a set $\mathcal{S}$, the $\ell_1$, $\ell_2$ and $\ell_\infty$ norms
on $\R^\mathcal{S}$ are given respectively by
\[
    \| a \|_1 = \sum_{v \in \mathcal{S}} |a_v|,
    \quad
    \| a \|_2 = \sqrt{ \sum_{v \in \mathcal{S}} (a_v)^2 },
    \quad
    \| a \|_\infty = \max_{v \in \mathcal{S}} |a_v|.
\]

Given a probability distribution $\pi$ on $\mathcal{S}$,
we consider the norms $\|\cdot\|_{L_1(\pi)}$ and $\|\cdot\|_{L_2(\pi)}$
on $\R^\mathcal{S}$, given by
\[
    \| a \|_{L_1(\pi)}
    = \sum_{v \in \mathcal{S}} \pi(v) |a_v|,
    \quad 
    \| a \|_{L_2(\pi)}
    = \sqrt{ \sum_{v \in \mathcal{S}} \pi(v) (a_v)^2}.
\]

We also take advantage of a number of matrix norms.
For norms $\|\cdot\|_\zeta$ and $\|\cdot\|_\xi$
on $\R^{\mathcal{S}'}$ and $\R^{\mathcal{S}}$ respectively,
we consider the \emph{matrix operator norm} of
$M \in \R^{\mathcal{S} \times \mathcal{S}'}$
given by
\[
    \| M \|_{\zeta \to \xi} = \max_{\ielem \in \R^\mathcal{S} \setminus \{0\}} \frac{\| M\ielem \|_\xi}{\|\ielem\|_\zeta}.
\]
For the special case of $\|M\|_{\ell_s \to \ell_t}$,
we will simply write $\|M\|_{s \to t}$.
Of particular importance are
$\|M\|_{1 \to \infty}$ which corresponds to the largest entry of $M$,
$\|M\|_{1 \rightarrow 2}$,
which corresponds to the maximum $\ell_2$ norm of a column of $M$,
and
$\|M\|_{2 \rightarrow \infty}$,
which corresponds to the maximum $\ell_2$ norm of a row of $M$.
\anote{Hyphenate norms?}

The \emph{inner product} of two matrices $M$ and $N$ in
$\R^{\mathcal{S} \times \mathcal{S}'}$ is defined by
$M\bullet N = \tr(M^\top N) = \sum_{u \in \mathcal{S}, v\in
  \mathcal{S}'} m_{u,v} n_{u,v}$.

The \emph{factorization norm} known as
the $\fnorminf$ norm is given for
$M \in \R^{\mathcal{S} \times \mathcal{S}'}$ by
\[
    \fnorminf(M) = \min\{\|R\|_{2 \to \infty}\|A\|_{1 \to 2} : RA = M\}.
  \]
The \(\fnorminf\) norm is, indeed, a norm, i.e., it is non-negative,
\(\fnorminf(M)=0\) if and only if \(M = 0\), for any real \(s\) we
have \(\fnorminf(sM) = |s|\fnorminf(M)\), and we also have the
triangle inequality \(\fnorminf(M+N) \le \fnorminf(M) + \fnorminf(N)\).

The approximate \(\fnorminf\) norm is the smallest \(\fnorminf\) norm of a matrix that approximates the given matrix entrywise up to an additive \(\acc\), i.e., 
\[
\fnorminf(M,\acc)
= 
\min\{\fnorminf(\widetilde{M}): \|\widetilde{M} - M\|_{1\to\infty} \le \acc\}.
\]
The dual \(\fnorminf\) norm of a matrix $G$ in $\R^{\mathcal{S} \times \mathcal{S}'}$ is given by 
\[
\fnorminf^*(N) = \max\{M\bullet N: \fnorminf(M) \le 1\}
=
\max_{f,g} \sum_{u \in \mathcal{S},v\in \mathcal{S}'} n_{u,v}f(u)g(v),
\]
where the second max ranges over functions \(f:\mathcal{S}\to B_2\) and \(g:\mathcal{S}'\to B_2\) that map the index sets of the rows and columns of \(N\), respectively, to vectors of \(\ell_2\) norm at most 1.

\subsection{Differential privacy}

Let $\uni$ denote the \emph{data universe}.
% which we identify with $\{1,\dots,\usize\}$.
A generic element from $\uni$ will be denoted by $\elem$.
We consider \emph{datasets} of the form
$\ds = (\elem_1,\dots,\elem_\dsize) \in \uni^n$,
each of which is identified with its \emph{histogram}
$\hist \in \Z_{\ge 0}^{\uni}$ where, for every $\elem \in \uni$, $\hist_{\elem} = | \set{i : \dsrow_i = \elem} |$,
so that $\| \hist \|_1 = \dsize$.
To refer to a dataset, we use $\ds$ and $h$ interchangeably.
A pair of datasets $\ds = (\elem_1,\dots,\elem_i,\dots,\elem_\dsize)$
and $\ds' = (\elem_1,\dots,\elem_i',\dots,\elem_\dsize)$ are called \emph{adjacent} if $\ds'$ is obtained from $\ds$ by replacing
an element $\elem_i$ of $\ds$ with a new universe element $\elem_i'$.

For a parameter $\priv>0$,
an \emph{$\priv$-differentially private} ($\priv$-DP) mechanism~\cite{DworkMNS06}
is a randomized function $\mech:\uni^\dsize \to \outspace$
which, for all adjacent datasets $\ds$ and $\ds'$,
for all outcomes $S \subseteq \outspace$, satisfies
\[
    \Pr_\mech[\mech(\ds) \in S] \le e^\priv\Pr_\mech[\mech(\ds') \in S].
\]

Of special interest are $\priv$-differentially private mechanisms $\mech_i:\uni \to \loutspace$
which take a singleton dataset $\ds = \{\elem\}$ as input.
These are referred to as \emph{local randomizers}.
A sequence of $\priv$-differentially private local randomizers $\mech_1,\dots,\mech_\dsize$,
together with a \emph{post-processing function}
$\post:\loutspace^\dsize \to \outspace$,
specify a \emph{(non-interactive) locally
$\priv$-differentially private} ($\priv$-LDP) mechanism
$\mech:\uni^\dsize \to \outspace$~\cite{EvfimievskiGS03,DworkMNS06, KasiviswanathanLNRS08}.
When the local mechanism $\mech$ is applied to a dataset $\ds$,
we refer to
%\[
    $\trans(\ds) = (\mech_1(\elem_1),\dots,\mech_\dsize(\elem_\dsize))$
%\]
as the \emph{transcript} of the mechanism.
Then the output of the mechanism is given by
\(
    \mech(\ds) = \post(\trans(\ds)).
\)

%\subsection{Linear queries}
%
A \emph{linear query} is specified by a bounded function
$q: \uni \to \R$.
Abusing notation slightly, its answer on a dataset $\ds$ is given by
$\query(\ds) = \frac{1}{\dsize} \sum_{i=1}^{\dsize} \query(\dsrow_i)$.
We also extend this notation to distributions: if $\dist$ is a
distribution on $\uni$, then we write $\query(\dist)$ for
 $\ex{\elem \sim \dist}{\query(\elem)}$.
A \emph{workload} is a set of linear queries
$\queries = \set{\query_1,\dots,\query_\qsize}$,
and
$\queries(\ds) = (\query_1(\ds),\dots,\query_\qsize(\ds))$
is used to denote their answers. The answers on a distribution $\dist$
on $\uni$ are denoted by 
$\queries(\dist) = (\query_1(\dist),\dots,\query_\qsize(\dist))$.
We will often represent $\queries$ by its \emph{workload matrix} $W \in \R^{\queries \times \uni}$ with entries $w_{\query,\elem} = \query(\elem)$.
In this notation, the answers to the queries are given by
$\frac{1}{n} \qmat \hist$, where we recall that $\hist$ is the
histogram of the dataset $\ds$.
We will often use $\queries$ and $\qmat$ interchangeably.

\subsection{PAC learning}

A concept $\concept:\iuni \rightarrow \oo$
from a concept class $\concepts$
assigns to each sample $\ielem \in \iuni$ a label $\concept(\ielem)$.
The \emph{empirical loss}
of the concept $\concept$ on a dataset
$
    \ds
    = ((\ielem_1,\lab_1),\dots,(\ielem_\dsize,\lab_\dsize))
    \in \left(\iuni \times \oo \right)^\dsize
$,
denoted $\loss_{\ds}(\concept)$,
is given by
\[
    \loss_{\ds}(\concept)
    = \frac{1}{\dsize} \sum_{i = 1}^{\dsize} (\I[\concept(\ielem_i) \ne \lab_i])
\]

For a distribution $\dist$ on $\iuni \times \oo$,
%we also consider 
the \emph{population loss}
of $\concept$ on $\dist$,
denoted $\loss_{\dist}(\concept)$
%to denote the loss of the concept $c$ on $\dist$,
is given by
\[
\loss_{\dist}(\concept)=
\ex{(\ielem,\lab) \sim \dist}{\I[\concept(\ielem) \ne \lab}
    = \pr{(\ielem,\lab) \sim \dist}{\concept(\ielem) \neq \lab}.
\]

%Let $\acc,\conf > 0$, referred to as
%the \emph{accuracy} and \emph{confidence} parameters respectively.
We will say that a mechanism
$\mech:(\iuni \times \oo)^\dsize \rightarrow \oo^\iuni$
\emph{($\acc$,$\conf$)-learns $\concepts$ agnostically}
with $\dsize$ samples if,
for any distribution $\dist$ over $\iuni \times \oo$,
given as input a random dataset $\ds$
drawn i.i.d. from $\dist$,
the mechanism returns some
\emph{hypothesis} $\hyp \in \iuni \rightarrow \oo$
which satisfies
\begin{equation}\label{eq:aglearning}
    \pr{\ds,\mech}{
        \loss_\dist(\hyp)
        \le
        \min_{\concept \in \concepts} \loss_\dist(\concept)
        + \alpha
    } \ge 1 - \conf.
\end{equation}
Realizable learning is an important special case
of agnostic learning where the underlying distribution
agrees with some concept.
We say that
$\mech:(\iuni \times \oo)^\dsize \rightarrow \oo^\iuni$
\emph{($\acc$,$\conf$)-learns $\concepts$ realizably}
with $\dsize$ samples if,
whenever $\dist$ is a distribution over $\iuni \times \oo$
which satisfies
\(
\loss_\dist(\concept) = 0
\)
for some unknown $\concept \in \concepts$,
then,
given a random dataset $\ds$
drawn i.i.d. from $\dist$,
the mechanism returns
a \emph{hypothesis} $\hyp \in \iuni \rightarrow \oo$
which satisfies
\begin{equation}\label{eq:real-learning}
    \pr{\ds,\mech}{
        \loss_\dist(\hyp)
        \le
        \alpha
    } \ge 1 - \conf.
\end{equation}

The problem of refutation
asks whether the underlying distribution
is well approximated by the concept class.
In particular,
for $\theta \in [0,1]$,
we will say that
$\mech:(\iuni \times \oo)^\dsize \rightarrow \oo$
\emph{$(\acc,\conf)$-refutes $\concepts$ %agnostically
for threshold $\theta$}
if the following two conditions are met:
\begin{enumerate}
    \item
        When $\dist$ is a distribution
        on $\iuni \times \oo$ which satisfies
        $\loss_\dist(\concept) \le \theta$
        for some $\concept \in \concepts$,
        \[
            \pr{\ds,\mech}{\mech(\ds) = 1}
            \ge 1 - \beta;
        \]
    \item
        When $\dist$ is a distribution
        on $\iuni \times \oo$ which,
        for all $\hyp \in \oo^\iuni$,
        satisfies
        $\loss_\dist(\hyp) \ge \theta + \alpha$,
        then
        \[
            \pr{\ds,\mech}{\mech(\ds) = -1}
            \ge 1 - \beta.
        \]
\end{enumerate}
Realizable refutation is a special case
of agnostic refutation where
the goal is to recognize whether the underlying distribution
is labeled by a concept from the concept class.
We say that
$\mech:(\iuni \times \oo)^\dsize \rightarrow \oo$
\emph{$(\acc,\conf)$-refutes $\concepts$ realizably} if it
\((\acc,\conf)\)-refutes \(\concepts\) for threshold \(0\). This
definition agrees with the definition of \cite{learningvsrefutation} when
\(\acc = \frac12\), with the minor difference that Vadhan's
definition is stated in terms of datasets and empirical loss, rather
than data distributions and population loss. We note that
$\loss_\dist(\hyp) \ge \frac12$ for all $\hyp \in \oo^\iuni$ if and
only if each data point has an independent uniformly random label,
which is how the second condition of the refutation problem is stated
by \cite{learningvsrefutation}.

%%% Local Variables:
%%% mode: latex
%%% TeX-master: "agnostic-realizable-ldp-paper"
%%% End:

\section{Refutation versus Learning: Agnostic Case}
\label{sec:agnostic}

%\subsection{Lower bound}

As mentioned in the introduction, \cite{factorization} (Theorems 24 and 25) gave sample complexity upper bounds for both agnostic learning and refutation for non-interactive LDP in terms of the approximate $\fnorminf$, as well as a nearly tight lower bound for agnostic refutation. However, it left open the question of lower bounds for agnostic learning under non-interactive LDP.
Our main theorem, stated next, resolves this by giving a nearly tight lower bound in terms of the approximate $\fnorminf$ norm of a
natural matrix associated with $\concepts$.
%a problem which we resolve with
%Theorem~\ref{thm:aglearnlbqmat},
%which gives a nearly tight lower bound
%in terms of the approximate $\fnorminf$ norm
%of the concept matrix, defined as follows.
Theorem \ref{thm:agnostic-main} thus follows by combining Theorems 24 and 25 from \cite{factorization} together with Theorem \ref{thm:aglearnlbqmat}.\footnote{While Theorems 24 and 25 from \cite{factorization} are stated in terms of agnostic learning, their definition of agnostic learning is non-standard and requires the learner to output a hypothesis as well as the loss it achieves. Thus the upper bounds hold for the standard definition of agnostic learning, while the lower bound only holds for refutation.} 

\begin{definition}\label{def:cmatdef}
    Let $\concepts \subseteq \oo^\iuni$
    be a concept class.
    The concept matrix
    $\cmat \in \oo^{\concepts \times \iuni}$
    of $\concepts$
    is the matrix with entries given by \( \cent_{\concept,\ielem} = \concept(\ielem).\)
    \cut{\begin{equation}\label{eq:cmatdef}
        w_{\concept,\ielem} = \concept(\ielem).
    \end{equation}}
\end{definition}

\begin{theorem}\label{thm:aglearnlbqmat}

    Let $\concepts \subseteq \oo^\iuni$ be a concept class
    with concept matrix $\qmat \in \oo^{\concepts \times \iuni}$
    as given by Definition~\ref{def:cmatdef}.
    Let $\priv > 0$, $\acc,\conf \in (0,1/2)$, where
    $\conf \le \frac12 - \Omega(1)$.
    Then, for some
    $\acc'
    = \Omega \left( \frac{\acc}{\log(1/\acc)} \right)$,
    \cut{and any sufficiently large constant $C>0$,
    if
    $
        \frac{\fnorminf(W,\acc)}{\priv^2 \acc^2}
        \ge \frac{C \log 2 |\concepts|}{(\acc')^2} 
    $,
    then} under non-interactive $\priv$-LDP,
    the number of samples required
    to $(\acc',\conf)$-learn $\concepts$ agnostically
    is at least
    \[
        \dsize = \Omega \left( \frac{(\fnorminf(W,\acc) -1)^2}{\priv^2 \acc^2} \right).
    \]
    %when $\concepts$ is %closed under negation.
    %If $\concepts$ is not closed under negation,
    %then instead the lower bound is
    %\[
        %\dsize = \Omega %\left( %,2\acc)-1)^2}{\pr%iv^2 \acc^2} %\right).
   % \]

\end{theorem}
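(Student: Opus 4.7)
The plan is to adapt the $\fnorminf$-based lower bound technique of \cite{factorization} for refutation to the strictly harder task of agnostic learning, by building a hard family of distributions indexed by concepts rather than a single structure-versus-noise dichotomy. The key conceptual step is to introduce a \emph{difference matrix} $\difmat \in \{-1,0,1\}^{(\concepts\times\concepts) \times \iuni}$ with $\difmat_{(c,c'),x} := \tfrac12(c(x) - c'(x))$, because its entries describe precisely the loss gaps between concepts on ``biased-coin'' distributions: if $\dista^g$ denotes the distribution on $\iuni \times \{\pm 1\}$ with uniform marginal on $\iuni$ and $\Pr[y = 1 \mid x] = \tfrac{1 + g(x)}{2}$, then a direct calculation gives $\loss_{\dista^g}(c') - \loss_{\dista^g}(c) = \mathbb{E}_x[\difmat_{(c,c'),x}\, g(x)]$. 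Thus $\difmat$ is the object that controls how well an agnostic learner can separate any two concepts on such distributions.

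Next I would show that $\fnorminf(\difmat,\acc)$ and $\fnorminf(\qmat,\acc)$ agree up to a constant factor and an additive $1$. For $\fnorminf(\difmat,\acc) \le \fnorminf(\qmat,\acc)$, take any approximation $\widetilde W$ of $\qmat$ and observe that the matrix $\widetilde{\difmat}_{(c,c'),x} := \tfrac12(\widetilde W_{c,x} - \widetilde W_{c',x})$ approximates $\difmat$ entrywise to within $\acc$, and that from any factorization $\widetilde W = RA$ one obtains a factorization $\widetilde{\difmat} = R'A$ whose row $(c,c')$ of $R'$ is $\tfrac12(R_c - R_{c'})$, with $\|\cdot\|_{2\to\infty}$ norm at most that of $R$. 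For the reverse direction, fix any reference concept $c^* \in \concepts$ and note that the submatrix of $\difmat$ obtained by restricting the second coordinate to $c^*$ equals $\tfrac12(\qmat - \mathbf{1}_\concepts (c^*)^\top)$; since the rank-one shift $\mathbf{1}_\concepts (c^*)^\top$ has $\fnorminf$ norm at most $1$, subadditivity and scaling give $\fnorminf(\difmat,\acc) \ge \tfrac12 \fnorminf(\qmat, 2\acc) - \tfrac12$, which suffices up to absolute constants.

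With this setup in place, I would use the dual formulation of $\fnorminf(\difmat,\acc')$ to extract a probability measure on $\iuni$ and unit-$\ell_2$ dual witnesses on $\concepts \times \concepts$ and on $\iuni$. From these witnesses I would construct, for each $c \in \concepts$, a function $g_c:\iuni \to [-1,1]$ (after an appropriate symmetrization over the pair coordinates and rescaling by a factor $\Theta(\acc)$) such that the distributions $\dista_c := \dista^{g_c}$ satisfy, on a sub-ensemble of $c$'s of size $\Omega(|\concepts|)$, both (i) $\loss_{\dista_c}(c) \le \tfrac12 - \Omega(\acc)$, and (ii) $\loss_{\dista_c}(c') - \loss_{\dista_c}(c) \ge \Omega(\acc/\log(1/\acc))$ for every $c' \ne c$. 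Then any $(\acc',\conf)$-agnostic learner on input $\dista_c$ must output a hypothesis whose loss is within $\acc'$ of $\loss_{\dista_c}(c)$, which, given the per-concept gaps, forces the learner's output to effectively identify $c$. The LDP mutual-information / strong data processing inequality of \cite{factorization} then bounds the information the transcript can carry about $c$ by $O(\priv^2 n)$ divided by $\fnorminf(\difmat,\acc')^2$; combined with Fano's inequality over the sub-ensemble and the relationship $\fnorminf(\difmat,\acc') \ge \tfrac12(\fnorminf(\qmat,\acc) - 1)$ from the previous paragraph, this yields the claimed $\dsize = \Omega((\fnorminf(\qmat,\acc) - 1)^2/\priv^2\acc^2)$.

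The main obstacle, I expect, is converting the pairwise, signed dual witness into a family of bona fide probability distributions indexed by \emph{single} concepts: $g_c$ must be $[-1,1]$-valued for $\dista_c$ to be a well-defined distribution, and simultaneously the loss gaps $\loss_{\dista_c}(c') - \loss_{\dista_c}(c)$ must remain at scale $\Omega(\acc/\log(1/\acc))$ \emph{uniformly} over $c'$, not only on average. The $\log(1/\acc)$ loss in the effective accuracy $\acc'$ very likely originates here---from, for instance, randomizing over a convex combination of dual witnesses and applying concentration to boost a weak per-pair gap into a uniform per-concept gap that an $\acc'$-agnostic learner cannot avoid revealing.
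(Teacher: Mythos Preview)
Your introduction of the difference matrix $D$ and the two-sided comparison $\fnorminf(D,\acc)\approx\fnorminf(W,\acc)$ are exactly right and match the paper's Lemma~\ref{lm:normrelation-main}. The gap is in the hard-distribution step.

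You index the hard family by \emph{single} concepts $c$ and plan to finish with Fano. The trouble is that an agnostic learner outputs an arbitrary hypothesis $h:\iuni\to\{\pm 1\}$, not a concept, and your condition (ii) only separates $c$ from other \emph{concepts} under $\dista_c$; it says nothing about other hypotheses. To run Fano you would need the acceptance sets $A_c=\{h:\loss_{\dista_c}(h)\le\loss_{\dista_c}(c)+\acc'\}$ to be (essentially) disjoint across your sub-ensemble, and (i)--(ii) do not give that: for biased-coin distributions the Bayes-optimal hypothesis $\mathrm{sign}(g_c)$ lies in $A_c$ without any a priori reason to determine $c$, and more generally nothing prevents a single $h$ from being simultaneously competitive on $\dista_c$ and $\dista_{c''}$. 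This is a second obstacle on top of the one you flag, and it is not clear how ``symmetrizing over the pair coordinates'' of the dual witness would resolve it.

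The paper avoids both obstacles by \emph{keeping the pair indexing}. For each $(c,c')$ in the support of the dual witness $U$ it builds \emph{two} distributions $\dista_{c,c'},\distb_{c,c'}$ sharing the same marginal on $\iuni$ but with labels given by some $s_{c,c'}$ and $-s_{c,c'}$ respectively. The key is the identity $\loss_{\dista_{c,c'}}(h)+\loss_{\distb_{c,c'}}(h)=1$ for \emph{every} hypothesis $h$, which immediately shows that no single $h$ can be agnostically competitive on both: if $\loss_{\dista_{c,c'}}(h)\le\loss_{\dista_{c,c'}}(c')+\acc'/4$ then $\loss_{\distb_{c,c'}}(h)>\loss_{\distb_{c,c'}}(c)+3\acc'/4$ (Lemma~\ref{lm:tv}). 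Thus an $(\acc',\conf)$-agnostic learner yields a two-point test distinguishing $\dista_{c,c'}^n$ from $\distb_{c,c'}^n$ for \emph{every} pair in the support, and one finishes with Pinsker plus the KL/strong-data-processing bound you cite, averaged over pairs---no Fano, and no need for gaps uniform over all competing concepts. The $\log(1/\acc)$ loss does indeed come from a geometric binning step (Lemma~\ref{lm:binning}), but it is applied only to the single per-pair scalar $\loss_{\dista_{c,c'}}(c)-\loss_{\dista_{c,c'}}(c')$, which is exactly what the rows of $D\bullet U$ encode.
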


\subsection{Difference matrix}

Theorem~\ref{thm:aglearnlbqmat}
is given in terms of the concept matrix associated with the concept class;
%so as to be comparable
%with the results of %\cite{factorization}.
however, our proof of this result
will focus instead on the {\it difference} matrix associated with
the concept class, defined below. 

\begin{definition}\label{def:difmatdef}

    The\cut{ (unsymmetrized)} difference matrix
    of a concept class
    $\concepts:\iuni \rightarrow \oo$
    is the matrix
    $\difmat \in \oo^{\concepts^2 \times \iuni}$
    with entries given, for
    $\concept,\conceptb \in \concepts,
    \ielem \in \iuni$, by
    \begin{equation} \label{eq:difmatdef}
        \difent_{(\concept,\conceptb), \ielem}
        =
        \frac{1}{2} \left( \concept(\ielem) - \conceptb(\ielem) \right)
        = 
        \begin{cases}
            0  & \text{if } \concept(\ielem) = \conceptb(\ielem) \\
            -1 & \text{if } \concept(\ielem) = -1, \conceptb(\ielem) = +1 \\
            +1 & \text{if } \concept(\ielem) = +1, \conceptb(\ielem) = -1.
        \end{cases}
    \end{equation}
    \cut{The symmetrized difference matrix
    of $\concepts$ is the matrix
    $\symdifmat
    \in \oo^{\concepts^2 \times (\iuni \times \oo)}$
    with entries given, for
    $\concept,\conceptb \in \concepts,
    \ielem \in \iuni, \lab \in \oo$, by
    %\begin{equation} \label{eq:symdifmatdef}
       $ \symdifent_{(\concept,\conceptb), (\ielem,\lab)}
        = \lab \cdot
            \difent_{(\concept,\conceptb), \ielem}.$
    %\end{equation}}
    }
\end{definition}
The difference matrix is one of the key ideas that enables the proof of Theorem~\ref{thm:aglearnlbqmat}. We will use a dual formulation of \(\fnorminf(\difmat,\acc)\) to construct pairs of hard distributions for our lower bound, each pair corresponding to a pair of concepts \(\concept,\conceptb\in \concepts\). The structure of the difference matrix will help us ensure that no correct agnostic learning algorithm can output, with  high probability, the same hypothesis for both distributions in a pair. It is not apparent how to guarantee this property when working directly with the concept matrix \(\qmat\).
\cut{To the motivate the definition
of the difference matrix,
we note that, for a concept class
$\concepts \subseteq \oo^\iuni$,
by answering the workload of queries
$
    \{\query_{\concept,\conceptb}\}_{\concept,\conceptb \in \concepts}
$
corresponding to 
the symmetrized difference matrix
$\symdifmat$ of $\concepts$,
we obtain, for each $\concept,\conceptb \in \concepts$,
an estimate of
\begin{equation}\label{eq:queryvalues}
    \query_{\concept,\conceptb}(\dista)
    = \ex{(\ielem,\lab) \sim \dista}{\lab \cdot d_{(\concept,\conceptb),\ielem}}
    = \ex{(\ielem,\lab) \sim \dista}{\frac{\lab}{2} \cdot \left(\concept(\ielem) - \conceptb(\ielem)\right)}
    = \loss_{\dista}(\concept) - \loss_{\dista}(\conceptb).
\end{equation}
In other words,
answering these queries gives the difference in loss
for each pair of concepts.}
%This provides sufficient information
%to agnostically learn the concept class.
%Estimating these queries with
%the approximate factorization mechanism
%for linear query release
%of \cite{factorization}
%gives an upper bound for
%agnostic learning under %non-interactive LDP
%in terms of $\fnorminf(\difmat,\acc)$.
%Indeed, since the concept matrix $\qmat$
%of $\concepts$ satisfies
%this provides an alternate approach
%to obtaining the upper bound for agnostic learning
%in terms of $\fnorminf(\qmat,\acc)$
%as given in \cite{factorization}.
Nevertheless, the following lemma shows that $\fnorminf(\difmat,\alpha)$ and $\fnorminf(\qmat,\acc)$ are essentially the same.
See Appendix~\ref{ap:difmatqmat} for the proof.

\begin{lemma}\label{lm:normrelation-main}
   % \anote{
      %  This lemma is not really needed for any of our main results. Removing it altogether is an option. Our previous work already gives an algorithm which solves both learning and refutation.}
    %\tnote{I merged the two lemmas since I think it is nice to know they are the same.}
    %\tnote{Also I think we should just remove the upper bound altogether from this submission.}
    Let $\concepts$ be a concept class
    with concept matrix
    $\qmat \in \R^{\concepts \times \iuni}$
    %(Definition~\ref{def:cmatdef})
    and difference matrix
    $\difmat \in \R^{\concepts^2 \times \iuni}$.
    %(Definition~\ref{def:difmatdef}).
    Then
    $\fnorminf(\difmat,\acc)
    \leq \fnorminf(\qmat,\acc)$. Conversely, 
    $\fnorminf(\qmat,\acc) \leq 2 \fnorminf(\difmat,\acc/2) +1 $, and if $\concepts$ is closed under negation then $\fnorminf(\qmat,\acc) \leq \fnorminf(\difmat,\acc)$.
\end{lemma}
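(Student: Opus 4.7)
The plan is to prove the three inequalities by constructing explicit near-optimal approximators for one matrix out of an approximator for the other, leveraging (i) row-submatrix monotonicity of $\fnorminf$ (restrict the left factor), (ii) the fact that averaging rows can only decrease the $2\to\infty$ norm of the left factor, and (iii) the triangle inequality for $\fnorminf$.

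For the first inequality $\fnorminf(\difmat,\acc) \le \fnorminf(\qmat,\acc)$, observe that $\difmat = S\qmat$ where $S \in \R^{\concepts^2 \times \concepts}$ has entries $S_{(\concept,\conceptb),\concept''} = \frac{1}{2}\I[\concept''=\concept] - \frac{1}{2}\I[\concept''=\conceptb]$. Given an optimal $\tilde{\qmat}$ approximating $\qmat$ within $\acc$ with $\tilde{\qmat} = RA$ realizing its $\fnorminf$ norm, define $\tilde{\difmat} := S\tilde{\qmat} = (SR)A$. Each entry satisfies $|\tilde{\difmat}_{(\concept,\conceptb),\ielem} - \difmat_{(\concept,\conceptb),\ielem}| = \frac{1}{2}|(\tilde{\qmat}_{\concept,\ielem} - \concept(\ielem)) - (\tilde{\qmat}_{\conceptb,\ielem} - \conceptb(\ielem))| \le \acc$, and the $(\concept,\conceptb)$ row of $SR$ has $\ell_2$ norm $\tfrac{1}{2}\|R_\concept - R_\conceptb\|_2 \le \|R\|_{2\to\infty}$, so $\fnorminf(\tilde{\difmat}) \le \|SR\|_{2\to\infty}\|A\|_{1\to 2} \le \fnorminf(\tilde{\qmat})$.

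For the converse $\fnorminf(\qmat,\acc) \le 2\fnorminf(\difmat,\acc/2) + 1$, I fix any reference concept $\concept_0 \in \concepts$ and use the identity $\concept(\ielem) = 2\difent_{(\concept,\concept_0),\ielem} + \concept_0(\ielem)$. Taking $\tilde{\difmat}$ optimal for $\fnorminf(\difmat,\acc/2)$, I set $\tilde{\qmat}_{\concept,\ielem} := 2\tilde{\difmat}_{(\concept,\concept_0),\ielem} + \concept_0(\ielem)$, which then approximates $\qmat$ entrywise to within $2 \cdot (\acc/2) = \acc$. Writing $\tilde{\qmat} = 2\tilde{\difmat}' + J$, where $\tilde{\difmat}'$ is the row submatrix of $\tilde{\difmat}$ indexed by $\{(\concept,\concept_0) : \concept \in \concepts\}$ and $J = \mathbf{1}\concept_0^\top$ is the rank-one matrix whose every row equals $\concept_0$, I apply the triangle inequality: $\fnorminf(\tilde{\difmat}') \le \fnorminf(\tilde{\difmat})$ because restricting to a row submatrix corresponds to restricting the rows of the left factor $R$ and can only decrease $\|R\|_{2\to\infty}$; meanwhile $\fnorminf(J) \le \|\mathbf{1}\|_{2\to\infty}\|\concept_0\|_{1\to 2} = 1$. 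Hence $\fnorminf(\tilde{\qmat}) \le 2\fnorminf(\tilde{\difmat}) + 1$, which gives the claimed bound.

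When $\concepts$ is closed under negation, I can sharpen the previous argument significantly: for every $\concept \in \concepts$ the concept $-\concept$ is also in $\concepts$, and by definition $\difent_{(\concept,-\concept),\ielem} = \frac{1}{2}(\concept(\ielem) - (-\concept(\ielem))) = \concept(\ielem)$. Thus $\qmat$ appears verbatim as the row submatrix of $\difmat$ indexed by the pairs $(\concept,-\concept)$. Taking $\tilde{\difmat}$ optimal for $\fnorminf(\difmat,\acc)$ and restricting to the corresponding rows yields a matrix that approximates $\qmat$ within $\acc$ with $\fnorminf$ norm no larger than that of $\tilde{\difmat}$, proving $\fnorminf(\qmat,\acc) \le \fnorminf(\difmat,\acc)$. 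The only mild subtlety in the whole proof is getting the error accounting right in the general (non-symmetric) converse, which is why the halving of $\acc$ and the additive $+1$ appear — there is no serious technical obstacle.
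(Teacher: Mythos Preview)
Your proof is correct and follows essentially the same approach as the paper's: the same submatrix/row-restriction argument for the closed-under-negation case, the same decomposition $\qmat = 2\difmat' + \mathbf{1}\concept_0^\top$ for the general converse, and the same averaging idea for the forward inequality. The only cosmetic difference is that for $\fnorminf(\difmat,\acc)\le\fnorminf(\qmat,\acc)$ you work directly with a factorization $\widetilde{\qmat}=RA$ and bound $\|SR\|_{2\to\infty}$, whereas the paper duplicates rows and invokes subadditivity of $\fnorminf$; these are two phrasings of the same computation.
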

% \begin{equation}\label{eq:difmatnormub}
% \fnorminf(\difmat,\acc)
% \le
% \fnorminf(\qmat,\acc),
% \end{equation}
%A proof of Lemma~\ref{lm:normrelation-ub}
%are given in Appendix~\ref{ap:difmatqmat}
%In Appendix \ref{ap:difub} we give a proof of the upper bound for agnostic learning in terms of $\fnorminf(\difmat,\acc)$. By the above equivalence, this gives an alternative proof of the same upper bound in terms of $\fnorminf(\qmat,\acc)$ \cite{factorization}.
%of $\concepts
%this provides an alternate approach
%to obtaining the upper bound for agnostic learning
%in terms of $\fnorminf(\qmat,\acc)$
%as given in \cite{factorization}.
%and a complete presentation
%of the agnostic learning upper bound in terms
%of $\fnorminf(\difmat,\acc)$
%is given in Appendix~\ref{ap:difub}.

The next lemma is the same as Theorem \ref{thm:aglearnlbqmat}, but with $W$ replaced by the difference matrix $D$. 
Theorem~\ref{thm:aglearnlbqmat}
is an immediate consequence of Lemma~\ref{lm:aglearnlb},
together with 
Lemma~\ref{lm:normrelation-main}.

\begin{lemma}\label{lm:aglearnlb}

    Let $\concepts \subseteq \oo^\iuni$ be a concept class
    with concept matrix $\difmat \in \oo^{\concepts \times \iuni}$
    as given by Definition~\ref{def:cmatdef}. Let $\priv,\acc, \acc',
    \conf$ be as in Theorem~\ref{thm:aglearnlbqmat}. Then, under non-interactive $\priv$-LDP,
    the number of samples required
    to $(\acc',\conf)$-learn $\concepts$ agnostically
    is at least
    \[
        \dsize = \Omega \left( \frac{\fnorminf(\difmat,\acc)^2}{\priv^2 \acc^2} \right).
    \]
\end{lemma}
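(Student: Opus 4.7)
The plan is to adapt the refutation lower bound from \cite{factorization} by exploiting the pair structure of the difference matrix $\difmat$. Where the refutation argument only needs to distinguish a signal distribution from a null one, here I will build, for every pair $(\concept,\conceptb) \in \concepts^2$ picked out by a dual witness, two distributions $\dista^+_{\concept,\conceptb}$ and $\dista^-_{\concept,\conceptb}$ that swap the roles of $\concept$ and $\conceptb$ as the loss minimizer; a correct agnostic learner is then forced to output hypotheses that the family can tell apart, and the sample complexity required to succeed against the whole family is bounded below by the approximate $\fnorminf$ norm.

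First I would invoke convex duality for the approximate $\fnorminf$ norm to obtain a signed matrix $\psi$ indexed by $\concepts^2 \times \iuni$ with $\fnorminf^*(\psi) \le 1$ and $\langle \difmat, \psi\rangle - \acc \|\psi\|_1 \ge \fnorminf(\difmat,\acc)$. Using the skew-symmetry $\difent_{(\concept,\conceptb),\ielem} = -\difent_{(\conceptb,\concept),\ielem}$, I can swap ordered pairs wherever needed to assume $\psi$ is nonnegative, and then factor it as $\psi_{(\concept,\conceptb),\ielem} = \lambda_{\concept,\conceptb}\, \mu_{\concept,\conceptb}(\ielem)$ with $\lambda_{\concept,\conceptb} \ge 0$ and each $\mu_{\concept,\conceptb}$ a probability distribution on $\iuni$. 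For each pair with positive weight, define $\dista^\pm_{\concept,\conceptb}$ on $\iuni \times \oo$ with marginal $\mu_{\concept,\conceptb}$ on $\iuni$ and conditional $\Pr[\lab = +1 \mid \ielem] = (1 \pm \acc\, \difent_{(\concept,\conceptb),\ielem})/2$. A direct calculation shows that $\concept$ strictly beats $\conceptb$ by a loss margin $\Theta(\acc\, p_{\concept,\conceptb})$ under $\dista^+_{\concept,\conceptb}$, where $p_{\concept,\conceptb} = \Pr_{\ielem \sim \mu_{\concept,\conceptb}}[\concept(\ielem) \ne \conceptb(\ielem)]$, and symmetrically under $\dista^-_{\concept,\conceptb}$. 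Choosing $\acc' = \Theta(\acc / \log(1/\acc))$ small enough, any $(\acc',\conf)$-agnostic learner must, with probability at least $1 - 2\conf$, return a hypothesis $\hyp$ such that the test statistic $T_{\concept,\conceptb}(\hyp) := \sum_\ielem \mu_{\concept,\conceptb}(\ielem)\, \hyp(\ielem)\, \difent_{(\concept,\conceptb),\ielem} \in [-1,+1]$ has the correct sign and magnitude $\Omega(p_{\concept,\conceptb})$.

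The last step combines these distinguishing requirements into a single constraint on any non-interactive $\priv$-LDP protocol. Let $\Delta_{\concept,\conceptb}$ be the gap in the expected value of $T_{\concept,\conceptb}(\mech(\ds))$ between runs on $\dista^+_{\concept,\conceptb}$ and on $\dista^-_{\concept,\conceptb}$. Summing the guarantees from the previous step weighted by $\lambda_{\concept,\conceptb}$ gives $\sum_{(\concept,\conceptb)} \lambda_{\concept,\conceptb}\, \Delta_{\concept,\conceptb} = \Omega(\langle \difmat,\psi\rangle) = \Omega(\fnorminf(\difmat,\acc))$, absorbing the $\acc\|\psi\|_1$ slack from approximate duality into the $\log(1/\acc)$ factor in $\acc'$. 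On the other hand, each $\Delta_{\concept,\conceptb}$ can be written as a bilinear form between the per-sample deviations of the local randomizers' outputs (bounded in $\ell_2$ by $O(\priv)$ via an $\priv$-LDP chi-squared inequality) and per-pair test vectors; pairing these through the factorization of $\psi$ witnessing $\fnorminf^*(\psi) \le 1$, and applying Cauchy--Schwarz over both the sample and the pair axes, gives the LDP upper bound $\sum_{(\concept,\conceptb)} \lambda_{\concept,\conceptb}\, \Delta_{\concept,\conceptb} = O(\priv\, \acc\, \sqrt{\dsize})$. Combining the two estimates yields $\fnorminf(\difmat,\acc) = O(\priv\, \acc\, \sqrt{\dsize})$, which is equivalent to the claimed $\dsize = \Omega(\fnorminf(\difmat,\acc)^2 / (\priv^2 \acc^2))$.

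The main technical obstacle is this LDP upper bound on $\sum_{(\concept,\conceptb)} \lambda_{\concept,\conceptb}\, \Delta_{\concept,\conceptb}$: a naive per-pair Pinsker-type bound uses only the marginal $\chi^2$ between each $(\dista^+,\dista^-)$ pair and ignores the factorization structure, yielding only $\dsize = \Omega(1/(\priv^2 \acc^2))$. The correct argument must route the per-sample $\ell_2$ deviations through the factorization of $\psi$ so that $\fnorminf^*(\psi) \le 1$ appears in the final bound with the right power, producing the factorization norm squared. This is the pair-indexed analogue of the chi-squared tensorization argument used in the refutation lower bound of \cite{factorization}. A secondary technical point is tracking the $\log(1/\acc)$ loss in accuracy, which arises from balancing the $\acc \|\psi\|_1$ slack term in approximate duality against the distinguishing margin of the hard pairs.
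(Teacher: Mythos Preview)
Your high-level plan---dual witness for $\fnorminf(\difmat,\acc)$, a family of paired hard distributions indexed by $(\concept,\conceptb)$, and an information-theoretic bound on any $\priv$-LDP protocol---matches the paper's. But the specific construction of the hard distributions creates a gap in the crucial ``LDP upper bound'' step.

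With your noisy-label construction $\Pr[\lab=+1\mid \ielem]=(1\pm\acc\,\difent_{(\concept,\conceptb),\ielem})/2$, the row of the difference matrix $\dista^+_{\concept,\conceptb}-\dista^-_{\concept,\conceptb}$ at $(\ielem,\lab)$ equals $\lab\,\acc\,\mu_{\concept,\conceptb}(\ielem)\,\difent_{(\concept,\conceptb),\ielem}$. After weighting by $\lambda_{\concept,\conceptb}$, the matrix whose $\fnorminf^*$ controls the KL/chi-squared bound (this is what Lemma~\ref{lm:kl-div} together with Lemma~\ref{lm:gammastar-inftyto2-nonuniform} actually uses) is essentially the Hadamard product $\psi\circ\difmat$, not $\psi$. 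The Schur-multiplier inequality only gives $\fnorminf^*(\psi\circ\difmat)\le \fnorminf(\difmat)\,\fnorminf^*(\psi)$, which is circular: $\fnorminf(\difmat)$ is precisely the quantity you are trying to lower bound. Your ``bilinear form / Cauchy--Schwarz through the factorization of $\psi$'' sketch does not explain how to avoid this, and I do not see a way to route the argument so that $\fnorminf^*(\psi)\le 1$ (rather than $\fnorminf^*(\psi\circ\difmat)$) is what appears in the bound. Relatedly, the entry-wise pair-swap you use to make $\psi$ nonnegative does not preserve $\fnorminf^*(\psi)$ in general: it amounts to bounding $\fnorminf^*(\psi^+)+\fnorminf^*(\psi^-)$ by $\fnorminf^*(\psi)$, which fails.

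The paper's fix is exactly at this point. It does \emph{not} take $\psi$ nonnegative and does \emph{not} use noisy labels. Instead it splits the dual witness into $U^+,U^-$ and defines, for each pair, two distributions with the \emph{same} marginal on $\iuni$ and \emph{opposite deterministic} labels $s=\operatorname{sign}(u_{(\concept,\conceptb),\cdot})$ and $-s$. Then the difference-of-distributions row is literally (a constant times) the row of $U$, so Lemmas~\ref{lm:kl-div} and~\ref{lm:gammastar-inftyto2-nonuniform} apply directly to $\fnorminf^*(U)$ with no Hadamard product. The distinguishing step is then handled by the identity $\loss_{\dista}(\hyp)+\loss_{\distb}(\hyp)=1$ for any $\hyp$ (Lemma~\ref{lm:tv}), which forces a good agnostic learner to separate the two distributions for every pair in the support. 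The passage from the average-case gap $\difmat\bullet U>\acc$ to a worst-case gap over the support---and hence the $\log(1/\acc)$ loss in $\acc'$---is done by the geometric binning of Lemma~\ref{lm:binning}; as you note, your proposal as written would need $\acc'=O(\acc^2)$ without such a step.
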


The rest of this section is devoted to the proof of  Lemma~\ref{lm:aglearnlb}.

%\ref{sec:kl}, \ref{sec:harddistros}
%and \ref{sec:aglb}.

\subsection{KL-divergence bound}
\label{sec:kl}

%First, we introduce the information-theoretic machinery
%which will be used to obtain our lower bound.

For a mechanism
$\mech:\uni^\dsize \rightarrow \outspace$
in the local model,
and a probability distribution $\nu$ on
$\uni^\dsize$,
we use $\trans(\nu)$
to denote the distribution
of the mechanism's transcript
when its input is sampled from $\nu$. The following information
theoretic lemma will be used to obtain our lower bounds.
%Consider the following result
%proved in \cite{factorization}.

\begin{lemma}\label{lm:kl-div} \cite{factorization}
Let $\priv \in (0,1]$, and let
$\mech:\uni \rightarrow \outspace$
be a non-interactive $\priv$-LDP protocol.
Then, for families $\{\dista_1,\dots,\dista_\qsize\}$ and
$\{\distb_1,\dots,\distb_\qsize\}$ of distributions on $\uni$,
together with a distribution $\distv$ over $[\qsize]$,
\begin{align*}
    %& \div(\trans(\dista_\distv^\dsize)\|\trans(\distb_\distv^\dsize)) \\
     & \ex{V \sim \distv}{\div(\trans(\dista_V^\dsize)\|\trans(\distb_V^\dsize))} 
    \le O(\dsize \priv^2)\cdot 
    \max_{f \in \R^\uni: \|f\|_\infty \le 1} \ex{V \sim \distv}{\left(\ex{\elem \sim \dista_V}{f_\elem} - \ex{\elem \sim \distb_V}{f_\elem}\right)^2}.
\end{align*}
In matrix notation, define
the matrix $M \in \R^{[\qsize] \times \uni}$ by $m_{v, \elem} = (\dista_v(\elem) - \distb_v(\elem))$.
Then
\[
    %\div(\trans(\dista_\distv^\dsize)\|\trans(\distb_\distv^\dsize))\le 
    \ex{V \sim \distv}{\div(\trans(\dista_V^\dsize)\|\trans(\distb_V^\dsize))}
    \le O(\dsize \priv^2)\cdot \|M\|_{\ell_\infty \to L_2(\distv)}^2.
\]
\end{lemma}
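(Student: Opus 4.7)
The plan is to bound the KL divergence of the transcript by decomposing it into contributions from each of the $\dsize$ local randomizers, then for each randomizer establish a suitable $\chi^2$-style bound that makes the required structure — the maximum over $f$ outside the expectation over $V$ — visible. Let $\mech_1,\dots,\mech_\dsize$ denote the local randomizers composing $\mech$, each $\priv$-DP and acting on a single sample drawn independently from $\dista_V$ (or $\distb_V$). Because $\mech$ is non-interactive, $\trans(\dista_V^\dsize)$ is the product distribution $\prod_{i=1}^\dsize \mech_i(\dista_V)$, and similarly for $\distb_V$. By tensorization of KL divergence for product measures,
\[
    \div(\trans(\dista_V^\dsize) \| \trans(\distb_V^\dsize))
    = \sum_{i=1}^\dsize \div(\mech_i(\dista_V) \| \mech_i(\distb_V)).
\]
It therefore suffices to bound a single term $\ex{V}{\div(\mech_i(\dista_V) \| \mech_i(\distb_V))}$ by $O(\priv^2) \cdot \|M\|_{\ell_\infty \to L_2(\distv)}^2$ and then multiply by $\dsize$.

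For a fixed local randomizer $\mech_i$, write its output density as $p(z\mid\elem)$, and fix any reference point $\elem_0\in\uni$. Let $\nu(z) = p(z\mid\elem_0)$. The $\priv$-DP guarantee on $\mech_i$ gives $p(z\mid\elem)/\nu(z) \in [e^{-\priv}, e^{\priv}]$ for all $\elem,z$, so we may write
\[
   p(z\mid\elem) = \nu(z)\bigl(1 + \phi_z(\elem)\bigr), \qquad |\phi_z(\elem)| \le e^\priv - 1 = O(\priv).
\]
Let $q_V$ and $q'_V$ denote the marginal densities of $\mech_i$ under input drawn from $\dista_V$ and $\distb_V$ respectively. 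Then $q_V(z) - q'_V(z) = \nu(z)\bigl(\ex{\dista_V}{\phi_z} - \ex{\distb_V}{\phi_z}\bigr)$, and $q'_V(z) \ge e^{-\priv}\nu(z)$. Using $\div(p\|q)\le\chi^2(p\|q)$, I then obtain
\[
  \div(q_V \| q'_V) \;\le\; \int \frac{(q_V(z)-q'_V(z))^2}{q'_V(z)}\,dz
  \;\le\; e^{\priv}\int \nu(z)\bigl(\ex{\dista_V}{\phi_z} - \ex{\distb_V}{\phi_z}\bigr)^2\,dz.
\]

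The last step, which is the key maneuver of the proof, is to take expectation over $V$ and swap the order of integration via Tonelli's theorem, so that the expectation sits inside the $z$-integral. Since $\phi_z/(e^\priv-1) \in \R^\uni$ has $\ell_\infty$-norm at most $1$ uniformly in $z$, I can upper bound the integrand by the supremum over all $f$ with $\|f\|_\infty\le 1$, which no longer depends on $z$ and can be pulled outside the integral; the remaining $\int \nu(z)\,dz = 1$ disappears. This gives
\[
  \ex{V}{\div(\mech_i(\dista_V) \| \mech_i(\distb_V))}
  \;\le\; e^{\priv}(e^\priv - 1)^2 \max_{\|f\|_\infty \le 1}\ex{V}{\bigl(\ex{\dista_V}{f_\elem} - \ex{\distb_V}{f_\elem}\bigr)^2}
  \;=\; O(\priv^2)\,\|M\|_{\ell_\infty \to L_2(\distv)}^2.
\]
Summing this over the $\dsize$ local randomizers completes the proof. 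The main conceptual obstacle is the third step: a naive pointwise application of a Duchi–Jordan–Wainwright-type bound produces $\ex{V}{\max_f(\cdot)^2}$ rather than $\max_f \ex{V}{(\cdot)^2}$, so one must preserve linearity in $V$ all the way until the final supremum; representing the randomizer through the bounded, $V$-independent functions $\phi_z$ is what makes this possible.
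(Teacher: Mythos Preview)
The paper does not supply its own proof of this lemma: it cites \cite{factorization} and remarks that the present formulation ``is an immediate consequence of the original proof.'' Your argument is correct and is essentially that original proof --- tensorize KL over the $\dsize$ independent randomizers, bound each KL by a $\chi^2$ divergence, linearize each randomizer via $p(z\mid\elem)=\nu(z)(1+\phi_z(\elem))$ with $\|\phi_z\|_\infty\le e^\priv-1$, and then use Tonelli to push the expectation over $V$ inside the $z$-integral before taking the supremum over $f$. Your emphasis on the order-of-operations issue (obtaining $\max_f \ex{V}{(\cdot)^2}$ rather than $\ex{V}{\max_f(\cdot)^2}$) is exactly the point the cited reference exploits, so there is nothing substantively different to compare.
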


Note that the statement of Lemma~\ref{lm:kl-div}
is slightly different from
 the statement given in \cite{factorization}, but the result as stated here
is an immediate consequence of the original proof.

Our lower bound against agnostic learning will construct
families $\{\dista_1,\dots,\dista_\qsize\}$ and
$\{\distb_1,\dots,\distb_\qsize\}$ of distributions on
$\iuni \times \{\pm 1\}$,
as well as a distribution $\distv$ over $[\qsize]$.
The idea will be to construct these distributions
so that, when $\mech$ is an agnostic learner
for $\concepts$, then,
for any fixed $v \in [k]$,
\[
    \div(\trans(\dista_v^\dsize)\|\trans(\distb_v^\dsize))
     \ge 2\tv(\trans(\dista_v^\dsize)\|\trans(\distb_v^\dsize))^2
    \ge \Omega(1),
\]
where the first inequality is just Pinsker's inequality, and the second one will follow from our construction of $\dista_v$ and $\distb_v$.

\subsection{Duality and Hard Distributions}
\label{sec:harddistros}
%\label{sec:dual}

For the construction of hard families of distributions,
it will be convenient to make use
of the following dual formulation,
shown in \cite{factorization}.

\begin{lemma}
    Let $\difmat \in \R^{\concepts^2 \times \iuni}$
    be the difference matrix of a concept class $\concepts$,
    as given by \eqref{eq:difmatdef}.
    Then,
    \begin{equation}\label{eq:agnosticdual}
        \fnorminf(\difmat,\acc)
        = \max
            \left\{
                \frac{\difmat \bullet \dumat - \acc \|\dumat\|_1}{\fnorminf^*(\dumat)} 
                \ : \
                \dumat \in \concepts^2 \times \iuni, \
                \dumat \ne 0
            \right\}.
    \end{equation}
\end{lemma}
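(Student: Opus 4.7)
The plan is to derive the duality formula via Sion's minimax theorem (equivalently, Lagrangian duality for a convex program) applied to the defining minimization of $\fnorminf(\difmat, \acc)$. Two ingredients set things up. First, by definition $\fnorminf^*$ is the dual norm of $\fnorminf$ on the finite-dimensional space $\R^{\concepts^2 \times \iuni}$, so biduality gives the identity $\fnorminf(M) = \max\{M \bullet N : \fnorminf^*(N) \le 1\}$ valid for every matrix $M$. Second, $\|\cdot\|_{1 \to \infty}$ is just the entrywise maximum-absolute-value norm of a matrix, whose dual under the trace inner product $M \bullet N$ is the entrywise $\ell_1$ norm $\|N\|_1 = \sum_{u,v} |n_{u,v}|$.

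Writing $\widetilde{\difmat} = \difmat + E$ in the definition of the approximate norm and substituting the dual formula for $\fnorminf(\difmat + E)$ gives
\[
    \fnorminf(\difmat, \acc)
    = \min_{\|E\|_{1 \to \infty} \le \acc} \max_{\fnorminf^*(\dumat) \le 1} (\difmat + E) \bullet \dumat.
\]
Both feasible sets are convex and compact in the ambient finite-dimensional space, and the objective is bilinear in $(E, \dumat)$ hence continuous, concave in $\dumat$, and convex in $E$. Sion's minimax theorem then allows swapping the order of optimization, yielding
\[
    \fnorminf(\difmat, \acc)
    = \max_{\fnorminf^*(\dumat) \le 1} \Bigl( \difmat \bullet \dumat + \min_{\|E\|_{1 \to \infty} \le \acc} E \bullet \dumat \Bigr).
\]

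The inner minimization separates entry by entry: since every $E_{u,v} \in [-\acc, \acc]$, the optimal choice is $E_{u,v} = -\acc \cdot \mathrm{sign}(\dumat_{u,v})$, yielding inner optimum $-\acc \|\dumat\|_1$. Substituting gives $\fnorminf(\difmat, \acc) = \max_{\fnorminf^*(\dumat) \le 1} (\difmat \bullet \dumat - \acc \|\dumat\|_1)$. Positive homogeneity of $\fnorminf^*$ and $\|\cdot\|_1$, combined with linearity of $\difmat \bullet \dumat$, then converts this constrained maximum into the scale-invariant ratio in the statement: for any $\dumat \ne 0$, rescale to $\fnorminf^*(\dumat) = 1$. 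The maximum of the ratio is attained because the unit ball of $\fnorminf^*$ is compact and the numerator is continuous.

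The only nontrivial step is the minimax exchange; in our setting this is routine given the bilinear objective on a product of compact convex sets. An alternative is to view $\fnorminf(\difmat, \acc)$ as a convex program with constraint $\|E\|_{1 \to \infty} \le \acc$ and write down its Lagrangian dual directly; strong duality follows from Slater's condition, since the $\ell_\infty$ ball has nonempty interior, and after the per-entry computation above the dual simplifies to the desired expression.
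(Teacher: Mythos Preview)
Your proof is correct. The paper does not itself prove this lemma; it cites \cite{factorization} for the dual formulation, so there is no in-paper argument to compare against. Your derivation---writing $\fnorminf(\difmat+E)$ via biduality, then applying Sion's minimax theorem to the bilinear objective over the product of the compact convex balls $\{\|E\|_{1\to\infty}\le\acc\}$ and $\{\fnorminf^*(\dumat)\le 1\}$, and computing the inner minimum entrywise---is the standard route to such approximate-norm duality formulas and is presumably what the cited reference does as well. One minor remark: the final conversion from the constrained maximum $\max_{\fnorminf^*(\dumat)\le 1}(\difmat\bullet\dumat-\acc\|\dumat\|_1)$ to the ratio form tacitly uses that the optimum is nonnegative, so that it is attained on the sphere $\fnorminf^*(\dumat)=1$ rather than only at $\dumat=0$; in the paper's setting this is automatic, since the difference matrix has $\pm 1$ entries and $\acc<1$, which (as the paper observes) forces $\fnorminf(\difmat,\acc)>0$ whenever $\concepts$ contains at least two distinct concepts.
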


For an arbitrary concept class
$\concepts \subseteq \oo^\iuni$,
let $\dumat \in \R^{\concepts^2 \times \iuni}$ witness \eqref{eq:agnosticdual},
so that
\begin{equation}\label{eq:dualinst}
    \fnorminf(\difmat, \acc)
    = \frac{\difmat \bullet \dumat - \acc \|\dumat\|_1}{\fnorminf^*(\dumat)}.
\end{equation}
By normalizing $\dumat$, we may assume,
without loss of generality, that $\|\dumat\|_1 = 1$. Moreover, we can assume that, for any \(\concept,\conceptb\in \concepts\),
%\begin{equation}\label{eq:dot-nonneg}
    $\sum_{\ielem \in \iuni}\difent_{(\concept,\conceptb),\ielem} {u}_{(\concept,\conceptb),\ielem} \ge 0.$
%\end{equation} 
Otherwise, \(\dumat\) cannot achieve \eqref{eq:agnosticdual}, since we can multiply the row of \(\dumat\) indexed by \((\concept,\conceptb)\) by \(-1\), which increases \(\difmat \bullet \dumat\) and does not change \(\|\dumat\|_1\) or \(\fnorminf^*(\dumat)\).

We will consider the matrices
${U}^+, {\dumat}^-
\in \R^{\concepts^2 \times \iuni}$
with non-negative entries which satisfy
${\dumat} = {\dumat}^+ - {\dumat}^-$,
so that ${\dumat}^+$ and ${\dumat}^-$ correspond to the positive and negative
entries of $\dumat$ respectively.
We define the distribution $\distv$ on
$\concepts^2$ by
\begin{equation}\label{eq:distdefv}
    \distv(\concept,\conceptb)
    = \sum_{\ielem \in \iuni } {u}_{(\concept,\conceptb),\ielem}.
\end{equation}
Then, for $\concept,\conceptb \in \concepts$, consider
the distribution $\dista_{\concept,\conceptb}$ on $\iuni \times \oo$
given by
\begin{equation}\label{eq:distdefa}
    \dista_{\concept,\conceptb}(\ielem,1)
    = \frac{{u}^+_{(\concept,\conceptb),\ielem}}{\distv(\concept,\conceptb)},
    \qquad\qquad
    \dista_{\concept,\conceptb}(\ielem,-1)
    = \frac{{u}^-_{(\concept,\conceptb),\ielem}}{\distv(\concept,\conceptb)}
\end{equation}
Similarly, let $\distb_{\concept,\conceptb}$ be the distribution
on $\iuni \times \{\pm 1\}$ given by
\begin{equation}\label{eq:distdefb}
    \distb_{\concept,\conceptb}(\ielem,1)
    = \frac{{u}^-_{(\concept,\conceptb),\ielem}}{\distv(\concept,\conceptb)},
    \qquad\qquad
    \distb_{\concept,\conceptb}(\ielem,-1)
    = \frac{{u}^+_{(\concept,\conceptb),\ielem}}{\distv(\concept,\conceptb)}.
\end{equation}
Since ${\dumat}$ has unit $\ell_1$ norm,
the above distributions are well-defined.
Note that $\dista_{\concept,\conceptb}$ and $\distb_{\concept,\conceptb}$
have the same marginal on $\iuni$
which we denote $\kappa_{(\concept,\conceptb)}$.
In particular,
\(
    \kappa_{(\concept,\conceptb)}(\ielem)
    = \frac{|u_{(\concept,\conceptb),\ielem}|}{\distv(\concept,\conceptb)}.
\)
Meanwhile, $\dista_{\concept,\conceptb}$ always gives $\ielem$
the label $\lab = \sign(u_{(\concept,\conceptb),\ielem})$,
while $\distb_{\concept,\conceptb}$ always gives $\ielem$
the label $\lab = -\sign(u_{(\concept,\conceptb),\ielem})$.
It will be useful to have notation
for one of these labelling functions,
so define
$s_{\concept,\conceptb}:\iuni \rightarrow \oo$ by
%\[
    $s_{\concept,\conceptb}(\ielem)
    = \sign(u_{(\concept,\conceptb),\ielem}).$
%\]

Consider the following relationship between
$\dumat$ and the distributions we have constructed.
\begin{align*}
    u_{(\concept,\conceptb),\ielem}
    &= \distv(\concept,\conceptb) \left(
        \dista_{\concept,\conceptb}(\ielem,1) - \distb_{\concept,\conceptb}(\ielem,1)
    \right) 
    %&= \distv(\concept,\conceptb) \kappa_{\concept,\conceptb}(\ielem) \left(
    %    \id[s_{\concept,\conceptb}(\ielem)=1] - \id[s_{\concept,\conceptb}(\ielem)=-1]
    %\right) \\
    = \distv(\concept,\conceptb) \kappa_{\concept,\conceptb}(\ielem) s_{\concept,\conceptb}(\ielem).
\end{align*}
Note that
\begin{align}
    % \difmat \bullet \dumat
    \sum_{\ielem \in \iuni} d_{(\concept,\conceptb),\ielem} u_{(\concept,\conceptb),\ielem}
    \cut{&= \sum_{\ielem \in \iuni} \frac{1}{2} \cdot
        \left( \concept(\ielem) - \conceptb(\ielem) \right) \cdot \distv(\concept,\conceptb) \cdot
        \kappa_{\concept,\conceptb}(\ielem) \cdot s_{\concept,\conceptb}(\ielem) \\}
    &=  \distv(\concept,\conceptb)
        \cdot \sum_{\ielem \in \iuni} 
        \frac{1}{2} \cdot\kappa_{\concept,\conceptb}(\ielem) \cdot
        \left[ \concept(\ielem)s_{\concept,\conceptb}(\ielem) - \conceptb(\ielem)s_{\concept,\conceptb}(\ielem) \right] \\
    &= \distv(\concept,\conceptb) \cdot \left( \loss_{\dista_{\concept,\conceptb}}(\concept) - \loss_{\dista_{\concept,\conceptb}}(\conceptb) \right).
    \label{eq:dualdifrowdot}
\end{align}
Similarly,
\begin{equation}\label{eq:dualdifrowdotb}
    % \difmat \bullet \dumat
    \sum_{\ielem \in \iuni} d_{(\concept,\conceptb),\ielem} u_{(\concept,\conceptb),\ielem}
    = \distv(\concept,\conceptb) \cdot \left( \loss_{\distb_{\concept,\conceptb}}(\conceptb) - \loss_{\distb_{\concept,\conceptb}}(\concept) \right).
\end{equation}
Hence,
\begin{multline*}
    \difmat \bullet \dumat
    \cut{= \sum_{\concept, \conceptb \in \concepts, \ielem \in \iuni} d_{(\concept,\conceptb),\ielem} u_{(\concept,\conceptb),\ielem}}
    = \ex{(\concept,\conceptb) \sim \distv}{
        \left( \loss_{\dista_{\concept,\conceptb}}(\conceptb) - \loss_{\dista_{\concept,\conceptb}}(\concept) \right)} 
        = \ex{(\concept,\conceptb) \sim \distv}{
        \left( \loss_{\distb_{\concept,\conceptb}}(\concept) - \loss_{\distb_{\concept,\conceptb}}(\conceptb) \right)}.
\end{multline*}
\cut{and similarly
\[
    \difmat \bullet \dumat
    = \ex{(\concept,\conceptb) \sim \distv}{
        \left( \loss_{\distb_{\concept,\conceptb}}(\concept) - \loss_{\distb_{\concept,\conceptb}}(\conceptb) \right)}.
\]}

Whenever \(\concepts\) contains at least two distinct concepts, $\fnorminf(\difmat,\acc)>0$, and
then \eqref{eq:dualinst} implies $\difmat \bullet \dumat > \acc$.
By the equations above, this implies that,
on average with respect to $(\concept,\conceptb) \sim \distv$,
the loss of $\concept$ is greater by $\acc$
than the loss of $\conceptb$ on $\dista_{\concept,\conceptb}$.
Likewise, on average,
the loss of $\conceptb$ is greater by $\acc$
than the loss of $\concept$ on $\distb_{\concept,\conceptb}$.
We will see later that,
if we can obtain these properties in the worst case
over all
$(\concept,\conceptb)$,
rather than only on average,
then no hypothesis can fit both
$\dista_{\concept,\conceptb}$ and $\distb_{\concept,\conceptb}$ for any \(\concept,\conceptb \in \concepts\).
The following section modifies the distributions
we have constructed so as to obtain
the required properties.

\subsection{Lower bound derivation}
\label{sec:aglb}

To make sure that the inequalities between losses from the previous
subsection hold in the  worst-case rather than on an average, we apply
a geometric binning trick, given by the next lemma.

\begin{lemma}[\cite{factorization}]\label{lm:binning}
    Suppose that $a_1, \ldots, a_\qsize \in [0,1]$
    and that $\distv$ is a probability distribution over $[\qsize]$.
    Then for any $\conf \in (0,1]$,
    there exists a set $S \subseteq [\qsize]$ such that
    $\distv(S) \cdot \min_{v\in S}a_v \ge \frac{\sum_{v=1}^\qsize\distv(v)a_v - \conf}{O(\log(1/\conf))}$.
\end{lemma}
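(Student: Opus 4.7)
The plan is to apply a standard dyadic bucketing argument: I would partition the indices according to the scale of $a_v$, and then pick the most profitable bucket by averaging. Write $M := \sum_{v=1}^\qsize \distv(v) a_v$. If $M \le \conf$ the right-hand side of the claim is nonpositive and any nonempty $S$ suffices, so I may assume $M > \conf$.

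For each integer $j \in \{0, 1, \ldots, J\}$ with $J := \lceil \log_2(1/\conf) \rceil$, define the bucket $B_j = \{v \in [\qsize] : 2^{-(j+1)} < a_v \le 2^{-j}\}$. The indices with $a_v \le 2^{-(J+1)} \le \conf$ together contribute at most $\conf \cdot \sum_v \distv(v) = \conf$ to $M$, so the $J+1$ buckets capture at least $M - \conf$ of the weighted mass. Within $B_j$ one has $a_v \le 2^{-j}$, hence $\sum_{v \in B_j} \distv(v) a_v \le 2^{-j} \distv(B_j)$. Averaging the inequality $\sum_{j=0}^J 2^{-j} \distv(B_j) \ge M - \conf$ over the $J+1$ values of $j$, there exists some $j^*$ (taken over non-empty buckets only) for which
\[
2^{-j^*} \distv(B_{j^*}) \ge \frac{M - \conf}{J + 1}.
\]
Setting $S := B_{j^*}$ and using $\min_{v \in S} a_v \ge 2^{-(j^*+1)}$ by definition of the bucket, I obtain
\[
\distv(S) \cdot \min_{v \in S} a_v \ \ge\ \distv(B_{j^*}) \cdot 2^{-(j^*+1)} \ =\ \tfrac{1}{2} \cdot 2^{-j^*} \distv(B_{j^*}) \ \ge\ \frac{M - \conf}{2(J+1)},
\]
which is $\Omega((M-\conf)/\log(1/\conf))$ as required.

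The argument is entirely elementary, so I do not anticipate any genuine obstacle. The only points requiring a little care are bookkeeping: verifying that the buckets beyond index $J$ together contribute at most $\conf$ to $M$; handling the degenerate cases where $M \le \conf$ or where the averaging step would pick an empty bucket (one simply restricts attention to the non-empty ones, which is valid since empty buckets contribute $0$ on both sides of the averaging inequality); and tracking the constant factor of $2$ lost when replacing the upper endpoint $2^{-j^*}$ by the lower endpoint $2^{-(j^*+1)}$, which is absorbed into the $O(\log(1/\conf))$ denominator.
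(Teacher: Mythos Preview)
Your proof is correct. The paper does not actually prove Lemma~\ref{lm:binning}; it is merely cited from~\cite{factorization}, so there is no in-paper proof to compare against. Your dyadic bucketing argument is the standard way to establish this kind of ``geometric binning'' statement and is almost certainly what the cited reference does as well: partition $[0,1]$ into $O(\log(1/\conf))$ geometric scales, discard the tail below $\conf$, and pigeonhole. The bookkeeping you flag (the trivial case $M\le\conf$, empty buckets, and the factor~$2$ from passing to the lower endpoint) is handled correctly. One very minor point: your bound has denominator $2(J+1)$ with $J=\lceil\log_2(1/\conf)\rceil$, which is $O(1+\log(1/\conf))$ rather than literally $O(\log(1/\conf))$ when $\conf$ is close to~$1$; this is the usual harmless abuse and matches how the lemma is applied in the paper (always with $\conf$ bounded away from~$1$).
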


The next lemma applies Lemma~\ref{lm:binning} to our hard
distributions, 
while also introducing some properties which will be useful later on.

\begin{lemma}\label{lm:hard-distros}

Let $\concepts$ be a concept class with difference matrix $\difmat$.
Let $\dumat \in \R^{\concepts^2 \times \iuni}$,
$\|\dumat\|_1 = 1$, satisfy \eqref{eq:dualinst}.
Then there exist probability distributions
$\widetilde{\dista}_{\concept,\conceptb}$
and $\widetilde{\distb}_{\concept,\conceptb}$
over $\iuni \times \{\pm 1\}$,
and a distribution $\widetilde{\distv}$ over $\concepts^2$
such that: 
\begin{enumerate}
   \item
        For all $(\concept,\conceptb)$ in the support of $\widetilde{\distv}$,
        $
            \loss_{\widetilde\dista_{\concept,\conceptb}}(\concept) - \loss_{\widetilde\dista_{\concept,\conceptb}}(\conceptb) 
            %\ge \frac{\difmat \bullet \dumat - \acc/4}{O(\log(1/\acc))}
            \ge \frac{\acc}{O(\log(1/\acc))}
        $.\label{crit:dista}
    \item
        For all $(\concept,\conceptb)$ in the support of $\widetilde{\distv}$,
        $
            \loss_{\widetilde\distb_{\concept,\conceptb}}(\conceptb) - \loss_{\widetilde\distb_{\concept,\conceptb}}(\concept) 
            %\ge \frac{\difmat \bullet \dumat - \acc/4}{O(\log(1/\acc))}
            \ge \frac{\acc}{O(\log(1/\acc))}
        $. \label{crit:distb}
 
    \item
        The matrix
        $
            \widetilde{\dumat} \in \R^{\concepts^2 \times \iuni}
        $
        with entries
        \[
          \widetilde{u}_{v,\ielem}
          = \widetilde{\distv}(v) \cdot ( \widetilde{\dista}_v(\ielem,1) - \widetilde{\distb}_v(\ielem,1) )
          = - \widetilde{\distv}(v) \cdot ( \widetilde{\dista}_v(\ielem,-1) - \widetilde{\distb}_v(\ielem,-1) )
        \]
        satisfies
        $
            \fnorminf^\ast(\widetilde{\dumat})
            \le \frac{\acc\fnorminf^\ast(\dumat)}{\difmat\bullet\dumat}.
        $
\end{enumerate}

\end{lemma}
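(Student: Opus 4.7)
The approach will combine Lemma~\ref{lm:binning} with a convex mixing trick to simultaneously ensure the worst-case loss gap (properties~\ref{crit:dista} and~\ref{crit:distb}) and the required shrinkage of the dual $\fnorminf$ norm (the third property).

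Let $T = \difmat \bullet \dumat$, and for each $v = (\concept,\conceptb) \in \concepts^2$ set $a_v = \loss_{\dista_v}(\concept) - \loss_{\dista_v}(\conceptb)$. By the row-wise sign assumption on $\dumat$ together with \eqref{eq:dualdifrowdot} and \eqref{eq:dualdifrowdotb}, $a_v \ge 0$ and moreover $\loss_{\distb_v}(\conceptb) - \loss_{\distb_v}(\concept) = a_v$, while $\sum_v \distv(v) a_v = T \ge \acc$ by \eqref{eq:dualinst}. I will then apply Lemma~\ref{lm:binning} with confidence $\conf = T/2$ to the weighted values $(\distv, (a_v))$, obtaining $S \subseteq \concepts^2$ with $p q = \Omega(T/\log(1/\acc))$, where $p = \distv(S)$, $q = \min_{v \in S} a_v$, and where I used $T \ge \acc$ to replace $\log(2/T)$ by $O(\log(1/\acc))$.

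The plan for the construction is to define $\widetilde{\distv}(v) = \distv(v)/p$ on $S$ (and zero elsewhere), together with modified $\widetilde{\dista}_v$, $\widetilde{\distb}_v$ for $v \in S$. Naively taking $\widetilde{\dista}_v = \dista_v$ and $\widetilde{\distb}_v = \distb_v$ would yield $\widetilde{\dumat} = \dumat|_S / p$, whose $\fnorminf^*$ norm is too large for the third property. To shrink $\widetilde{\dumat}$ while keeping control of the loss gap, I will mix each of $\dista_v, \distb_v$ with a neutral noise distribution $\rho_v$, defined as the distribution on $\iuni \times \oo$ whose $\iuni$-marginal equals $\kappa_v$ and whose label, conditional on $\ielem$, is uniformly $\pm 1$. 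Since $\loss_{\rho_v}(h) = 1/2$ for every hypothesis $h$, mixing affects both competing losses identically and scales the loss gap by exactly the mixing weight. Choosing $\lambda = \acc/(C q \log(1/\acc))$ for a sufficiently large constant $C$, I set $\widetilde{\dista}_v = \lambda \dista_v + (1-\lambda)\rho_v$ and $\widetilde{\distb}_v = \lambda \distb_v + (1-\lambda)\rho_v$ for each $v \in S$.

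For properties~\ref{crit:dista} and~\ref{crit:distb}, the neutrality of $\rho_v$ gives $\loss_{\widetilde{\dista}_v}(\concept) - \loss_{\widetilde{\dista}_v}(\conceptb) = \lambda a_v \ge \lambda q = \acc/(C \log(1/\acc))$, and symmetrically on the $\distb$-side. For the third property, since $\widetilde{\dista}_v$ and $\widetilde{\distb}_v$ share the $\iuni$-marginal $\kappa_v$, a direct computation gives $\widetilde{u}_{v,\ielem} = (\lambda / p) u_{v,\ielem}$ for $v \in S$, so $\widetilde{\dumat} = (\lambda/p) \dumat|_S$. Then $\fnorminf^*(\widetilde{\dumat}) \le (\lambda/p) \fnorminf^*(\dumat|_S) \le (\lambda/p) \fnorminf^*(\dumat)$, and substituting the definition of $\lambda$ together with $pq = \Omega(T/\log(1/\acc))$ yields $\lambda/p = \acc/(C p q \log(1/\acc)) \le O(1/C) \cdot \acc/T$, giving the third property for $C$ large enough.

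The main obstacle will be striking the right balance in $\lambda$: large enough to retain an $\Omega(\acc/\log(1/\acc))$ loss gap after mixing, yet small enough that $\lambda/p$ gives the required $\acc/T$ shrinkage of the $\fnorminf^*$ norm. This balance is made possible only by the coupling $pq = \Omega(T/\log(1/\acc))$ provided by Lemma~\ref{lm:binning}. The routine remaining checks are that $\lambda \le 1$, which follows from $q \ge pq \ge \Omega(\acc/\log(1/\acc))$ for $C$ suitably large; that the mixed distributions are genuine probability measures whose $\iuni$-marginals agree (needed for the sign-flip identity implicit in the definition of $\widetilde{\dumat}$); and that zeroing out rows does not increase the dual $\fnorminf^*$ norm, which is immediate from the variational formulation.
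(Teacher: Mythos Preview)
Your proposal is correct and proves the lemma as stated. It differs from the paper's argument in the mixing step. The paper sets $\widetilde{\dista}_{\concept,\conceptb} = \dista_{\concept,\conceptb}$ unchanged and defines $\widetilde{\distb}_{\concept,\conceptb}$ as the convex combination $(1-\tau\distv(S))\dista_{\concept,\conceptb} + \tau\distv(S)\distb_{\concept,\conceptb}$ with $\tau = \acc/(\difmat\bullet\dumat)$; you instead mix \emph{both} $\dista_v$ and $\distb_v$ with a neutral random-label distribution $\rho_v$ sharing the marginal $\kappa_v$. Your symmetric construction makes properties~\ref{crit:dista} and~\ref{crit:distb} follow by the same one-line calculation $\lambda a_v \ge \lambda q$, and it preserves the equality of $\iuni$-marginals needed for the sign-flip identity in the definition of $\widetilde{\dumat}$. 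The paper's asymmetric mixing makes property~\ref{crit:dista} immediate (since $\widetilde{\dista}=\dista$), but verifying property~\ref{crit:distb} for the mixture $\widetilde{\distb}$ is less direct; indeed, the written argument there is somewhat terse on this point, and your approach sidesteps the issue entirely. Conversely, the paper's choice of $\tau$ yields the third property in one line as $\fnorminf^\ast(\widetilde{\dumat}) \le \tau\,\fnorminf^\ast(\dumat) = \frac{\acc\,\fnorminf^\ast(\dumat)}{\difmat\bullet\dumat}$, whereas you must trade off $\lambda$ against the binning output $pq = \Omega(T/\log(1/\acc))$ and absorb a constant into $C$. Both routes rest on the same two ingredients, Lemma~\ref{lm:binning} and a convex shrinkage of $\dista_v - \distb_v$; the difference is only in which direction the mixing is taken.
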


\begin{proof}%[Proof of Lemma~\ref{lm:hard-distros}]
Let $\distv$, together with $\dista_{\concept,\conceptb}$
and $\distb_{\concept,\conceptb}$,
be defined as in \eqref{eq:distdefv}, \eqref{eq:distdefa}
and \eqref{eq:distdefb}.
We will apply Lemma~\ref{lm:binning}
to the values given, for $\concept,\conceptb \in \concepts$, by
\[
    a_{\concept,\conceptb}
    = \loss_{\dista_{\concept,\conceptb}}(\concept) - \loss_{\dista_{\concept,\conceptb}}(\concept)
    = 
    \loss_{\distb_{\concept,\conceptb}}(\concept) - \loss_{\distb_{\concept,\conceptb}}(\concept).
\]

Recall that we may assume, that for all $\concept,\conceptb \in
\concepts$ we have
\[
\sum_{\ielem \in \iuni}\difent_{(\concept,\conceptb),\ielem} {u}_{(\concept,\conceptb),\ielem} \ge 0.
\]
Together with \eqref{eq:dualdifrowdot},
this gives
$a_{\concept,\conceptb} \ge 0$
for all
$\concept,\conceptb \in \concepts$.

By Lemma~\ref{lm:binning},
there exists some $S \subseteq \concepts^2$
such that
\begin{align*}
    \distv(S) \cdot \min_{(\concept,\conceptb) \in S} 
        \left(
            \loss_{\dista_{\concept,\conceptb}}(\concept) - \loss_{\dista_{\concept,\conceptb}}(\conceptb) 
        \right)&=     \distv(S) \cdot \min_{(\concept,\conceptb) \in S}\left(
            \loss_{\distb_{\concept,\conceptb}}(\conceptb) - \loss_{\distb_{\concept,\conceptb}}(\concept) 
        \right)\\
        &\ge \frac{\ex{(\concept,\conceptb) \sim \distv}{\loss_{\dista_{\concept,\conceptb}}(\concept) - \loss_{\dista_{\concept,\conceptb}}(\conceptb)} - \acc/4}{O(\log(1/\acc))}
        = \frac{\difmat \bullet \dumat - \acc/4}{O(\log(1/\acc))}.
\end{align*}

Let $\widetilde{\distv}$ be defined by
\[
  \widetilde{\distv}(\concept,\conceptb)
  = \begin{cases}
    \distv(\concept,\conceptb) / \distv(S), & \text{if } \concept,\conceptb \in S \\
    0, & \text{otherwise.}
  \end{cases}
\]
Let also \(\tau = \frac{\acc}{\difmat \bullet \dumat} \in (0,1)\).
For $(\concept,\conceptb) \in S$, let
$\widetilde{\dista}_{\concept,\conceptb} = \dista_{\concept,\conceptb}$
and
\[\widetilde{\distb}_{\concept,\conceptb}
= (1-\tau\distv(S)) \dista_{\concept,\conceptb} +  \tau\distv(S)\distb_{\concept,\conceptb}.\]
Then, for $(\concept,\conceptb) \in S$, it holds that
\[
    \widetilde{\dista}_{\concept,\conceptb} - \widetilde{\distb}_{\concept,\conceptb}
    = \tau \cdot \distv(S) \cdot \left( \dista_{\concept,\conceptb} - \distb_{\concept,\conceptb} \right)
\]
Hence, the matrix
$
    \widetilde{\dumat} 
$
defined in the statement of the lemma satisfies
\[
    \widetilde{u}_{(\concept,\conceptb),\ielem}
    =
    \begin{cases}
        \tau u_{(\concept,\conceptb),\ielem}, & \text{if }(\concept,\conceptb) \in S \\
        0, & \textnormal{otherwise.}
    \end{cases}
\]
It is easy to see from the definition
of $\fnorminf^*$ that this implies
$\fnorminf^*(\widetilde{\dumat}) \le \tau \fnorminf^*(\dumat) = \frac{\acc\fnorminf^*(\dumat)}{\difmat \bullet \dumat}$.
\end{proof}

We also want to bound the operator norm,
which appears in Lemma~\ref{lm:kl-div},
in terms of $\fnorminf^*(\dumat)$.
To do so, we use the following lemma from~\cite{factorization}.

\begin{lemma}[\cite{factorization}]\label{lm:gammastar-inftyto2-nonuniform}
  Let $\dumat$ and $M$ be $\qsize \times \usize$ matrices, and let $\distv$ be
  a probability distribution on $[\qsize]$ such that,
  for any $i \in [\qsize], j \in [\usize]$,
  we have $u_{i,j} = \distv(i)m_{i,j}$.
  Then there exists a probability distribution
  $\widehat{\distv}$ on $[\qsize]$,
  with support contained in the support of $\distv$,
  such that
    $
        \| M \|_{\ell_\infty \to L_2(\widehat{\distv})}
        \le 4\fnorminf^\ast(\dumat).
    $
\end{lemma}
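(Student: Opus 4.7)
The plan is to prove the lemma via a minimax argument combined with the dual formulation of $\fnorminf^{\ast}$ from Section~\ref{sec:norms}. Since $x \mapsto (Mx)_i^2$ is convex, for any $\widehat{\distv}$ the operator norm is attained at a vertex of $[-1,1]^\usize$, so
\[
    \inf_{\widehat{\distv}} \|M\|_{\ell_\infty \to L_2(\widehat{\distv})}^2
    = \inf_{\widehat{\distv}} \max_{x \in \{\pm 1\}^\usize} \sum_i \widehat{\distv}(i) (Mx)_i^2,
\]
where $\widehat{\distv}$ ranges over probability distributions on $\mathrm{supp}(\distv)$. Extending $x$ to distributions $\mu$ over $\{\pm 1\}^\usize$ makes the payoff bilinear, and Sion's (or von Neumann's) minimax theorem on the two compact convex simplices yields
\[
    \inf_{\widehat{\distv}} \|M\|_{\ell_\infty \to L_2(\widehat{\distv})}^2
    = \max_\mu \min_{i \in \mathrm{supp}(\distv)} \ex{x \sim \mu}{(Mx)_i^2}.
\]

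Next, I would bound the right-hand side for each fixed $\mu$. Let $P = \ex{x \sim \mu}{xx^\top}$; since $x \in \{\pm 1\}^\usize$, $P$ is positive semidefinite with ones on the diagonal and hence admits a Cholesky-type factorization $P = B^\top B$ whose columns $b_1,\ldots,b_\usize$ are unit vectors in some Hilbert space. Setting $r_i := \sum_j m_{i,j}\, b_j$, direct expansion gives $\ex{x \sim \mu}{(Mx)_i^2} = (MPM^\top)_{i,i} = \|r_i\|_2^2$, reducing the task to bounding $\min_{i \in \mathrm{supp}(\distv)} \|r_i\|_2$ in terms of $\fnorminf^\ast(\dumat)$.

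The final step feeds these unit vectors into the dual formulation of $\fnorminf^\ast$. Taking $g(j) := b_j$ and $f(i) := r_i/\|r_i\|_2$ (both have $\ell_2$ norm at most $1$), and using the hypothesis $u_{i,j} = \distv(i) m_{i,j}$,
\[
    \fnorminf^\ast(\dumat)
    \ge \sum_{i,j} u_{i,j}\, \langle f(i), g(j) \rangle
    = \sum_i \distv(i)\, \|r_i\|_2
    \ge \min_{i \in \mathrm{supp}(\distv)} \|r_i\|_2,
\]
where the last inequality holds because a weighted average over a probability distribution dominates the minimum over its support. Squaring and combining with the minimax identity gives $\inf_{\widehat{\distv}} \|M\|_{\ell_\infty \to L_2(\widehat{\distv})}^2 \le \fnorminf^\ast(\dumat)^2$, which is (more than) enough for the claimed constant~$4$.

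The main obstacle, in my view, is the careful justification of the minimax swap: one must verify that the payoff is convex in $x$ (so extending over $\mu$ does not change the supremum), linear in $\widehat{\distv}$, and that both feasible sets are compact convex after passage to distributions, while preserving the support restriction on $\widehat{\distv}$. The remaining steps—the Cholesky factorization of $P$ and the choice of dual witnesses for $\fnorminf^\ast$—are routine once the minimax formulation is in place.
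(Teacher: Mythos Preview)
Your argument is correct. Note that the present paper does not actually prove this lemma---it is quoted from \cite{factorization}---so there is no in-paper proof to compare against directly. That said, your minimax-plus-dual-$\fnorminf^\ast$ approach is exactly the natural one and, as you observe, even yields the stronger constant $1$ rather than $4$: the infimum over $\widehat{\distv}$ is attained because the simplex of distributions supported on $\mathrm{supp}(\distv)$ is compact and the objective $\widehat{\distv}\mapsto\max_{x\in\{\pm1\}^\usize}\sum_i\widehat{\distv}(i)(Mx)_i^2$ is a maximum of finitely many linear functions, hence continuous. The only cosmetic gap is the case where some $r_i=0$; there you cannot normalize, but then $\min_{i\in\mathrm{supp}(\distv)}\|r_i\|_2=0$ and the inequality is trivial (equivalently, set $f(i)$ arbitrarily, since the corresponding term contributes $\distv(i)\langle f(i),r_i\rangle=0$).
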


Recall that we also want to obtain a lower bound on
$
    \tv(\trans(\dista_{\concept,\conceptb}^\dsize)\|\trans(\distb_{\concept,\conceptb}^\dsize))
$
when $\mech$ is a learning algorithm for $\concepts$.
For this purpose, we apply the following lemma.
%whose proof is also deferred to Appendix~\ref{ap:agnostic-lb}.
The main observation in the proof is that, for any hypothesis \(\hyp:\concepts \to \{\pm 1\}\), and any distributions $\dista$ and $\distb$ satisfying the conditions of the lemma, we have \(\loss_\dista(\hyp) + \loss_\distb(\hyp)=1\).

\begin{lemma}\label{lm:tv}
    Let $\dista$ and $\distb$
    be distributions on $\iuni \times \{\pm 1\}$.
    Assume that $\dista$ and $\distb$
    have the same marginal on $\iuni$.
    Also assume that $\dista$ is labelled
    by some $s:\iuni \rightarrow \{\pm 1\}$
    while $\distb$ is labelled by $-s$.
    Finally, assume that for some $\concept,\conceptb \in \concepts$,
    $
        \loss_{\distb}(\conceptb)
        -
        \loss_{\distb}(\concept) > \acc.
    $
If $\hyp:\iuni \rightarrow \{\pm 1\}$
    satisfies
    %\[
        $ \loss_{\dista}(\hyp)
        \le \loss_{\dista}(\conceptb) + \acc/4$,
    %\]
    then
    %\[
        $\loss_{\distb}(\hyp)
        > \loss_{\distb}(\concept) + 3 \acc / 4$.
    %\]
    Hence, if $\mech$ is an algorithm
    which $(\acc/4,\conf)$-learns $\concepts$
    from $\dsize$ samples, then
    %\[
        $\tv(\mech(\dista^\dsize),\mech(\distb^\dsize)) \ge 1 - 2\conf.$
    %\]
\end{lemma}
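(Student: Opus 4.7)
\textbf{Proof plan for Lemma~\ref{lm:tv}.}
The central observation, as the authors flag, is a \emph{loss conservation identity}: since \(\dista\) and \(\distb\) share the same marginal \(\kappa\) on \(\iuni\), and the conditional labels are \(s(\ielem)\) under \(\dista\) and \(-s(\ielem)\) under \(\distb\), one has for every \(\hyp:\iuni\to\oo\)
\[
\loss_\dista(\hyp) + \loss_\distb(\hyp)
= \pr{\ielem\sim\kappa}{\hyp(\ielem)\ne s(\ielem)}
+ \pr{\ielem\sim\kappa}{\hyp(\ielem)= s(\ielem)} = 1.
\]
I would open with this identity (and its specializations to \(\hyp=\concept\) and \(\hyp=\conceptb\)), since it immediately converts hypotheses on inequalities about \(\loss_\distb\) into inequalities about \(\loss_\dista\), and vice versa. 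In particular, applying it to \(\concept\) and \(\conceptb\) and then subtracting gives
\(\loss_\dista(\concept) - \loss_\dista(\conceptb) = \loss_\distb(\conceptb) - \loss_\distb(\concept) > \acc\).

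For the first claim, suppose \(\hyp\) satisfies \(\loss_\dista(\hyp)\le \loss_\dista(\conceptb) + \acc/4\). Using the identity on \(\hyp\) and then on \(\conceptb\):
\[
\loss_\distb(\hyp)
= 1-\loss_\dista(\hyp)
\ge 1-\loss_\dista(\conceptb) - \acc/4
= \loss_\distb(\conceptb) - \acc/4.
\]
Combined with the hypothesis \(\loss_\distb(\conceptb)>\loss_\distb(\concept)+\acc\), this yields \(\loss_\distb(\hyp) > \loss_\distb(\concept) + 3\acc/4\), as required.

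For the total variation bound, let \(\mech\) be an \((\acc/4,\conf)\)-agnostic learner. Define the event
\[
E = \{\hyp : \loss_\distb(\hyp) > \loss_\distb(\concept) + 3\acc/4\}.
\]
When the input is drawn from \(\dista^\dsize\), the learner outputs some \(\hyp\) with \(\loss_\dista(\hyp)\le \min_{f\in\concepts}\loss_\dista(f) + \acc/4 \le \loss_\dista(\conceptb)+\acc/4\) with probability at least \(1-\conf\), which by the first claim puts \(\hyp\) in \(E\). When the input is drawn from \(\distb^\dsize\), the learner outputs \(\hyp\) with \(\loss_\distb(\hyp)\le \loss_\distb(\concept)+\acc/4\) with probability at least \(1-\conf\), which keeps \(\hyp\) out of \(E\). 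Hence \(\pr{}{\mech(\dista^\dsize)\in E}\ge 1-\conf\) and \(\pr{}{\mech(\distb^\dsize)\in E}\le \conf\), giving \(\tv(\mech(\dista^\dsize),\mech(\distb^\dsize))\ge 1-2\conf\). The only step that requires any care is bookkeeping the additive slack \(\acc/4\) so that the ``good'' and ``bad'' events for \(\hyp\) on \(\distb\) remain disjoint; the gap between \(\loss_\distb(\concept)+\acc/4\) and \(\loss_\distb(\concept)+3\acc/4\) is exactly what the constants in the lemma are tuned to deliver.
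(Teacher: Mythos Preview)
Your proof is correct and follows essentially the same approach as the paper: both hinge on the identity \(\loss_\dista(\hyp)+\loss_\distb(\hyp)=1\), derive the first implication by simple bookkeeping of the \(\acc/4\) slack, and then separate \(\mech(\dista^\dsize)\) from \(\mech(\distb^\dsize)\) via an event determined by the loss of the output hypothesis. The only cosmetic difference is that the paper's separating event \(A\) is phrased via the \(\dista\)-loss (\(\loss_\dista(\hyp)\le \loss_\dista(\conceptb)+\acc/4\)) whereas you phrase your event \(E\) via the \(\distb\)-loss; the first part of the lemma shows \(A\subseteq E\), so the two arguments are equivalent.
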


\begin{proof}%[Proof of Lemma~\ref{lm:tv}]
The main observation is that, since $\dista$ and $\distb$ share the same marginal on $\iuni$ but the labels are given by the functions $s$ and $-s$, for any hypothesis $\hyp:\concepts\to\{\pm 1\}$ we have $\loss_\dista(\hyp) + \loss_\distb(\hyp) = 1$. Therefore,
\begin{align*}
(\loss_\dista(\hyp) - \loss_\dista(\conceptb))
+ 
(\loss_\distb(\hyp) - \loss_\distb(\concept))
&= 
((\loss_\dista(\hyp) + \loss_\distb(\hyp))
 - (1-\loss_\distb(\conceptb))- \loss_\distb(\concept)\\
&= \loss_\distb(\conceptb)- \loss_\distb(\concept)
> \alpha.
\end{align*}
This implies that if $\loss_\dista(\hyp) - \loss_\dista(\conceptb)\le \frac{\alpha}{4}$, then $\loss_\distb(\hyp) - \loss_\distb(\concept) > \frac{3\alpha}{4}$, as required.

Suppose now that $\mech$ $(\acc/4,\conf)$-learns $\concepts$ agnostically with $\dsize$ samples.
Let 
$A\subseteq \{\pm 1\}^\iuni$ be the set of hypotheses with loss at most $\loss_\dista(\conceptb) + \acc/4$ on $\dista$. As we just showed, every hypothesis in $A$ has loss larger than $\loss_\distb(\concept) + 3\acc/4$ under $\distb$. Since
\[
\min_{\concept'' \in \concepts}  \loss_\dista(\concept'') \le \loss_\dista(\conceptb),
\ \ \ \ \ \ 
\min_{\concept'' \in \concepts}  \loss_\distb(\concept'') \le \loss_\distb(\concept),
\]
it follows from the definition of agnostic learning that $\pr{}{\mech(\dista^\dsize) \in A} \ge 1-\beta$, and $\pr{}{\mech(\distb^\dsize) \in A} \le \beta$. 
Then, by the definition of total variation,
\begin{align*}
    \tv(\mech(\dista^\dsize),\mech(\distb^\dsize))
    &\ge
    \pr{}{\mech(\dista^\dsize) \in A}
    -
    \pr{}{\mech(\distb^\dsize) \in A}
    \ge 1-2\beta,
\end{align*}
completing the proof of the lemma.
\end{proof}

Finally, with these results at our disposal,
we may obtain the lower bound of
Lemma~\ref{lm:aglearnlb}.
\vspace{-0.8em}
\begin{proof}[Proof of Lemma~\ref{lm:aglearnlb}]

    Let $\dumat \in \R^{\concepts^2 \times \iuni}$,
    $\|\dumat\|_1 = 1$,
    satisfy \eqref{eq:dualinst}.
    Let $\widetilde{\distv}$,
    together with $\widetilde{\dista}_{\concept,\conceptb}$
    and $\widetilde{\distb}_{\concept,\conceptb}$
    be the distributions guaranteed to exist
    by Lemma~\ref{lm:hard-distros} and let
    $\widetilde{\dumat} \in \R^{\concepts^2 \times \iuni}$
    be the corresponding matrix
    with entries
    \[
        \widetilde{u}_{(\concept,\conceptb), \ielem}
        = \widetilde{\distv}(\concept,\conceptb) \left(
            \widetilde{\dista}_{\concept,\conceptb}(\ielem,1)
            -
            \widetilde{\distb}_{\concept,\conceptb}(\ielem,1)
        \right)
        = - \widetilde{\distv}(\concept,\conceptb) \left(
            \widetilde{\dista}_{\concept,\conceptb}(\ielem,-1)
            -
            \widetilde{\distb}_{\concept,\conceptb}(\ielem,-1)
        \right).
    \]
    Let $M$ be the matrix with entries
    $m_{(\concept,\conceptb),\ielem}
    = \widetilde{u}_{(\concept,\conceptb),\ielem} /
        \widetilde{\distv}(\concept,\conceptb)$.
    By Lemma~\ref{lm:gammastar-inftyto2-nonuniform},
    there exists some distribution $\hat{\distv}$
    with support contained in that of $\widetilde{\distv}$
    such that
    \[
        \| M \|_{\ell_\infty \to L_2(\widehat{\distv})}
        \le 4\fnorminf^\ast(\widetilde{\dumat})
        \le \frac{4\acc\fnorminf^\ast({\dumat})}{\difmat\bullet\dumat},
    \]
    where the last inequality follows from Lemma~\ref{lm:hard-distros}.
    Combining Lemma \ref{lm:kl-div} 
    with the dual formulation \eqref{eq:dualinst} then
    gives
    \begin{align*}
        \ex{(\concept,\conceptb) \sim \widehat{\distv}}{\div(\trans(\widetilde{\dista}_{\concept,\conceptb}^\dsize)\|\trans(\widetilde{\distb}_{\concept,\conceptb}^\dsize))}
        &\le O(\dsize \priv^2)\cdot \|M\|_{\ell_\infty \to L_2(\widehat{\distv})}^2 \\
        &\le
        O(\dsize \priv^2)\cdot \left(\frac{\acc\fnorminf^*(\dumat)}{\difmat\bullet\dumat} \right)^2
        \le O(\dsize \priv^2) \cdot \left(\frac{\acc}{\fnorminf(\difmat,\acc)}\right)^2.
    \end{align*}
    
    Now let
    \[
        \acc'
        = \frac{1}{4} \left( \min_{\concept, \conceptb:
            \hat{\pi}(\concept,\conceptb)> 0}
            \loss_{\distb_{\concept,\conceptb}}(\conceptb) - \loss_{\distb_{\concept,\conceptb}}(\concept)
        \right)
         \ge \frac{\acc}{O(\log(1/\acc))},
    \]
    where the last inequality is by Lemma~\ref{lm:hard-distros}.
    By Lemma~\ref{lm:tv} and Pinsker's inequality, if $\mech$
    $(\acc',\conf)$-learns $\concepts$
    for some $\conf = \frac12 - \Omega(1)$,
    then
    \(
        \ex{(\concept,\conceptb) \sim \widetilde{\distv}}{\div(\trans(\widetilde{\dista}_{\concept,\conceptb}^\dsize)\|\trans(\widetilde{\distb}_{\concept,\conceptb}^\dsize))}
        = \Omega(1).
    \)
    This implies
    \(
        \dsize
        = \Omega \left( \left( \frac{\fnorminf(\difmat,\acc)}{\priv \acc} \right)^2 \right),
    \)
    as was to be proved.
    \cut{
    It remains to bound the right-hand side. For the arguments below, recall that \(\difmat\bullet \dumat > \acc\).

    \ul{Case 1: $\difmat \bullet \dumat \le 2\acc$.}
    In this case,
    we have $\difmat \bullet \dumat - \acc \le \acc$.
    Hence, 
    \[
        \dsize
        = \Omega \left(  \frac{\fnorminf(\difmat,\acc)^2}{\priv^2 \acc^2} \right).
    \]
    Furthermore,
    \[
        \acc'
        \ge \frac{\difmat \bullet \dumat - \acc/4}{O(\log(1/\acc)} 
        \ge \Omega \left( \frac{\acc}{\log(1/\acc)}  \right)
    \]

    \ul{Case 2: $\difmat \bullet \dumat \ge 2\acc$.}
    In this case we may consider
    alternative distributions
    $
    \widehat{\distb}_{\concept,\conceptb} = (1 - \tau) \cdot \widetilde{\dista}_{\concept,\conceptb} + \tau \cdot \widetilde{\distb}_{\concept,\conceptb}
    $
    and $\widehat{\dista}_{\concept,\conceptb} = \widetilde{\dista}_{\concept,\conceptb}$, defined for all $\concept,\conceptb \in \concepts$.

    We have
    \begin{align*}
      &\div(\trans(\widehat{\dista}_{\widehat{\distv}}^n)\|\trans(\widehat{\distb}_{\widehat{\distv}}^n)) \\
      & \le O(\dsize \priv^2) \cdot \max_{f \in \R^{\iuni \times \oo}: \|f\|_\infty \le  1} \ex{{\concept,\conceptb} \sim \widehat{\distv}}{\left(\ex{(\ielem,\lab) \sim \widehat{\dista}_{\concept,\conceptb}}{f_{(\ielem,\lab)}} - \ex{(\ielem,\lab) \sim \widehat{\distb}_{\concept,\conceptb}}{f_{(\ielem,\lab)}}\right)^2} \\
      & = O(\dsize \priv^2) \cdot \tau^2 \max_{f \in \R^{\iuni \times \oo}: \|f\|_\infty \le  1} \ex{{\concept,\conceptb} \sim \widehat{\distv}}{\left(\ex{{(\ielem,\lab)} \sim \widetilde{\dista}_{\concept,\conceptb}}{f_{(\ielem,\lab)}} - \ex{(\ielem,\lab) \sim \widetilde{\distb}_{\concept,\conceptb}}{f_{(\ielem,\lab)}}\right)^2} \\
        &\le O(\dsize \priv^2) \cdot \tau^2 \cdot \left( \frac{\difmat \bullet \dumat - \acc}{\fnorminf(\difmat, \acc)} \right)^2.
    \end{align*}
    Meanwhile, we can verify
    \[
        \loss_{\hat{\distb}_{\concept,\conceptb}}(\conceptb) - \loss_{\hat{\distb}_{\concept,\conceptb}}(\concept)
        =
        \tau \cdot
        \left( \loss_{\widetilde{\distb}_{\concept,\conceptb}}(\conceptb) - \loss_{\widetilde{\distb}_{\concept,\conceptb}}(\concept) \right)
        \ge
        \tau \cdot
        \left( \frac{\difmat \bullet \dumat - \acc/4}{O(\log(1/\acc))} \right).
    \]
    As before, taking
    \[
        \acc'
        = \frac{1}{4} \left( \min_{\concept,\conceptb}
            \loss_{\hat{\distb}_{\concept,\conceptb}}(\conceptb) - \loss_{\hat{\distb}_{\concept,\conceptb}}(\concept)
        \right)
    \]
    implies, for a mechanism which
    $(\acc',\conf)$-learns $\concepts$
    with $\conf < 1/2$, then the number of samples
    required satisfies
    \[
        \dsize
        = \Omega \left( \left( \frac{\fnorminf(\dumat,\acc)}{\priv \tau \cdot (\difmat \bullet \dumat - \acc)} \right)^2 \right)
        = \Omega \left( \left( \frac{\fnorminf(\dumat,\acc)}{\priv \tau \cdot \difmat \bullet \dumat} \right)^2 \right).
    \]
    By taking
    $\tau = \acc/\difmat \bullet \dumat$,
    we get
    \[
        \dsize
        = \Omega \left( \left( \frac{\fnorminf(\dumat,\acc)}{\priv \tau \cdot \acc} \right)^2 \right).\qedhere
    \]}
\end{proof}

\cut{\subsection{Equivalence of refutation and learning}
\snote{Remove this section}
\anote{Not sure about this section. Previous version listed a number of tasks which are characterized by $\fnorminf$. The current revision focuses on just learning and refutation. However, I worry that repeating the results from \cite{factorization} takes up too much space here. I also wonder whether the equivalences should be given more explicitly.}
\tnote{I think it should either be removed or incorporated much earlier in this section.}

In \cite{factorization},
it was shown that, given a concept class $\concepts$
with concept matrix $\qmat$,
the quantity $\fnorminf(\qmat,\acc)$
gives upper bounds on the sample complexities of both
agnostic learning and agnostic refutation of $\concepts$
under non-interactive LDP.
That work also showed a lower bound for agnostic refutation
of $\concepts$ in terms of $\fnorminf(\qmat,\acc)$.
Our lower bound for agnostic learning,
given by Theorem~\ref{thm:aglearnlbqmat},
completes this picture.

By Theorem~\ref{thm:aglearnlbqmat},
a sample-complexity upper bound for agnostic learning
implies an upper bound on $\fnorminf(\qmat,\acc)$.
Together with the upper bound from \cite{factorization}
on agnostic learning in terms of $\fnorminf(\qmat,\acc)$,
this implies a sample-complexity upper bound
for agnostic refutation.
In the case where the concept class is closed under negation,
the following corollary formalizes this implication
that, under non-interactive LDP,
agnostic learnability implies agnostic refutability.
\begin{corollary}

    Let $\concepts \subseteq \oo^\iuni$ be a concept class closed under negation.
    Let $\priv > 0$, $\acc \in (0,1]$.
    Then, for some
    \[
        \acc' = 
        \Omega\left(
            \frac{\acc}{1 + \acc} 
                \middle / \log \left(
                    \frac{ 1 + \acc }{ \acc }
                \right)
        \right),
    \]
    if there exists a mechanism
    $\mech':(\iuni \times \oo)^{\dsize'} \rightarrow \oo$
    which
    $(\acc',1-\Omega(1)$-learns $\concepts$ agnostically
    with $\dsize'$ samples,
    then, for all $\theta \in [0,1]$,
    there exists a mechanism
    $\mech_\theta:(\iuni \times \oo)^\dsize \rightarrow \oo^\iuni$
    which 
    $(\acc,\conf)$-refutes $\concepts$ agnostically
    for threshold $\theta$
    with an input of size at most
    \[
        \dsize
        = O \left( \dsize' \cdot \log(|\concepts|/\conf) \right)
    \]
\end{corollary}
An analogous result may be obtained
in the case where the concept class
is not closed under negation.
Moreover, the results of \cite{factorization},
namely the upper bound on agnostic learning
and the lower bound on agnostic refutation,
allow the derivation of a similar implication
from agnostic refutation to agnostic learning.
Altogether, these results imply that,
under non-interactive LDP,
learning and refutation require
approximately the same number of samples.
Note that this equivalence is obtained
indirectly.
It remains an open problem to show how to directly
transform an algorithm for learnability
to one for refutability, and vice versa.}

%%% Local Variables:
%%% mode: latex
%%% TeX-master: "agnostic-realizable-ldp-paper"
%%% End:

\section{Refutation versus Learning: Realizable Case}
\label{sec:realizable}

In this section, we present our algorithm for realizable learning and refutation for non-interactive LDP.
For a
concept class $\concepts:\iuni\rightarrow \oo$,
we define a quantity $\mcnew(\concepts, \acc)$
and argue that it gives an upper bound
on the sample complexity for
realizable learning of $\concepts$.

\begin{definition}\label{def:mcnew}

    Let $\concepts:\iuni\rightarrow \oo$
    be a concept class.
    Let
    \begin{equation}\label{eq:qmatset}
        K_{\concepts} = \Bigl\{
        {W} \in \R^{\concepts \times (\iuni \times \oo)} \ : \
        |{w}_{\concept,(\ielem,\concept(\ielem))}| \le \acc 
        \text{ and }
        {w}_{\concept, (\ielem,-\concept(\ielem))} \ge 1  \ \forall \concept \in \concepts, \ielem \in \iuni
        \Bigr\}.
    \end{equation}
    Let
    \begin{equation}\label{eq:qmatsetgen}
        K'_{\concepts} = \left\{
            \widetilde{W} \in \R^{\concepts \times (\iuni \times \oo)} \ : \
        \exists {W} \in K_\concepts, \
        \exists \theta \in \R^\concepts, \
        \widetilde{W} = W + \theta \mathbf{1}^T
        \right\},
    \end{equation}
    where $\mathbf{1}^T$ is the all-ones row vector
    indexed over $\concepts$,
    so that
    $ \widetilde{W} = W + \theta \mathbf{1}^T $
    is the matrix obtained by shifting each row $\concept$
    of $W$ in each entry by $\theta_\concept$.

    Then define
    \anote{Do we have a name for this quantity?}
    \[
        \mcnew(\concepts, \acc) = 
        \min\Bigl\{
            \fnorminf (\widetilde{W}): \widetilde{W} \in K_{\concepts}' \Bigr
        \}.
    \]

\end{definition}

The idea is that each row of ${W}$
defines a statistical query corresponding to a concept,
$\query_\concept(\ielem,\lab)
= w_{\concept,(\ielem,\lab)} $.
The statistical query corresponding to the true concept that was used to label the data will have a small value, whereas any query corresponding to a concept with large loss will have a large value. The next theorem formalizes this argument.
\cut{Intuitively, this works
because $W$ assigns a penalty
for each labeled sample,
and the penalty is at most $\acc$
for correctly labeled samples, and at
least $1$ for incorrectly labeled ones.
Moreover, if $\widetilde{W}$
is obtained from $W \in K_{\concepts}$
by translating every row $\concept$ of $W$
by some $\theta_\concept \in \R$ in each dimension,
then answering the queries given by $\widetilde{W}$
allows us to answer the queries given by $W$ by just shifting the query answers.}

\begin{theorem}\label{thm:realub}

    Let $\concepts \subseteq \oo^\iuni$ be a concept class.
    Let $\priv > 0$, $\acc,\conf \in (0,1]$.
    Then there exists an \(\priv\)-LDP mechanism
    which may be used to both
    $(3\acc,\conf)$-learn $\concepts$ realizably
    and $(3\acc,\conf)$-refute $\concepts$ realizably
    \anote{Mechanism is for both learning and refutation. Is the phrasing here ok? Another option would be to include a definition in the prelims for a "refuting learner", i.e. a mechanism that does both.}\snote{I think it's fine as is.}
    with $\dsize$ samples, where
    \[
      \dsize = O \left(
          \frac{\mcnew(\concepts, \acc) \cdot
          \log(|\concepts|/\conf)}{\eps^2 \acc^2}
      \right).
    \]
\end{theorem}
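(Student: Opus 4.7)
The plan is to realize $\mcnew(\concepts, \acc)$ as the cost of an LDP protocol that estimates, for every concept, a carefully chosen surrogate for its loss, and to read off both a hypothesis and a refutation decision from those estimates. First I would fix $\widetilde{W} = W + \theta \mathbf{1}^T \in K'_{\concepts}$ achieving $\fnorminf(\widetilde{W}) = \mcnew(\concepts,\acc)$, and interpret each row of $\widetilde{W}$ as a bounded linear query $\tilde{q}_\concept$ on $\iuni \times \{\pm 1\}$. The workhorse is the LDP factorization mechanism of \cite{factorization} applied to the matrix $\widetilde{W}$: with the stated sample size it estimates every $\tilde{q}_\concept(\dista)$ up to additive error $\acc/3$ simultaneously with probability $1-\conf$, via a single factorization $\widetilde{W}=RA$ and a union bound over the $|\concepts|$ queries. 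Subtracting the publicly known shifts $\theta_\concept$ then yields $\hat{E}_\concept$, an $\acc/3$-approximation of $E_\concept := \ex{(\ielem,\lab)\sim\dista}{w_{\concept,(\ielem,\lab)}}$, namely the expected per-sample penalty assigned to $\concept$ by the corresponding row of $W$.

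The key calibration step converts $E_\concept$ into information about $\loss_\dista(\concept)$. Writing $p = \loss_\dista(\concept)$ and decomposing the expectation according to whether $\lab = \concept(\ielem)$, the entrywise constraints defining $K_\concepts$ give
\[
    E_\concept \ge (1-p)(-\acc) + p \cdot 1 = p(1+\acc) - \acc,
\]
together with $|E_\concept| \le \acc$ in the exactly realizing case $p = 0$. Hence any concept that realizes $\dista$ satisfies $E_\concept \le \acc$, while any concept with $\loss_\dista(\concept) \ge 3\acc$ satisfies $E_\concept \ge 2\acc + 3\acc^2$. The resulting gap of roughly $\acc$ safely exceeds the $\acc/3$ estimation error.

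This gap drives both the learner and the refuter under the same protocol. For learning, return a concept $\concept^\ast$ minimizing $\hat{E}_\concept$: in the realizable case the true concept $\concept_0$ satisfies $\hat{E}_{\concept_0} \le \acc + \acc/3$, so $E_{\concept^\ast} \le 5\acc/3$, and inverting the inequality above gives $\loss_\dista(\concept^\ast) \le (5\acc/3 + \acc)/(1+\acc) \le 3\acc$. For refutation, threshold $\min_\concept \hat{E}_\concept$ at $3\acc/2$: realizable distributions force $\min_\concept \hat{E}_\concept \le 4\acc/3 < 3\acc/2$, while if every $\hyp \in \oo^\iuni$ has loss at least $3\acc$ then in particular every $\concept \in \concepts$ does, yielding $\hat{E}_\concept \ge 2\acc - \acc/3 > 3\acc/2$ for all $\concept$.

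The main obstacle is primarily bookkeeping rather than a new technical idea. One has to verify that the row shift $\theta \mathbf{1}^T$ is invisible to the privacy analysis, since each $\theta_\concept$ enters only as a deterministic post-processing constant applied to the output of the mechanism run on $\widetilde{W}$, and that the factorization mechanism delivers simultaneous $\acc/3$-accuracy for all $|\concepts|$ queries with the claimed sample complexity via standard Gaussian tail bounds combined with the $\fnorminf$ factorization. The novel ingredient is really just the definition of $K'_{\concepts}$ and the loss-gap computation above, which together certify that $\mcnew(\concepts,\acc)$ is the right quantity.
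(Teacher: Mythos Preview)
Your proposal is correct and follows essentially the same approach as the paper: both pick a witnessing $\widetilde{W}=W+\theta\mathbf{1}^T\in K'_\concepts$, run the factorization mechanism of \cite{factorization} on $\widetilde{W}$, subtract the known shifts $\theta_\concept$ to obtain estimates of $E_\concept=\ex{(\ielem,\lab)\sim\dista}{w_{\concept,(\ielem,\lab)}}$, and then use the defining constraints of $K_\concepts$ to derive an $\Omega(\acc)$ gap between the realizable case ($E_\concept\le\acc$) and the high-loss case ($E_\concept\ge 2\acc$) that is resolved by the query accuracy. The only cosmetic differences are your use of accuracy $\acc/3$ versus the paper's $\acc/2$, and your slightly sharper bound $E_\concept\ge p(1+\acc)-\acc$ versus the paper's $E_\concept\ge p-\acc$; neither affects the argument.
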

\begin{proof}
    As per Definition~\ref{def:mcnew},
    let $\widetilde{W} \in K_{\concepts}'$ be the matrix that witnesses $\mcnew(\concepts,\acc)$
    and
    let ${W} \in K_{\concepts}$ and \(\theta \in \R^\concepts\) be the matrix and vector which
    witness $\widetilde{W} \in K_{\concepts}'$.
    If we can answer the statistical queries given by 
    $\widetilde{W}$,
    then we can answer the queries given by ${W}$
    with the same accuracy by subtracting \(\theta_\concept\) from the answer to the query for concept \(\concept\).
    %The workload ${W}$ of statistical queries on $\iuni \times \oo$ includes one query per concept.

    By the definition of ${W}$,
    if, for some $\concept \in \concepts$,
    $\dista$ is supported
    on those $(\ielem,\lab) \in \iuni \times \oo$
    which satisfy $\concept(\ielem) = \lab$,
    then the value of the query
    corresponding to $\concept$ is bounded as
    \[
        \ex{(\ielem,\lab) \sim \dista}{
            {w}_{\concept,(\ielem, \lab)} }
        = \ex{(\ielem,\lab) \sim \dista}{
            {w}_{\concept,(\ielem, \concept(\ielem))} }
        \le \acc.
    \]
    Meanwhile,
    for an arbitrary distribution $\dista$
    on $\iuni \times \oo$,
    the value of the query corresponding to
    $\concept \in \concepts$
    may be bounded as
    \begin{equation}\label{eq:errorquery}
        \ex{(\ielem, \lab) \sim \dista}{{w}_{\concept,(\ielem, \lab)}} 
        \ge  \pr{(\ielem, \lab)  \sim \dista}{\lab \neq \concept(\ielem)} 
        - \alpha \cdot \pr{(\ielem, \lab) \sim \dista}{\lab = \concept(\ielem)}
        \ge \loss_\dista(\concept) - \alpha.
    \end{equation}
    In particular, if
    $
        \loss_\dista(\concept) \ge 3\alpha
    $,
    then
    %\[
        $\ex{(\ielem, \lab) \sim \dista}{{w}_{\concept,(\ielem, \lab)}} 
        \ge 2\alpha$.
   %\]

    % Thus, whenever
    % \[
    %     \min_{\conceptb \in \concepts}
    %     \ex{(\ielem, \lab) \sim \dista}{{w}_{\conceptb,(\ielem, \lab)}} 
    %     < 2\alpha,
    % \]
    % our algorithm should return

    It follows that,
    by approximating the statistical queries given by $W$
    with worst-case error $\frac{\acc}{2}$,
    we can distinguish the case where
    $\dista$ agrees with some $\concept \in \concepts$
    from the case where,
    for all concepts
    $\concept \in \concepts$,
    $
        \loss_\dista(\concept) \ge 3\alpha
    $.
    In the former case,
    returning some $\conceptb \in \concepts$
    where our estimate of
    %\[
        $\ex{(\ielem, \lab) \sim \dista}{{w}_{\concept,(\ielem, \lab)}}$ 
    %\]
    guarantees that it is strictly less than $2\alpha$ implies that
    $
        \loss_\dista(\concept) < 3\alpha
    $.

    To complete the proof,
    it suffices to apply the upper bound from
    \cite{factorization}
    %% right ref? does not use approximate factorization.
    which says that,
    to answer the collection of statistical queries
    given by $\widetilde{\qmat}$
    under non-interactive $\priv$-LDP,
    with accuracy $\frac{\acc}{2}$
    and probability of failure at most $\conf$,
    the number of samples required is at most
    \[
        O \left(
            \frac{\fnorminf(\widetilde{\qmat})\log(|\concepts|/\conf)}{\eps^2 \acc^2}
        \right)
        =
        O \left(
            \frac{\mcnew(\concepts,\acc)\log(|\concepts|/\conf)}{\eps^2 \acc^2}
        \right).\qedhere
    \]
\end{proof}

\subsection{Lower bound}

% introduce distinguishing task %

Our lower bound will follow a similar strategy as in the agnostic case. However, our construction of hard distributions will be tailored to \(\mcnew(\concepts,\acc)\) and its dual.
\cut{construct
families $\{\dista_1,\dots,\dista_\qsize\}$ and
$\{\distb_1,\dots,\distb_\qsize\}$ of distributions on $\iuni = \iuni \times \oo$
as well as a distribution $\distv$ over ${\qsize}$.
For any $v \in [\qsize]$, let $\dista_v^\dsize$ be the product
distribution induced by sampling $n$ times independently from
$\dista_v$.\cut{, and let $\dista_\distv^\dsize$ be the mixture $\sum_{v =
  1}^{\qsize}\distv(v) \dista^\dsize_v$. Define $\distb_v^\dsize$ and
$\distb_\distv^\dsize$ analogously. Note that $\dista_\distv^\dsize$ and
$\distb_\distv^\dsize$ are \emph{not} product distributions, but mixtures
of such distributions. }

Suppose our distributions met the following criteria:
\begin{equation}\label{eq:fit}
    \forall v \in [\qsize], \
    \exists \concept \in \concepts, \
    \loss_{\dista_v}(c) = 0
\end{equation}
\begin{equation}\label{eq:nofit}
    \forall v \in [\qsize], \
    \forall \hyp:\iuni \to \{\pm 1\} \
    \loss_{\distb_v}(\hyp) \ge \acc
\end{equation}
Then an algorithm $\mech$ able to
$(\acc,\conf)$-refute $\concepts$ realizably
is able to distinguish
$\dista_\distv^\dsize$ from
$\distb_\distv^\dsize$ with probability $1 - \conf$.
When $1 - \conf = \Omega(1)$,
this implies
\[
    \tv(\trans(\dista_v^\dsize)\|\trans(\distb_v^\dsize))
    \ge \Omega(1),
\]
and hence
\[
    \div(\trans(\dista_v^\dsize)\|\trans(\distb_v^\dsize))
    \ge \Omega(1).
\]
This allows us again to apply Lemma~\ref{lm:kl-div},
giving the sample complexity lower bound
\[
    \dsize =
    \Omega \left(
        \frac{ 1 }{ \priv^2 \cdot \|M\|_{\ell_\infty \to L_2(\distv)}^2 }
    \right).
\]
where
$M \in \R^{[\qsize] \times \iuni}$
is the matrix with entries
$m_{v, (\ielem,\lab)}
= \dista_v(\ielem,\lab) - \distb_v(\ielem,\lab)$.
This motivates us to define
the distributions
$\dista_1,\dots,\dista_\qsize$,
$\distb_1,\dots,\distb_\qsize$,
and $\distv$
so as to guarantee \eqref{eq:fit}
\eqref{eq:nofit} while,
at the same time,
keeping $\|M\|_{\ell_\infty \to L_2(\distv)}^2$
as small as possible.
}
\subsubsection{Duality}

We will again use convex duality in our lower bound. We will express $\mcnew(\concepts, \acc)$ as a maximum over dual matrices $U$, and we will use an optimal $U$ to construct
`hard distributions' for realizable refutation.
To this end, consider the following duality lemma,
proved in Appendix~\ref{ap:realizable-lb}.

\begin{lemma}\label{lm:realizable-dual}

    For any concept class
    $\concepts \subseteq \oo^\iuni$
    and any $\acc$, 
    \begin{equation}\label{eq:realizabledual}
    \mcnew(\concepts, \acc) 
    = 
    \max_{U \in S_\concepts} \frac
      {
          \sum_{\concept \in \concepts, \ielem \in \iuni} ( u_{\concept, (\ielem,  -\concept(\ielem))}
          - \acc |u_{\concept, (\ielem,  \concept(\ielem))}| )
      }
      {\fnorminf^\ast(U)},
    \end{equation}
    where we define
    \begin{align*}
        S_\concepts := \bigg\{ U \in \R^{\concepts \times (\iuni \times \oo)} \ : \
            \forall \concept \in \concepts, \
            \sum_{\ielem \in \iuni} ( u_{\concept,(\ielem,\concept(\ielem))} + u_{\concept,(\ielem,-\concept(\ielem))} ) = 0 &
            \\ \text{and, }
            \forall \concept \in \concepts, \forall \ielem \in \iuni, \
            u_{\concept,(\ielem,-\concept(\ielem))} \ge 0
        &\bigg\}.
    \end{align*}

\end{lemma}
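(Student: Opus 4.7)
The plan is to derive the identity by a standard convex-duality/minimax argument. First I would invoke the norm-dual characterization \(\fnorminf(\widetilde{W}) = \max\{U\bullet \widetilde W : \fnorminf^\ast(U)\le 1\}\), so that
\[
\mcnew(\concepts,\acc) = \min_{\widetilde W\in K_\concepts'} \; \max_{U:\fnorminf^\ast(U)\le 1} U\bullet \widetilde W.
\]
I would then apply Sion's minimax theorem to exchange the min and max. The unit ball \(\{U:\fnorminf^\ast(U)\le 1\}\) is compact and convex in the finite-dimensional ambient space, \(K_\concepts'\) is convex, and the pairing \((U,\widetilde W)\mapsto U\bullet \widetilde W\) is bilinear (hence continuous and quasi-concave/quasi-convex on each side), so the hypotheses of Sion's theorem are met and the swap is valid.

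The main computation is then the inner infimum \(\inf_{\widetilde W\in K_\concepts'} U\bullet \widetilde W\). Parametrizing \(\widetilde W = W + \theta \mathbf{1}^T\) with \(W\in K_\concepts\) and \(\theta\in\R^\concepts\), the coefficient of each \(\theta_\concept\) in the pairing is \(\sum_{\ielem,\lab} u_{\concept,(\ielem,\lab)}\); forcing this quantity to vanish (so that the infimum over \(\theta\) is finite) gives exactly the first defining condition of \(S_\concepts\). Next I would split \(U\bullet W\) according to whether a label agrees with \(\concept(\ielem)\) or not. The ``correctly labeled'' entries satisfy \(|w_{\concept,(\ielem,\concept(\ielem))}|\le\acc\) and contribute \(-\acc|u_{\concept,(\ielem,\concept(\ielem))}|\) at the minimum (attained at \(w=-\acc\sign(u)\)). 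The ``incorrectly labeled'' entries satisfy \(w_{\concept,(\ielem,-\concept(\ielem))}\ge 1\) and are finitely minimized only when \(u_{\concept,(\ielem,-\concept(\ielem))}\ge 0\), at which point the minimum is attained at \(w=1\); this second feasibility condition is precisely the remaining constraint in \(S_\concepts\).

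To finish, I would collect these facts: the inner infimum equals \(\sum_{\concept,\ielem}( u_{\concept,(\ielem,-\concept(\ielem))} - \acc\, |u_{\concept,(\ielem,\concept(\ielem))}| )\) when \(U\in S_\concepts\), and \(-\infty\) otherwise. The outer max therefore ranges over \(U\in S_\concepts\) subject to \(\fnorminf^\ast(U)\le 1\); since both the objective and the constraint \(\fnorminf^\ast(U)\le 1\) are positively homogeneous in \(U\), rescaling turns this constrained maximum into the ratio form \eqref{eq:realizabledual}. The main obstacle is justifying the minimax swap rigorously; once Sion's theorem is seen to apply, the rest is a direct LP-style calculation in which each defining inequality of \(K_\concepts\) and the shift freedom of \(K_\concepts'\) map transparently to a dual feasibility condition on \(U\).
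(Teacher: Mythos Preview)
Your proposal is correct and follows essentially the same convex-duality route as the paper. The only cosmetic difference is the tool you invoke at the top level: you use Sion's minimax theorem to swap the min over $K_\concepts'$ with the max over the $\fnorminf^\ast$-unit ball, whereas the paper introduces a threshold parameter $t$, observes that $\mcnew(\concepts,\acc)>t$ iff the compact convex set $\{G:\fnorminf(G)\le t\}$ is disjoint from $K_\concepts'$, and then applies the hyperplane separation theorem. After that initial step the two arguments are line-for-line identical: both compute $\inf_{\widetilde W\in K_\concepts'} U\bullet\widetilde W$ by first minimizing over the shift $\theta$ (forcing each row of $U$ to sum to zero), then coordinatewise over $W\in K_\concepts$ (yielding $-\acc|u_{\concept,(\ielem,\concept(\ielem))}|$ on the bounded entries and forcing $u_{\concept,(\ielem,-\concept(\ielem))}\ge 0$ on the half-line entries), and both finish by homogeneity to pass to the ratio form.
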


\subsubsection{Hard distributions}\label{sec:realizable-hard}

Let $U \in \R^{\concepts \times (\iuni \times \oo)}$ witness \eqref{eq:realizabledual}.
By normalizing, we may assume without loss of generality that $\|U\|_1 = 1$.
We will consider the matrices $U^+, U^- \in \R^{m \times N}$
with non-negative entries which satisfy $U = U^+ - U^-$
so that $U^+$ and $U^-$ correspond to the positive and negative
entries of $U$ respectively.
We define the distribution $\distv$ on $\concepts$ given by
%\[
    $\distv(\concept) = \sum_{(\ielem,\lab) \in \iuni \times \oo} |u_{\concept,(\ielem,\lab)}|.$
%\]
Then, for each $\concept \in \concepts$,
let $\dista_\concept$ and $\distb_\concept$ be the distributions on $\iuni \times \oo$
given by
\[
    \dista_\concept(\ielem,\lab)
    = \frac{2u^+_{\concept,(\ielem,\lab)}}{\distv(\concept)} 
    \qquad\text{and}\qquad
    \distb_\concept(\ielem,\lab)
    = \frac{2u^-_{\concept,\ielem,\lab}}{\distv(\concept)}.
\]
Since the rows of $U$ each sum to zero
and the row corresponding to $\concept$ has 
 $\ell_1$ norm $\distv(\concept)$,
the distributions $\dista_\concept$ and $\distb_\concept$ are well-defined.
Moreover,
since
$u_{\concept,(\ielem,-\concept(\ielem))} \ge 0$ for all $\concept \in \concepts, \ielem \in \iuni$,
the only negative entries of $U$ are those of the form $u_{\concept,(\elem,\concept(\elem))}$.
This implies that the distribution $\distb_\concept$
always labels samples $\ielem \in \iuni$
by $\concept(\ielem)$.

\subsubsection{Warm-up: single-concept case}
\label{sec:warmup}

%% express as proposition

Consider the case where $\concepts$ consists of a single concept
$\concept$. 
Since $\mcnew(\concepts,\acc) > 0$, %% why?
then \eqref{eq:realizabledual} implies
\begin{equation}\label{eq:sepcriteria}
    \sum_{\ielem \in \iuni} u_{\concept,(\ielem,-\concept(\ielem))}
     > \sum_{\ielem \in \iuni} \acc | u_{\concept,(\ielem,\concept(\ielem))} |.
\end{equation}
Hence,
\[
    \pr{(\ielem,\lab) \sim \dista_\concept}{\concept(\ielem) \ne \lab}
    -
    \pr{(\ielem,\lab) \sim \distb_\concept}{\concept(\ielem) \ne \lab}
    > \acc \cdot \left(
        \pr{(\ielem,\lab) \sim \dista_\concept}{\concept(\ielem) = \lab}
        +
        \pr{(\ielem,\lab) \sim \distb_\concept}{\concept(\ielem) = \lab}
    \right).
\]
Using
\begin{equation}\label{eq:realnofit}
    \pr{(\ielem,\lab) \sim \distb_\concept}{\concept(\ielem) = \lab} = 1
\end{equation}
and rearranging, this gives
\[
    \loss_{\dista_\concept}(\concept)
    = \pr{(\ielem,\lab) \sim \dista_\concept}{\concept(\ielem) \ne \lab}
    > \frac{2 \acc}{1 + \acc}.
\]
In other words,
if we can distinguish a distribution on $\iuni \times \oo$
which labels samples according to $\concept$
from one which disagrees with $\concept$ with probability greater than
$\frac{2\acc}{1 + \acc}$,
then we can distinguish between $\dista_\concept$ and $\distb_\concept$.

\subsubsection{General case}

\cut{%
Appendix~\ref{ap:realizablegencase}
generalizes the lower bound of the previous section
to the general case
where the concept class is not restricted
to a single concept.

The first issue which needs to be addressed
in the general case
is that \eqref{eq:sepcriteria},
rather than holding in worst case over all concepts,
holds on average for a concept
$\concept \in \concepts$
drawn from the distribution $\distv$.
This issue is handled by
applying the binning result of Lemma~\ref{lm:binning}.

The second issue which needs to be addressed
is that, while each $\concept \in \concepts$
is guaranteed not to fit the corresponding distribution
$\dista_\concept$, as with \eqref{eq:realnofit},
it may hold that some other $\hyp:\iuni \rightarrow \oo$
has small loss on $\dista_\concept$.
This is remedied by mixing
a distribution $\sigma_\concept$
which agrees with $\concept$
into the distribution $\dista_\concept$.
This guarantees that every
$\hyp:\iuni \rightarrow \oo$
has large loss on $\sigma_\concept$.

Altogether, we obtain the lower bound of
Theorem~\ref{thm:agnostic-main}.
}%

%%%%%%%%%%%%%%%%%%%%%%%%%%%%%%%%%%%%%%%%%%

While Section~\ref{sec:warmup}
demonstrated how our lower bound for realizable refutation
can be derived in the single-concept case,
there are two issues to resolve in the general case:
\begin{enumerate}
    \item
        Instead of equation \eqref{eq:sepcriteria}
        holding for each concept, it holds on average.
        In particular,
        \[
            \sum_{\concept \in \concepts, \ielem \in \iuni} u_{\concept,{(\ielem,-\concept(\ielem))}}
             > \sum_{\concept \in \concepts, \ielem \in \iuni} \acc | u_{\concept,(\ielem,\concept(\ielem))} |.
        \]
        Equivalently,
        \begin{equation}\label{eq:exlb}
            \E_{\concept \sim \distv} \left[
                \loss_\dista(\concept)
            \right]
            > \frac{2 \acc}{1 + \acc}.
        \end{equation}
    \item
        Even if we can guarantee for a concept
        $\concept \in \concepts$ that
        \[
            \quad
            \loss_{\dista_\concept}(\concept)
            > \frac{2 \acc}{1 + \acc},
        \]
        it may hold, for some other
        $\hyp:\iuni \to \{\pm 1\}$, that
        $
            \loss_{\dista_\concept}(\hyp)
        $
        is small. We need to rule out this possibility in order to give a lower bound against refutation. 
\end{enumerate}

\vspace{.5 em}
The first issue is resolved in Lemma~\ref{lm:bincor}
by applying the binning result of Lemma~\ref{lm:binning}.
% The proof of Lemma~\ref{lm:bincor} is given in
% Appendix~\ref{ap:realizable-lb}.
The second issue will be resolved in Lemma~\ref{lm:allcase}.

\begin{lemma}\label{lm:bincor}

    Suppose there exist families
    $\{\dista_\concept\}_{\concept \in \concepts}$
    and
    $\{\distb_{\concept}\}_{\concept \in \concepts}$
    of distributions over $\iuni$,
    together with a parameter distribution
    $\distv$ over $\concepts$,
    such that
    \[
         \Delta = \E_{\concept \sim \distv} \left[ \loss_{\dista_\concept}(\concept) \right]
        > \frac{2\acc}{1 + \acc} 
    \]
    while, for all $\concept \in \concepts$,
    $
    \loss_{\distb_\concept}(\concept)
    = 0.
    $
    Let, further, \(\dumat \in \R^{\concepts \times \iuni}\) be
    the matrix with entries
    \(
    {u}_{\concept,\ielem}
      = {\distv}(\concept) ( {\dista}_\concept(\ielem) - 
      {\distb}_\concept(\ielem) ).
    \)

    Then there exist families
    $\{\widetilde{\dista}_\concept\}_{\concept \in \concepts}$
    and
    $\{\widetilde{\distb}_{\concept}\}_{\concept \in \concepts}$
    of distributions over $\iuni \times \oo$,
    together with a parameter distribution
    $\widetilde{\distv}$ over $\concepts$,
    such that
    \begin{enumerate}
    \item for all $\concept$ in the support of $\widetilde{\distv}$,
    \[
      \loss_{\widetilde{\dista}_\concept}(\concept)
      \ge
      \Omega\left(
        {\frac{\acc}{1+\acc}}\middle /{\log \left(
            \frac{ 1 + \acc }{ \acc }
          \right)}
      \right);
    \]

  \item 
    $
        \loss_{\widetilde{\distb}_\concept}(\concept)
        = 0
    $
    for all $\concept \in \concepts$;

  \item the matrix
    $
        \widetilde{U} \in \R^{\concepts \times \iuni}
    $
    with entries 
    $
        \widetilde{u}_{\concept,\ielem}
        = \widetilde{\distv}(\concept) ( \widetilde{\dista}_\concept(\ielem) - 
            \widetilde{\distb}_\concept(\ielem) )
    $
    satisfies
    \(
        \fnorminf^\ast(\widetilde{U})
        \le \frac{2\acc \fnorminf^\ast(U)}{(1+\acc) \Delta}.
    \)
  \end{enumerate}
\end{lemma}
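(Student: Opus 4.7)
The plan is to apply the binning Lemma~\ref{lm:binning} to isolate a subset $S \subseteq \concepts$ on which the average separation $\Delta$ is uniformly realized, and then to damp $\dista_\concept$ toward $\distb_\concept$ in order to control $\fnorminf^\ast(\widetilde{U})$ while keeping $\loss_{\widetilde{\distb}_\concept}(\concept) = 0$ exactly.

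First, I invoke Lemma~\ref{lm:binning} on the values $a_\concept := \loss_{\dista_\concept}(\concept) \in [0,1]$ weighted by $\distv$, with parameter $\Delta/2$. Since $\E_{\concept \sim \distv}[a_\concept] = \Delta$, this produces $S \subseteq \concepts$ with $\distv(S) \cdot \min_{\concept \in S} \loss_{\dista_\concept}(\concept) \ge \Omega(\Delta / \log(1/\Delta))$. The hypothesis $\Delta > 2\acc/(1+\acc)$ forces $\log(1/\Delta) = O(\log((1+\acc)/\acc))$. I then set $\widetilde{\distv}(\concept) := \distv(\concept)/\distv(S)$ for $\concept \in S$ and $0$ otherwise.

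For the damped distributions, let $\tau := \frac{2\acc \, \distv(S)}{(1+\acc)\Delta}$, which lies strictly in $(0,1)$ because $\Delta > 2\acc/(1+\acc)$ and $\distv(S) \le 1$. Define $\widetilde{\distb}_\concept := \distb_\concept$ and $\widetilde{\dista}_\concept := \tau \, \dista_\concept + (1-\tau) \, \distb_\concept$. Property~(2) is immediate from $\loss_{\distb_\concept}(\concept) = 0$. For property~(1), since $\loss_{\widetilde{\dista}_\concept}(\concept) = \tau \, \loss_{\dista_\concept}(\concept)$, the binning bound gives, for every $\concept \in S$, $\loss_{\widetilde{\dista}_\concept}(\concept) \ge \frac{2\acc}{(1+\acc)\Delta} \cdot \distv(S) \min_{\concept \in S} \loss_{\dista_\concept}(\concept) \ge \Omega\!\left( \frac{\acc/(1+\acc)}{\log((1+\acc)/\acc)} \right)$. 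For property~(3), a direct substitution yields $\widetilde{u}_{\concept,\ielem} = (\tau/\distv(S)) \, u_{\concept,\ielem}$ for $\concept \in S$ and $0$ otherwise; since zeroing rows of $U$ cannot increase $\fnorminf^\ast$ (any factorization $U = RA$ restricts to one of the zeroed matrix with no larger $\|R\|_{2\to\infty} \|A\|_{1\to 2}$), this gives $\fnorminf^\ast(\widetilde{U}) \le (\tau/\distv(S)) \, \fnorminf^\ast(U) = \frac{2\acc \, \fnorminf^\ast(U)}{(1+\acc)\Delta}$.

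The main subtlety is choosing $\tau$ correctly: it must be small enough that the scaling $\tau/\distv(S)$ matches the target factor $\frac{2\acc}{(1+\acc)\Delta}$ in property~(3), yet large enough---once multiplied by the binning bound $\distv(S) \min \ge \Omega(\Delta/\log((1+\acc)/\acc))$---for $\tau \, \loss_{\dista_\concept}(\concept)$ to achieve the required $\Omega(\acc/((1+\acc)\log((1+\acc)/\acc)))$ in property~(1). The factor of $\distv(S)$ inside $\tau$ is exactly what makes the two $\Delta$ factors cancel, so that both bounds hold simultaneously.
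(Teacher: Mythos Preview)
Your proof is correct and follows essentially the same approach as the paper: apply the binning lemma to extract a set $S$ on which the loss gap is uniformly large, condition $\distv$ on $S$, and mix $\dista_\concept$ toward $\distb_\concept$ with the appropriate weight so that the $\fnorminf^\ast$ bound and the loss lower bound hold simultaneously. The only cosmetic differences are that the paper uses binning parameter $\beta = \acc/(1+\acc)$ rather than $\Delta/2$ and writes the mixing weight as $\tau\,\distv(S)$ with $\tau = \frac{2\acc}{(1+\acc)\Delta}$ rather than absorbing $\distv(S)$ into $\tau$ as you do; one small slip is that your parenthetical justification for ``zeroing rows cannot increase $\fnorminf^\ast$'' actually argues about $\fnorminf$, but the claim itself is correct (and the paper does not spell it out either).
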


\begin{proof}%[Proof of Lemma~\ref{lm:bincor}]

    Apply Lemma \ref{lm:binning}, with
    $
        a_\concept =
        \loss_{\dista_\concept}(\concept)
    $
    for all $\concept \in \concepts$, and \(\conf = \frac{\acc}{1+\acc} < \frac{\Delta}{2}\),
    to obtain $S \subseteq \concepts$ such that
    \[
        \distv(S) \cdot \min_{\concept \in S} a_\concept
        \ge
        \frac{\Delta - \conf}{O(\log(1 / \conf)}
        \ge
        \frac{\Delta}{O(\log((1+\acc)/\acc)}.
      \]   
    Let $\widetilde{\distv}$ be $\distv$ conditional
    on membership in $S$. Thus,
    \[
        \widetilde{\distv}(v)
        =
        \begin{cases}
            \distv(v)/\distv(S), &\text{if }v \in S \\
            0, &\text{otherwise.}
        \end{cases}
    \]
    Let
    \(
    \tau = \frac{2\acc}{(1+\acc) \Delta} \in (0,1).
    \)
    For all $\concept \in \concepts$,
    define $\widetilde{\distb}_{\concept} = \distb_{\concept}$
    and $\widetilde{\dista}_{\concept} =
    \tau \distv(S)\dista_{\concept} +
    (1-\tau\distv(S))\distb_{\concept}$. 
    Then, for all $\concept$ in the support of $\widetilde{\distv}$,
    \[
        \loss_{\widetilde{\distb}_{\concept}} ( \concept)
        =
        \loss_{\distb_{\concept}} ( \concept)
        = 0
    \]
    \[
        \loss_{\widetilde{\dista}_{\concept}}(\concept)
        =
        \tau \cdot \distv(S) \cdot
        \loss_{\dista_{\concept}} ( \concept)
        \ge
        \frac{\frac{\acc}{1+\acc}}{O(\left(\log \left(
                   \frac{ 1 + \acc }{ \acc }
                 \right)\right)}.
           \]

    Moreover, from the construction of the matrix 
    $
    \widetilde{U}
    $
    and the definition of the dual norm $\fnorminf^\ast$,
    it follows immediately that
    $
        \fnorminf^\ast(\widetilde{U})
        \le \fnorminf^\ast(\tau U) = \frac{2\acc\fnorminf^\ast(U)}{(1+\acc)\Delta}.
    $
\end{proof}

\begin{lemma}\label{lm:allcase}
    Suppose we have distributions
    $\dista_\concept$ and $\mu_\concept$
    on $\iuni \times \oo$
    for each $\concept \in \concepts$
    where:
    \begin{enumerate}%[(a)]
        \item 
            $ \loss_{\distb_\concept}(\concept) = 0 $;
            % $\distb_\concept$ always labels samples $\elem \in \iuni$
            % by $\concept(\elem)$;
        \item
            $ \loss_{\dista_\concept}(\concept) > \acc $.
        \setcounter{listcounter}{\value{enumi}}
    \end{enumerate}
    Then there exist distributions
    $\widetilde{\dista}_\concept$ and $\widetilde{\distb}_\concept$
    for each $\concept \in \concepts$
    such that:
    \begin{enumerate}%[(a)]
        \setcounter{enumi}{\value{listcounter}}
        \item\label{it:distpropa}
            $ 
                \loss_{\widetilde{\distb}_\concept}(\concept)
                = 0
            $;
        \item\label{it:distpropb}
            $\forall \hyp : \iuni \rightarrow \oo$, \
            $
                \loss_{\widetilde{\dista}_\concept}(\hyp) > \frac{\acc}{2};
            $
        \item\label{it:distpropc}
            $
                \widetilde{\dista}_\concept - \widetilde{\distb}_\concept
                = \frac{1}{2} ( \dista_\concept - \distb_\concept )
            $.
    \end{enumerate}
\end{lemma}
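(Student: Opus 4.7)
The plan is to define $\widetilde{\dista}_\concept$ and $\widetilde{\distb}_\concept$ as mixtures of $\dista_\concept$ and $\distb_\concept$, respectively, with a common auxiliary distribution $\sigma_\concept$ that is perfectly labeled by $\concept$. Adding the \emph{same} auxiliary distribution to both makes property~\ref{it:distpropc} an immediate cancellation, and choosing it to be $\concept$-labeled makes property~\ref{it:distpropa} trivial. The delicate point is to pick the $\iuni$-marginal of $\sigma_\concept$ so that no hypothesis can fit the mixture $\widetilde{\dista}_\concept$ substantially better than $\concept$ fits $\dista_\concept$.

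Concretely, let $\mu_\concept$ denote the marginal of $\dista_\concept$ on $\iuni$, and define $\sigma_\concept$ as the distribution on $\iuni \times \oo$ that draws $\ielem \sim \mu_\concept$ and outputs $(\ielem, \concept(\ielem))$. Then set
\[
\widetilde{\dista}_\concept = \tfrac{1}{2}\dista_\concept + \tfrac{1}{2}\sigma_\concept,
\qquad
\widetilde{\distb}_\concept = \tfrac{1}{2}\distb_\concept + \tfrac{1}{2}\sigma_\concept.
\]
Property~\ref{it:distpropc} follows since the $\sigma_\concept$ terms cancel in $\widetilde{\dista}_\concept - \widetilde{\distb}_\concept$, and property~\ref{it:distpropa} follows since both $\distb_\concept$ and $\sigma_\concept$ have zero loss with respect to $\concept$.

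The main step is property~\ref{it:distpropb}, which I would derive from the pointwise inequality
\[
\loss_{\dista_\concept}(\hyp) + \loss_{\sigma_\concept}(\hyp) \ \ge \ \loss_{\dista_\concept}(\concept)
\qquad \text{for every } \hyp : \iuni \to \oo.
\]
To prove this, write $q(\ielem) = \Pr_{(\ielem',\lab) \sim \dista_\concept}[\lab = \concept(\ielem) \mid \ielem' = \ielem]$, so that $\ielem$ contributes $\mu_\concept(\ielem)(1 - q(\ielem))$ to $\loss_{\dista_\concept}(\concept)$. If $\hyp(\ielem) = \concept(\ielem)$, the contributions of $\ielem$ to $\loss_{\dista_\concept}(\hyp)$ and $\loss_{\sigma_\concept}(\hyp)$ are $\mu_\concept(\ielem)(1 - q(\ielem))$ and $0$, matching exactly. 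If $\hyp(\ielem) \ne \concept(\ielem)$, these contributions become $\mu_\concept(\ielem) q(\ielem)$ and $\mu_\concept(\ielem)$, whose sum $\mu_\concept(\ielem)(1 + q(\ielem))$ dominates $\mu_\concept(\ielem)(1 - q(\ielem))$ because $q(\ielem) \ge 0$. Summing over $\ielem$ yields the claimed inequality, and halving it gives $\loss_{\widetilde{\dista}_\concept}(\hyp) \ge \tfrac{1}{2}\loss_{\dista_\concept}(\concept) > \acc/2$, establishing property~\ref{it:distpropb}.

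The main (and essentially only) subtlety is the insistence on matching the $\iuni$-marginal of $\sigma_\concept$ to $\mu_\concept$. A more naive choice, such as taking $\sigma_\concept$ to be $\distb_\concept$ itself or some fixed distribution independent of $\concept$, breaks the pointwise bookkeeping at exactly those $\ielem$ where $\dista_\concept$ puts mass but $\sigma_\concept$ does not, and a hypothesis $\hyp \ne \concept$ can then fit $\dista_\concept$ well without being penalized by $\sigma_\concept$. Tying the marginal of $\sigma_\concept$ to that of $\dista_\concept$ is what ensures that the $\sigma_\concept$-penalty kicks in precisely wherever a disagreeing hypothesis might benefit from the mislabeled part of $\dista_\concept$.
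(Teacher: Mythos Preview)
Your proposal is correct and follows essentially the same approach as the paper: the paper also defines $\sigma_\concept$ to have the same $\iuni$-marginal as $\dista_\concept$ and to be labeled by $\concept$, sets $\widetilde{\dista}_\concept = \tfrac{1}{2}\dista_\concept + \tfrac{1}{2}\sigma_\concept$ and $\widetilde{\distb}_\concept = \tfrac{1}{2}\distb_\concept + \tfrac{1}{2}\sigma_\concept$, and verifies property~\ref{it:distpropb} by a pointwise comparison of the conditional label probabilities under $\widetilde{\dista}_\concept$. Your pointwise inequality $\loss_{\dista_\concept}(\hyp) + \loss_{\sigma_\concept}(\hyp) \ge \loss_{\dista_\concept}(\concept)$ is a slightly cleaner packaging of the same calculation.
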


\begin{proof}

    For $\concept \in \concepts$, let $\sigma_c$ be
    the distribution on $\iuni \times \oo$
    which has the same marginal on $\iuni$ as does $\lambda_\concept$,
    and which satisfies $\concept(\ielem) = \lab$ for all
    $(\ielem,\lab)$ in the support of $\sigma_\concept$.
    Also, let
    $\widetilde{\dista}_\concept = \frac{1}{2} \dista_\concept + \frac{1}{2} \sigma_\concept$
    and
    $\widetilde{\distb}_\concept = \frac{1}{2} \distb_\concept + \frac{1}{2} \sigma_\concept$.
    Properties \ref{it:distpropa} and \ref{it:distpropc}
    follow immediately.

    To establish property  \ref{it:distpropb}, notice first that, for any \(\ielem \in \iuni\) in the support of \(\dista_\concept\), \(\pr{\widetilde{\dista}_\concept}{b=c(a) \ | \ \ielem} \ge \frac12\), and also \(\pr{\widetilde{\dista}_\concept}{b\neq c(a) \ | \ \ielem} = \frac12\cdot\pr{{\dista}_\concept}{b\neq c(a) \ | \ \ielem}\). Then, for any function
    $\hyp:\iuni \rightarrow \oo$.
    \begin{align*}
        \loss_{\widetilde{\dista}_\concept}(\hyp) &=
        \pr{\widetilde{\dista}_\concept}{\hyp(\ielem) \ne \lab} \\
        &= \sum_{(\ielem,\lab) \in \iuni \times \oo} \pr{\widetilde{\dista}_\concept}{\ielem}
            \cdot \pr{\widetilde{\dista}_\concept}{\hyp(\ielem) \ne \lab \ | \ \ielem} \\
        &\ge \sum_{(\ielem,\lab) \in \iuni \times \oo} \pr{\widetilde{\dista}_\concept}{\ielem}
            \cdot \min \left\{
                \pr{\widetilde{\dista}_\concept}{\lab = \concept(\ielem) \ | \ \ielem},
                \pr{\widetilde{\dista}_\concept}{\lab \ne \concept(\ielem) \ | \ \ielem}
            \right\} \\
        &\ge \sum_{(\ielem,\lab) \in \iuni \times \oo} \pr{{\dista}_\concept}{\ielem}
            \cdot \min \left\{
                \frac{1}{2},
                \frac{1}{2} \cdot \pr{\dista_\concept}{\lab \ne \concept(\ielem) \ | \ \ielem}
            \right\} \\
        \cut{&= \frac{1}{2} \cdot \sum_{(\ielem,\lab) \in \iuni \times \oo} \pr{\widetilde{\dista}_\concept}{\ielem}
            \cdot \min \left\{
                1,
                \pr{\dista_\concept}{\lab \ne \concept(\ielem) \ | \ \ielem}
            \right\} \\}
        &= \frac{1}{2} \cdot \sum_{(\ielem,\lab) \in \iuni \times \oo} \pr{\dista_\concept}{\ielem}
            \cdot \pr{\dista_\concept}{\lab \ne \concept(\ielem) \ | \ \ielem}
             \\
        &= \frac{1}{2} \cdot \loss_{\dista_\concept}(\concept) \\
        &> \frac{\acc}{2}.\qedhere
    \end{align*}
    % The first inequality follows from the fact that,
    % if both
    % $
    %     \pr{y \sim \widetilde{\dista}_c | x}{y = c(x)} \ge \conf
    % $
    % and
    % $
    %     \pr{y \sim \widetilde{\dista}_c | x}{y \ne c(x)} \ge \conf
    % $,
    % then any fixed concept $\conceptb$ disagrees with $\widetilde{\dista}_c | x$
    % with probability at least $\conf$.

\end{proof}

\cut{Finally, we will use the following lemma from 
\cite{factorization} %% lemma 15
to get our main result.

\begin{lemma}[Lemma 15 of \cite{factorization}]\label{lm:projection}

    Given distributions $\dista_\concept$ and $\distb_\concept$
    on $\iuni \times \oo$
    for each $\concept \in \concepts$,
    together with a parameter distribution $\distv$ on $\concepts$,
    let $U \in \R^{\concepts \times (\iuni \times \oo)}$
    be the matrix with entries
    $
        u_{\concept,(\ielem,\lab)}
        = \distv(\concept) (\dista_\concept(\ielem,\lab) - \distb_\concept(\ielem,\lab)).
    $
    Then there exists a distribution $\widetilde{\distv}$
    on $\concepts$
    with support contained in that of $\distv$
    such that the matrix
    $\widetilde{M} \in \R^{\concepts \times (\iuni \times \oo)}$
    with entries
    $
        \widetilde{m}_{\concept,(\ielem,\lab)}
        = \dista_\concept(\ielem,\lab) - \distb_\concept(\ielem,\lab)
    $
    satisfies
     \[
        \| \widetilde{M} \|_{\ell_\infty \to L_2(\hat{\distv})}
        \le 4\fnorminf^\ast(U).
     \]

\end{lemma}}

Equipped with Lemmas~\ref{lm:bincor}~and~\ref{lm:allcase}, we are
ready to prove our lower bound against realizable refutation.

\begin{proof}[Proof of Theorem~\ref{thm:realizable-main}]

    We define the parameter distribution \(\distv\) over \(\concepts\), and the distribution families
    $\{{\dista}_\concept\}_{\concept \in \concepts}$
    and
    $\{{\distb}_{\concept}\}_{\concept \in \concepts}$ over $\iuni \times \oo$,
    as in Section~\ref{sec:realizable-hard}. We denote
    \(
        \Delta = \E_{\concept \sim \distv} \left[
            \loss_{\dista_\concept}(\concept)
        \right].
    \)
    By equation \eqref{eq:exlb},
    together with Lemmas \ref{lm:bincor} and \ref{lm:allcase},
    we obtain modified families of distributions
    $\{\widetilde{\dista}_\concept\}_{\concept \in \concepts}$
    and
    $\{\widetilde{\distb}_{\concept}\}_{\concept \in \concepts}$,
    together with a parameter distribution
    $\widetilde{\distv}$ over $\concepts$,
    such that, for all $\concept$ in the support of
    $\widetilde{\distv}$, and
    for all functions $\hyp:\iuni \to \{\pm 1\}$,
    \[
        \loss_{\widetilde{\dista}_\concept}(\hyp)
        =
        \Omega\left(
               \frac{\acc}{1 + \acc} 
               \middle / \log \left(
                   \frac{ 1 + \acc }{ \acc }
               \right)
           \right)
    \]
    while
    $
        \loss_{\widetilde{\distb}_\concept}(\concept)
        = 0
    $
    for all $\concept \in \concepts$.
    By Lemmas \ref{lm:gammastar-inftyto2-nonuniform}, \ref{lm:bincor} and \ref{lm:allcase},
    we may assume further that the matrix
    $\widetilde{M} \in \R^{\concepts \times (\iuni \times \{\pm 1\})}$
    with entries
    $
        m_{\concept,(\ielem,\lab)}
        = \widetilde{\dista}_\concept(\ielem,\lab) - \widetilde{\distb}_\concept(\ielem,\lab)
    $
    satisfies
    \[
      \| \widetilde{M} \|_{\ell_\infty \to L_2(\hat{\distv})}
      \le \frac{4\acc\fnorminf^\ast(U)}{(1+\acc)\Delta}
    \]
    for some distribution $\hat{\distv}$ on $\concepts$ whose support
    is contained in the support of $\widetilde{\distv}$.
    
    Now let $\mech$ be an $\eps$-LDP protocol
    that solves the $(\acc',\beta)$-refutation problem for the concept
    class $\concepts$ in the realizable case, where we choose a small enough
    $
    \acc' = \Omega\left(
               \frac{\acc}{1 + \acc} 
               \middle / \log \left(
                   \frac{ 1 + \acc }{ \acc }
               \right)
           \right).
    $
    Then, for every \(\concept \in \concepts\) in the support of
    \(\widetilde{\distv}\), as long as $\beta = \frac12 - \Omega(1)$, by Pinsker's inequality we have
    \begin{equation}\label{eq:divbound}
      \div(\trans(\widetilde{\dista}_{\concept}^n)\|\trans(\widetilde{\distb}_{\concept}^n))
             = \Omega(1).
    \end{equation}
    Meanwhile, Lemma~\ref{lm:kl-div} guarantees
    \[
        \ex{\concept \sim\hat{\distv}}{\div(\trans(\widetilde{\dista}_\concept^\dsize)\|\trans(\widetilde{\distb}_\concept^\dsize))} \le O(n \priv^2)\cdot 
        \|\widetilde{M}\|_{\ell_\infty \to L_2(\hat{\distv})}^2,
    \]
    whereby we obtain
    \begin{equation}\label{eq:realizable-sc-lb}
        n
        = \Omega \left(
            \frac{1}{\eps^2 \cdot \|\widetilde{M}\|_{\ell_\infty \to L_2(\hat{\distv})}^2 }  
        \right)
        = \Omega \left(
            \frac{(1+\acc)^2\Delta^2}{\eps^2 \acc^2\fnorminf^\ast(U)^2 }  
        \right).
    \end{equation}
    Now we may use
    \begin{equation}\label{eq:realizable-gamma2star}
      \fnorminf^\ast(U)
      = \frac
      {
        \sum_{\concept \in \concepts,\ielem \in \iuni}
        ( u_{\concept,(\ielem,-\concept(\ielem))} - \acc | u_{\concept,(\ielem,\concept(\ielem))} | ) 
      }
      {\mcnew(\concepts, \acc)}.
    \end{equation}
    Note that, for any \(\concept \in \concepts\), since
    \(\loss_{\distb_\concept}(\concept) = 0\),
    \begin{align*}
      \frac{1}{\distv(\concept)}\sum_{\ielem \in \iuni}
         u_{\concept,(\ielem,-\concept(\ielem))}
         - \acc | u_{\concept,(\ielem,\concept(\ielem))} | 
      &=
        \pr{(\ielem,\lab) \sim \dista_\concept}{\concept(\ielem) \ne \lab}
    -
    \pr{(\ielem,\lab) \sim \distb_\concept}{\concept(\ielem) \ne \lab} \\
      &\qquad\qquad
        - \acc \cdot \left(
        \pr{(\ielem,\lab) \sim \dista_\concept}{\concept(\ielem) = \lab}
        +
        \pr{(\ielem,\lab) \sim \distb_\concept}{\concept(\ielem) = \lab}
        \right)\\
      &=
        (1+\acc) \cdot \pr{(\ielem,\lab) \sim \dista_\concept}{\concept(\ielem) \ne \lab}
        - 2\acc \\
      &=
        (1+\acc) \cdot \loss_{\dista_\concept}(\concept)
        - 2\acc.
    \end{align*}
    Taking expectations over \(\concept \sim \distv\), we have
    \begin{align}
      \sum_{\concept \in \concepts,\ielem \in \iuni}
       u_{\concept,(\ielem,-\concept(\ielem))} - \acc |
      u_{\concept,(\ielem,\concept(\ielem))} | 
      &=
      (1+\acc) \cdot \ex{\concept\sim\distv}{\loss_{\dista_\concept}(\concept)} - 2\acc \notag \\
      &=
      (1+\acc) \cdot \Delta - 2\acc. \label{eq:realizable-Delta}
    \end{align}
    Putting equations
    \eqref{eq:realizable-sc-lb}, \eqref{eq:realizable-gamma2star}, and
    \eqref{eq:realizable-Delta} together, we have
    \[
      \dsize
      =
      \Omega \left(
        \frac{(1+\acc)^2\Delta^2\mcnew(\concepts,\acc)^2}{\priv^2 \acc^2((1+\acc)\Delta-2\acc)^2}  
      \right)
      =
      \Omega \left(
        \frac{\mcnew(\concepts,\acc)^2}{\priv^2 \acc^2}  
      \right).\qedhere
    \]
\end{proof}

\cut{
As a corollary of Theorem~\ref{thm:realub}
and Theorem~\ref{thm:reallb},
it follows that realizable refutability
implies realizable learnability.
In particular, by Theorem~\ref{thm:reallb},
a sample complexity upper bound for realizable refutability
of concept class $\concepts$
gives an upper bound on $\mcnew(\concepts,\acc)$.
Then the sample complexity upper bound of
Theorem~\ref{thm:realub} in terms of
$\mcnew(\concepts,\acc)$
gives an upper bound on the number of samples required
for realizable learning of $\concepts$.

\begin{corollary}

    Let $\concepts \subseteq \oo^\iuni$ be a concept class.
    Let $\priv > 0$, $\acc \in (0,1]$.
    Then, for some
    \[
        \acc' = 
        \Omega\left(
            \frac{\acc}{1 + \acc} 
                \middle / \log \left(
                    \frac{ 1 + \acc }{ \acc }
                \right)
        \right),
    \]
    if there exists a mechanism
    $\mech':(\iuni \times \oo)^{\dsize'} \rightarrow \oo$
    which
    $(\acc',1-\Omega(1)$-refutes $\concepts$ realizably
    with $\dsize'$ samples,
    then there exists a mechanism
    $\mech:(\iuni \times \oo)^\dsize \rightarrow \oo^\iuni$
    which 
    $(\acc,\conf)$-learns $\concepts$ realizably
    with sample size $\dsize
        = O \left( \dsize' \cdot \log(|\concepts|/\conf) \right).$

\end{corollary}}

%%% Local Variables:
%%% mode: latex
%%% TeX-master: "agnostic-realizable-ldp-paper"
%%% End:

\section{Open problems}

%As with the lower bound for
%agnostic refutation in \cite{factorization},
%the lower bounds in this work,
%for both agnostic learning and realizable refutation,
%do not hold against interactive LDP.
%We leave it as an open problem
%whether any of these lower bounds
%can be generalized
%to allow for sequential interactivity.
%Our arguments fail to generalize
%to the interactive setting
%due to their reliance
%on the KL-divergence bound of Lemma~\ref{lm:kl-div},
%which is only known to hold for non-interactive LDP.
%This motivates the related open problem
%of whether Lemma~\ref{lm:kl-div} can be generalized
%to the interactive setting,
%or if there are counterexamples
%involving interactive LDP protocols
%which would deny such a possibility.\snote{I think one can use~\cite{feldman-margin} to show that Lemma~\ref{lm:kl-div} is not true for interactive protocols.}

This work, together with \cite{factorization},
largely completes the picture
of agnostic refutability and learnability 
under non-interactive LDP.
In the realizable setting, we have
shown that refutation implies learning for non-interactive LDP.
%for the non-interactive setting,
%we have characterized
%realizable refutability,
%and shown that realizable refutability
%implies realizable learnability.
It is an interesting open problem to determine the converse -- whether realizable learning implies refutation.
Secondly,
for an arbitrary concept class $\concepts$,
can we obtain a characterization of realizable learnability
in terms of a quantity which is efficiently computable
from the definition of $\concepts$?
Furthermore,
for both the realizable and agnostic versions,
the relationships obtained between the sample complexities
of refutability and learnability in this work are indirect, via characterizations
of these tasks by the approximate $\gamma_2$ norm.
For example,
although realizable refutability implies
realizable learnability under non-interactive LDP,
it remains open how one might obtain
a non-interactive LDP protocol for learnability
directly from one for refutability.
%Finally, our upper  bounds technically only apply to finite concept classes; %because the sample complexity depends logarithmically on the size of the concept class. 
%it is interesting to extends our results to infinite concept classes.
\tnote{I removed question about finite concept classes. Can add back if we have space.}

\bibliographystyle{alpha}
\bibliography{../agnostic-realizable-ldp-paper}

\appendix

\section{\texorpdfstring{Equivalence of approximate $\fnorminf$ norms of difference and concept matrices}{Equivalence of approximate gamma2 norms of difference and concept matrices}}
\label{ap:difmatqmat}

We prove Lemma \ref{lm:normrelation-main}, the equivalence of the approximate $\fnorminf$ norm for matrices $W$ and $D$, via the following three Lemmas. 

\begin{lemma}\label{lm:normrelation-ub}

    Let $\concepts$ be a concept class
    with concept matrix
    $\qmat \in \R^{\concepts \times \iuni}$
    %(Definition~\ref{def:cmatdef})
    and difference matrix
    $\difmat \in \R^{\concepts^2 \times \iuni}$.
    %(Definition~\ref{def:difmatdef}).
    Then
    $\fnorminf(\difmat,\acc)
    \leq \fnorminf(\qmat,\acc)$. 
\end{lemma}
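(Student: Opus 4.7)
The plan is to exploit the fact that every row of $\difmat$ is a scaled difference of two rows of $\qmat$: for $c, c' \in \concepts$ and $\ielem \in \iuni$, we have $\difent_{(c,c'), \ielem} = \tfrac{1}{2}(\qmat_{c, \ielem} - \qmat_{c', \ielem})$. Equivalently, $\difmat = L \qmat$, where $L \in \R^{\concepts^2 \times \concepts}$ has, in the row indexed by $(c, c')$, a $+\tfrac{1}{2}$ in column $c$, a $-\tfrac{1}{2}$ in column $c'$, and zeros elsewhere. This linear relationship is what drives both the entrywise approximation and the factorization bound.

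First I would let $\widetilde{\qmat}$ be any matrix achieving (or approaching) $\fnorminf(\qmat, \acc)$, i.e.\ with $\|\widetilde{\qmat} - \qmat\|_{1 \to \infty} \le \acc$ and $\fnorminf(\widetilde{\qmat}) = \fnorminf(\qmat, \acc)$. Define $\widetilde{\difmat} = L \widetilde{\qmat}$. The entrywise error is controlled by the triangle inequality:
\[
    |\widetilde{\difent}_{(c,c'), \ielem} - \difent_{(c,c'), \ielem}|
    \le \tfrac{1}{2}|\widetilde{\qmat}_{c, \ielem} - \qmat_{c, \ielem}| + \tfrac{1}{2}|\widetilde{\qmat}_{c', \ielem} - \qmat_{c', \ielem}|
    \le \acc,
\]
so $\widetilde{\difmat}$ is a valid $\acc$-approximant of $\difmat$.

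Next I would bound $\fnorminf(\widetilde{\difmat})$ using any factorization $\widetilde{\qmat} = RA$ with $\|R\|_{2 \to \infty} \|A\|_{1 \to 2} = \fnorminf(\widetilde{\qmat})$. Then $\widetilde{\difmat} = (LR) A$ is a factorization of $\widetilde{\difmat}$. The row of $LR$ indexed by $(c, c')$ is $\tfrac{1}{2}(R_c - R_{c'})$, so by the triangle inequality in $\ell_2$,
\[
    \|LR\|_{2 \to \infty} = \max_{c, c'} \tfrac{1}{2}\|R_c - R_{c'}\|_2 \le \max_{c} \|R_c\|_2 = \|R\|_{2 \to \infty}.
\]
Therefore $\fnorminf(\widetilde{\difmat}) \le \|LR\|_{2 \to \infty} \|A\|_{1 \to 2} \le \|R\|_{2 \to \infty} \|A\|_{1 \to 2} = \fnorminf(\widetilde{\qmat}) = \fnorminf(\qmat, \acc)$, and the definition of $\fnorminf(\difmat, \acc)$ as a minimum yields $\fnorminf(\difmat, \acc) \le \fnorminf(\qmat, \acc)$.

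There is no real obstacle here: the argument is essentially an observation that the map $\qmat \mapsto L\qmat$ preserves both entrywise approximation (up to the factor $\tfrac{1}{2}$ absorbed by the triangle inequality) and the $\fnorminf$ norm (because the operator $L$ has rows of $\ell_1$ norm exactly $1$, which makes $\|LR\|_{2 \to \infty} \le \|R\|_{2 \to \infty}$). The only care needed is to handle the infimum in the definition of $\fnorminf(\qmat, \acc)$, which is standard: if the minimum is not attained, replace $\widetilde{\qmat}$ by a sequence approaching the infimum and pass to the limit.
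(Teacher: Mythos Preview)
Your proof is correct and follows essentially the same route as the paper: take an optimal approximant $\widetilde{\qmat}$ of $\qmat$, form $\widetilde{\difmat}$ from it via the same row-difference map, check the entrywise error by the triangle inequality, and bound $\fnorminf(\widetilde{\difmat})$ by $\fnorminf(\widetilde{\qmat})$. The only cosmetic difference is that the paper bounds $\fnorminf(\widetilde{\difmat})$ using the norm axioms (row duplication leaves $\fnorminf$ unchanged, then subadditivity and scaling give $\fnorminf(\tfrac12(\qmat'-\qmat''))\le \fnorminf(\widetilde{\qmat})$), whereas you argue directly at the level of a factorization $\widetilde{\qmat}=RA$ and the inequality $\|LR\|_{2\to\infty}\le\|R\|_{2\to\infty}$; these are the same fact in different clothing.
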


\begin{lemma}\label{lm:normrelation-closed}
    Let $\concepts$ be a concept class
    closed under negation.
    Let
    $\qmat \in \concepts \times \iuni$
    be its concept matrix
   (Definition~\ref{def:cmatdef})
   and let
    $\difmat \in \R^{\concepts^2 \times \iuni}$
    be its difference matrix
    (Definition~\ref{def:difmatdef}).
    Then
    $\fnorminf(\qmat,\acc)
    \le \fnorminf(\difmat,\acc)$.
\end{lemma}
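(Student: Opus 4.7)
The plan is to observe that when $\concepts$ is closed under negation, the concept matrix $W$ arises as a row submatrix of the difference matrix $D$, after which the inequality $\fnorminf(\qmat,\acc) \le \fnorminf(\difmat,\acc)$ follows by the monotonicity of $\fnorminf$ under submatrix restriction.

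First, I would note that for every $c \in \concepts$, the negation $-c$ also lies in $\concepts$, so $(c,-c)$ is a valid row index of $\difmat$. Computing,
\[
    d_{(c,-c),x}
    = \tfrac{1}{2}\bigl(c(x) - (-c(x))\bigr)
    = c(x)
    = w_{c,x},
\]
so the row of $\difmat$ indexed by $(c,-c)$ exactly coincides with the row of $\qmat$ indexed by $c$. Identifying $c \in \concepts$ with $(c,-c) \in \concepts^2$ realizes $\qmat$ as the row submatrix of $\difmat$ whose row index set is $\{(c,-c) : c \in \concepts\}$.

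Next, let $\widetilde{\difmat}$ achieve $\fnorminf(\difmat,\acc)$, meaning $\|\widetilde{\difmat} - \difmat\|_{1\to\infty} \le \acc$ and $\fnorminf(\widetilde{\difmat}) = \fnorminf(\difmat,\acc)$. Define $\widetilde{\qmat}$ as the row submatrix of $\widetilde{\difmat}$ obtained by selecting the rows indexed by $(c,-c)$ for $c \in \concepts$. Since each entry of $\widetilde{\qmat} - \qmat$ coincides with an entry of $\widetilde{\difmat} - \difmat$, we have $\|\widetilde{\qmat} - \qmat\|_{1\to\infty} \le \acc$.

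Finally, I would invoke the standard fact that $\fnorminf$ is non-increasing under taking submatrices: if $\widetilde{\difmat} = RA$ is any factorization, restricting $R$ to the rows of interest yields a factorization $\widetilde{\qmat} = R'A$ with $\|A\|_{1\to 2}$ unchanged and $\|R'\|_{2\to\infty} \le \|R\|_{2\to\infty}$ (the latter being a maximum of row $\ell_2$-norms, now taken over a subset). Minimizing over factorizations of $\widetilde{\difmat}$ gives $\fnorminf(\widetilde{\qmat}) \le \fnorminf(\widetilde{\difmat})$. Combining with $\|\widetilde{\qmat} - \qmat\|_{1\to\infty} \le \acc$, the definition of the approximate norm yields
\[
    \fnorminf(\qmat,\acc)
    \le \fnorminf(\widetilde{\qmat})
    \le \fnorminf(\widetilde{\difmat})
    = \fnorminf(\difmat,\acc),
\]
as required. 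The argument is essentially a bookkeeping reduction, so there is no real obstacle; the only care needed is to verify the submatrix monotonicity of $\fnorminf$ in the form used in the paper, which is immediate from its definition as $\min\{\|R\|_{2\to\infty}\|A\|_{1\to 2} : RA = M\}$.
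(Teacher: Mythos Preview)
Your proof is correct and follows exactly the paper's approach: observe that row $c$ of $W$ equals row $(c,-c)$ of $D$, so $W$ is a row submatrix of $D$, and then invoke monotonicity of $\fnorminf$ (and hence of the approximate $\fnorminf$) under taking submatrices. You have simply spelled out in more detail what the paper states in two sentences.
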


\begin{lemma}\label{lm:normrelation-notclosed}
    Let $\concepts$ be a concept class.
    Let
    $\qmat \in \R^{\concepts \times \iuni}$
    be its concept matrix
    (Definition~\ref{def:cmatdef})
    and let
    $\difmat \in \R^{\concepts^2 \times \iuni}$
    be its difference matrix
   (Definition~\ref{def:difmatdef}).
    Then
    $\fnorminf(\qmat,\acc)
    \le 2\fnorminf(\difmat,\acc/2) + 1$.
\end{lemma}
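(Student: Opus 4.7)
The plan is to express the concept matrix $W$ in terms of a submatrix of the difference matrix $D$ plus a rank-one correction, then apply the triangle inequality for $\fnorminf$ together with the fact that $\fnorminf$ is monotone under taking submatrices.

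Fix any reference concept $c_0 \in \concepts$ (the case $\concepts = \emptyset$ is trivial). For every $c \in \concepts$ and $x \in \iuni$, the definition of the difference matrix gives
\[
    w_{c,x} = c(x) = 2\cdot\tfrac{1}{2}(c(x)-c_0(x)) + c_0(x) = 2\,d_{(c,c_0),x} + c_0(x).
\]
Let $D' \in \R^{\concepts \times \iuni}$ denote the submatrix of $D$ obtained by restricting to the rows indexed by pairs $(c,c_0)$ for $c\in\concepts$, and let $J \in \R^{\concepts \times \iuni}$ be the rank-one matrix with entries $J_{c,x}=c_0(x)$. Then the identity above reads $W = 2D' + J$.

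Now let $\widetilde{D}$ be the matrix achieving the approximate norm, so $\|\widetilde{D}-D\|_{1\to\infty}\le\acc/2$ and $\fnorminf(\widetilde D) = \fnorminf(D,\acc/2)$. Let $\widetilde{D'}$ be the submatrix of $\widetilde{D}$ on the rows indexed by $(c,c_0)$ for $c\in\concepts$; then $\|\widetilde{D'}-D'\|_{1\to\infty}\le\acc/2$, and restricting a factorization $\widetilde{D} = RA$ to the appropriate rows of $R$ yields a factorization of $\widetilde{D'}$ with the same or smaller factor norms, so $\fnorminf(\widetilde{D'})\le\fnorminf(\widetilde{D})$. Define $\widetilde W := 2\widetilde{D'} + J$. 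By construction,
\[
    |\widetilde w_{c,x} - w_{c,x}| = 2\,|\widetilde d_{(c,c_0),x} - d_{(c,c_0),x}| \le \acc,
\]
so $\widetilde W$ is a valid approximator of $W$ within $\acc$ entrywise.

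It remains to bound $\fnorminf(J)$. Writing $J = \mathbf{1}_\concepts\, c_0^\top$, where $\mathbf{1}_\concepts$ is the all-ones column vector and $c_0^\top$ is the row vector $(c_0(x))_{x\in\iuni}$, we have $\|\mathbf{1}_\concepts\|_{2\to\infty}=1$ (each row has a single entry $1$) and $\|c_0^\top\|_{1\to 2}=1$ (each column has a single entry $\pm 1$), so $\fnorminf(J)\le 1$. By the triangle inequality for $\fnorminf$,
\[
    \fnorminf(\widetilde W) \le 2\fnorminf(\widetilde{D'}) + \fnorminf(J) \le 2\fnorminf(\widetilde{D}) + 1 = 2\fnorminf(D,\acc/2) + 1,
\]
which, by the definition of the approximate norm, gives $\fnorminf(W,\acc) \le 2\fnorminf(D,\acc/2) + 1$ as required. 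There is no serious obstacle here; the only thing one must notice is the shift identity $w_{c,x}=2d_{(c,c_0),x}+c_0(x)$, and the additive $+1$ in the bound is precisely the $\fnorminf$ cost of the rank-one shift $J$, which cannot be absorbed into $D'$ in general because $\concepts$ need not be closed under negation.
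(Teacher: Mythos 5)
Your proof is correct and is essentially the same as the paper's: you fix a reference concept $c_0$, write $W = 2D' + \mathbf{1}\,c_0^\top$ where $D'$ is the submatrix of $D$ on rows $(c,c_0)$, and combine monotonicity under submatrices, scaling, subadditivity of $\fnorminf$, and $\fnorminf(\mathbf{1}\,c_0^\top)\le 1$. The only difference is presentational: you unpack the scaling/subadditivity step for the approximate norm by explicitly constructing $\widetilde W = 2\widetilde{D'} + J$ from an optimal approximator $\widetilde D$, whereas the paper invokes these properties in one line.
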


\begin{proof}[Proof of Lemma~\ref{lm:normrelation-ub}]

    Let $\widetilde{\qmat} \in \concepts \times \iuni$ 
    witness
    $\fnorminf(\qmat,\acc)$ so that
    $\|\qmat - \widetilde{\qmat}\|_{1\to\infty} \le \acc$
    and
    $\fnorminf(\qmat,\acc)
    = \fnorminf(\widetilde{\qmat})$.

    Let $\qmat' \in \R^{\concepts^2 \times \iuni}$
    be the matrix with entries
    $
        \qent_{(\concept,\conceptb),\ielem}'
        = \widetilde{\qent}_{\concept,\ielem}
    $.
    Similarly, let $\qmat'' \in \R^{\concepts^2 \times\iuni}$
    be the matrix with entries
    $
        \qent_{(\concept,\conceptb),\ielem}''
        = \widetilde{\qent}_{\conceptb,\ielem}
    $.
    Since $\qmat'$ and $\qmat''$
    are obtained from $\widetilde{\qmat}$ by duplicating rows,
    \[
        \fnorminf(\qmat,\acc)
        =\fnorminf(\widetilde{\qmat})
        = \fnorminf(\qmat')
        = \fnorminf(\qmat'').
    \]
    Now consider the matrix
    $
        \widetilde{\difmat}
        = \frac{1}{2} (\qmat' - \qmat'').
    $
    By subadditivity and scaling properties,
    \[
        \fnorminf(\widetilde{\difmat})
        \le \frac{1}{2} (\fnorminf(\qmat') + \fnorminf(\qmat''))
        = \fnorminf(\widetilde{\qmat}).
    \]
    Moreover, for all
    $\concept,\conceptb \in \concepts$,
    $\ielem \in \iuni$,
    the entry
    $\widetilde{d}_{(\concept,\conceptb),\ielem}$ of $\widetilde{\difmat}$
    approximates entry ${d}_{(\concept,\conceptb),\ielem}$ of $\difmat$.
    Specifically,
    \begin{align*}
        \left| \widetilde{d}_{(\concept,\conceptb),\ielem}
            -
        d_{(\concept,\conceptb),\ielem} \right|
        &=
        \left|
            \frac{\widetilde{\qent}_{\concept,\ielem} - \widetilde{\qent}_{\conceptb,\ielem}}{2} 
            -
            \frac{\concept(\ielem) - \conceptb(\ielem)}{2} 
        \right| \\
        &\le
        \left|
            \frac{\widetilde{\qent}_{\concept,\ielem} - \concept(\ielem)}{2} 
        \right|
        +
        \left|
            \frac{\widetilde{\qent}_{\conceptb,\ielem} - \conceptb(\ielem)}{2} 
        \right| \\
        &\le \acc.
    \end{align*}
    Hence $\|\difmat - \widetilde{\difmat}\|_{1\to\infty} \le \acc$.
    Together with
    $
        \fnorminf(\widetilde{\difmat})
        \le
        \fnorminf(\widetilde{\qmat})
    $,
    this implies
    \[
        \fnorminf(\difmat,\acc)
        \le
        \fnorminf(\widetilde{\difmat})
        \le
        \fnorminf(\widetilde{\qmat})
        = \fnorminf(\qmat,\acc).\qedhere
    \]
\end{proof}

\begin{proof}[Proof of Lemma~\ref{lm:normrelation-closed}]
    Row $\concept$ of $\qmat$
    is identical with row $(\concept,-\concept)$ of $\difmat$.
    Hence, $\qmat$ is obtained from $\difmat$
    by deleting some of its rows.
    Since the \(\fnorminf\) norm is non-increasing under taking submatrices, it follows that
    $\fnorminf(\qmat,\acc)
    \le \fnorminf(\difmat,\acc)$.
\end{proof}

\begin{proof}[Proof of Lemma~\ref{lm:normrelation-notclosed}]
    Fix an arbitrary concept $\conceptb \in \concepts$.
    Let $\difmat'$ be the submatrix of $\difmat$
    which includes row $(\concept,\conceptb)$ of $\difmat$
    for each $\concept\in \concepts$.
    Then
    $
        \difmat'
        = \frac{1}{2} \left(
            \qmat - \mathbf{1} (\conceptb)^T
        \right)
    $
    where $\mathbf{1}$ is the all-ones vector
    of dimension $|\concepts|$, and we identify \(\conceptb\) with a vector in \(\R^\iuni\).
    Expressing our concept matrix as
    $\qmat = 2 \difmat' + \mathbf{1} (\conceptb)^T$,
    we may apply
    the scaling and subadditivity properties
    \anote{Should scaling properties, etc., be included in appendix?}\snote{I think they should.}
    of the approximate $\fnorminf$ norm to obtain
    \begin{align*}
        \fnorminf(\qmat,\acc)
        &= \fnorminf(2 \difmat' + \mathbf{1} (\conceptb)^T,\acc) \\
        &\le \fnorminf(2 \difmat',\acc)
            + \fnorminf(\mathbf{1} (\conceptb)^T,0) \\
        &\le 2 \fnorminf(\difmat',\acc/2) + 1.\qedhere
    \end{align*}
\end{proof}

% \section{Omitted Proofs for the Lower Bound on Agnostic Learning}\label{ap:agnostic-lb}

% \section{Omitted Proofs for the Realizable Refutation Lower Bound}\label{ap:realizable-lb}

\section{\texorpdfstring{Derivation of dual formulation of $\eta$ norm}{Derivation of dual formulation of eta norm}}
\label{ap:realizable-lb}

\begin{proof}[Proof of Lemma~\ref{lm:realizable-dual}]

    Let
    $
        L_\concepts
        = \{ G \in \R^{\concepts \times (\iuni \times \oo)} : \fnorminf(G) \le t \}
    $.
    Let $K_\concepts$ and $K_\concepts'$ be as defined
    by equations \eqref{eq:qmatset} and \eqref{eq:qmatsetgen}.
    By definition, $\mcnew(\concepts,\acc) > t$
    if and only if $L_\concepts$ and $K_\concepts'$
    are disjoint.

    Given some $U \in \R^{\concepts \times (\iuni \times \oo)}$,
    we are interested in the quantities
    $
        \max \{ U \cdot G : G \in L_\concepts \}
    $
    and
    $
        \min \{ U \cdot G : G \in K_{\concepts}' \}
    $.
    In particular, by the hyperplane separation theorem,
    since $L_\concepts$ and $K_\concepts'$ are convex and $L_\concepts$ is also compact,
    they are disjoint exactly when there exists
    some $U \in \R^{\concepts \times (\iuni \times \oo)}$ such that
    \[
        \max \{ U \cdot G : G \in L_\concepts \}
        <
        \min \{ U \cdot G : G \in K_{\concepts}' \}.
    \]

    By definition,
    \[
        \max \{ U \cdot G : G \in L_\concepts \}
        = t \fnorminf^*(U).
    \]
    Also,
    \begin{align*}
        &\phantom{{}={}}\min \{ U \cdot G : G \in K_{\concepts}' \} \\
        &= \min_{G \in K_{\concepts}'} \sum_{\concept \in \concepts,\ielem \in \iuni}
            ( u_{\concept,(\ielem,c(\ielem))} g_{\concept,(\ielem,\concept(\ielem))} + u_{\concept,(\ielem,-\concept(\ielem))} g_{\concept,(\ielem,-\concept(\ielem))} ) \\
        &= \min_{\substack{G \in K_{\concepts} \\ \theta \in \R^\concepts} } \sum_{\concept \in \concepts,\ielem \in \iuni}
        ( u_{\concept,(\ielem,\concept(\ielem))} \cdot (g_{\concept,(\ielem,\concept(\ielem))} + \theta_\concept) + u_{\concept,(\ielem,-\concept(\ielem))} \cdot ( g_{\concept,(\ielem,-\concept(\ielem))} + \theta_\concept) ) \\
        &= \min_{G \in K_{\concepts}} \sum_{\concept \in \concepts,\ielem \in \iuni}
        ( u_{\concept,(\ielem,\concept(\ielem))} \cdot g_{\concept,(\ielem,\concept(\ielem))} + u_{\concept,(\ielem,-\concept(\ielem))} \cdot  g_{\concept,(\ielem,-\concept(\ielem))} ) \\
        &\phantom{{}={}}+ \min_{\theta \in \R^\concepts} \sum_{\concept \in \concepts}
        \theta_\concept \cdot \sum_{\ielem \in \iuni} ( u_{\concept,(\ielem,\concept(\ielem))} + u_{\concept,(\ielem,-\concept(\ielem))} )
    \end{align*}
    If, for some $\concept \in \concepts$, it holds that
    $\sum_{\ielem \in \iuni} ( u_{\concept,(\ielem,\concept(\ielem))} + u_{\concept,(\ielem,-\concept(\ielem))} ) \ne 0$,
    then 
    \[
        \min_{\theta_\concept \in \R}
        \theta_\concept \cdot \sum_{\ielem \in \iuni} ( u_{c,(\ielem,\concept(\ielem))} + u_{\concept,(\ielem,-\concept(\ielem))} )
        = - \infty.
    \]
    Also, if there exist $\concept \in \concepts$
    and $\elem \in \iuni$
    such that $u_{\concept,(\ielem,-\concept(\ielem))} < 0$,
    then
    \[
        \min_{G \in K_{\concepts}}
        u_{\concept,(\ielem,-\concept(\ielem))} g_{\concept,(\ielem,-\concept(\ielem))}
        = - \infty.
    \]
    However, in the remaining case where $U$ is in the set $S_\concepts$,
    then
    \begin{align*}
        \min \{ U \cdot G : G \in K_{\concepts}' \}
        &= \min_{G \in K_{\concepts}} \sum_{\concept \in \concepts,\ielem \in \iuni}
        ( u_{\concept,(\ielem,\concept(\ielem))} \cdot g_{\concept,(\ielem,\concept(\ielem))} + u_{\concept,(\ielem,-\concept(\ielem))} \cdot  g_{\concept,(\ielem,-\concept(\ielem))} ) \\
        &= \sum_{\concept \in \concepts,\ielem \in \iuni}
        ( - \acc | u_{\concept,(\ielem,c(\ielem))} | + u_{\concept,(\ielem,-\concept(\ielem))} ).
    \end{align*}
    With these facts at our disposal, we obtain
    \begin{align*}
        \mcnew(\concepts,\acc) > t 
        & \Leftrightarrow K_\concepts' \cap L_\concepts = \emptyset \\
        & \Leftrightarrow \exists U \in \R^{\concepts \times (\iuni \times \{\pm 1\})}, \
            \max \{ U \cdot G : G \in L_\concepts \}
            <
            \min \{ U \cdot G : G \in K_\concepts \} \\
        & \Leftrightarrow \exists U \in S_\concepts, \
            t \fnorminf^*(U)
            < \sum_{\concept \in \concepts,\ielem \in \iuni}
                ( - \acc | u_{\concept,(\ielem,c(\ielem))} | + u_{\concept,(\ielem,-\concept(\ielem))} ) \\
        & \Leftrightarrow
            \max_{U \in S_\concepts} \frac{ \sum_{\concept \in \concepts,\ielem \in \iuni}
                ( u_{\concept,(\ielem,-\concept(\ielem))} - \acc | u_{\concept,(\ielem,\concept(\ielem))} | ) }{\fnorminf^*(U)}
            > t
    \end{align*}
    Since the equivalence holds for all $t \in \R$,
    it follows that
    \begin{equation*}
        \mcnew(\concepts, \acc)
        = \max_{U \in S_\concepts} \frac
            {
                \sum_{\concept \in \concepts,\ielem \in \iuni}
                ( u_{\concept,(\ielem,-\concept(\ielem))} - \acc | u_{\concept,(\ielem,\concept(\ielem))} | ) 
            }
            {\fnorminf^*(U)}.
    \end{equation*}

\end{proof}

%%%%%%%%%%%%%%%%%%%%%%%%%%%%%%%%%%%%%%%%%%%%%%%%

\cut{\section{Upper bound on agnostic learning in terms of approximate $\fnorminf$ norm of difference matrix}
\label{ap:difub}

\anote{The upper bound is not strictly necessary
since the result follows from \cite{factorization}
but it is nice for intuition.}
\snote{Definitely a good candidate for the appendix. Technically the upper bound is stronger than the one in \cite{factorization} because it gives the correct sample complexity 0 for a singleton concept class.}
\anote{It seems to give $O(1)$ sample complexity
now that we have accounted for uniform convergence.}

In this section, we present an upper bound
for agnostic learning under non-interactive LDP
in terms of the approximate $\fnorminf$ norm
of the difference matrix.
As a consequence of Lemma~\ref{lm:normrelation-ub}
this result implies the upper bound for agnostic learning
originally shown in \cite{factorization}.

\begin{theorem}

    Let $\concepts \subseteq \oo^\iuni$
    be a concept class with difference matrix $\difmat$
    as given by \eqref{eq:difmatdef}.
    Let $\priv > 0$,
    $\acc,\conf \in (0,1]$.
    Then there exists a non-interactive $\priv$-LDP protocol
    $
        \mech:(\iuni \times \oo)^\dsize \rightarrow \oo^\iuni
    $
    which $(\acc,\conf)$-learns $\concepts$ agnostically
    with $\dsize$ samples, where
    \[
        \dsize = O \left(
          \frac{(1+\fnorminf(\difmat,\acc/2))^2 \cdot \log(|\concepts|/\conf)}{\priv^2 \acc^2}
        \right).
    \]

\end{theorem}

\begin{proof}

    Let $\symdifmat$
    be the symmetrized difference matrix
    corresponding to $\concepts$
    and let
    $
        \{\query_{\concept,\conceptb}\}_{\concept,\conceptb \in \concepts}
    $
    be the corresponding workload of queries.
    Let $\dista$ be underlying distribution
    on $\iuni \times \oo$
    from which samples are drawn.

    Applying the approximate factorization mechanism
    of \cite{factorization},
    \tnote{This isn't the same matrix as in your earlier paper. Maybe Section 3.3 should be moved up. so that the upper bound proof is more clear.}
    the number of samples $\dsize$ required
    to estimate the value of
    $\query_{\concept,\conceptb}(\dista)$
    within $\acc/2$
    for all $\concept,\conceptb \in \concepts$,
    with probability of failure at most $\conf$
    is bounded by\snote{We also need uniform convergence to make sure that empirical losses and population losses are close enough. Does this bound account for that?}
    \[
        \dsize
        = O \left(
            \frac
            {(1+\fnorminf(\symdifmat,\acc/2))^2 \cdot \log(\qsize / \conf)}
                {\priv^2 \acc^2} 
        \right)
        = O \left(
            \frac
            {(1+\fnorminf(\difmat,\acc/2))^2 \cdot \log(\qsize / \conf)}
                {\priv^2 \acc^2} 
        \right).
    \]
    The bound in terms of $\fnorminf(\difmat,\acc/2)$
    is obtained as a consequence of Lemma~\ref{}\anote{
        Need to include this lemma somewhere.
    }
    which says that symmetrizing a matrix
    has no effect on its approximate $\fnorminf$ norm
    and hence
    $
        \fnorminf(\difmat,\acc/2)
        =
        \fnorminf(\symdifmat,\acc/2)
    $.

    For $\concept,\conceptb \in \concepts$,
    let $\qans_{\concept,\conceptb}$
    denote the estimated value of
    query $\query_{\concept,\conceptb}$
    so that
    $
        | \qans_{\concept,\conceptb}
        -
        \query_{\concept,\conceptb}(\dista) |
        \le \acc/2.
    $
    Let the output $\hyp \in \concepts$
    of our algorithm be
    the concept which minimizes
    $
        \max_{\conceptb \in \concepts} \qans_{\hyp,\conceptb}
    $.
    By \eqref{eq:queryvalues},
    \[
        \forall \concept \in \concepts, \ 
        \query_{\concept,\conceptb}(\dista)
        = \loss_{\dista}(\concept) - \loss_{\dista}(\conceptb).
    \]
    The concept $\concept \in \concepts$
    which minimizes $\loss_\dista(\concept)$
    satisfies
    \[
        \max_{\conceptb \in \concepts} 
        \query_{\concept,\conceptb}(\dista)
        =
        \max_{\conceptb \in \concepts} 
        \left(
            \loss_{\dista}(\concept) - \loss_{\dista}(\conceptb)
        \right)
        = 0.
    \]
    Hence, there must exist some
    $\concept \in \concepts$
    such that $\qans_{\concept,\conceptb} \le \acc/2$.
    In particular,
    $\max_{\conceptb \in \concepts} \qans_{\hyp,\conceptb} \le \acc/2$
    holds.
    It follows that
    $\max_{\conceptb \in \concepts} \query_{\hyp,\conceptb}(\dista) \le \acc$,
    which implies
    $
        \loss_{\dista}(\hyp)
        \le
        \min_{\conceptb \in \concepts}
            \loss_{\dista}(\conceptb) + \acc.
    $
\end{proof}

}

%%% Local Variables:
%%% mode: latex
%%% TeX-master: "agnostic-realizable-ldp-paper"
%%% End:

\end{document}